\documentclass{article}
\usepackage[final]{corl_2026}
\usepackage{amsmath,amssymb,amsfonts,amsthm}
\allowdisplaybreaks
\usepackage{graphicx}
\usepackage{booktabs}
\usepackage{subcaption}
\usepackage{url}
\usepackage{enumitem}
\usepackage{titletoc}
\usepackage{algorithm}
\usepackage{algpseudocode}
\usepackage{etoc}
\usepackage{wrapfig}
\usepackage{caption}

\newif\ifshowcomments
\showcommentstrue

\newif\ifshowedits
\showeditsfalse

\newcommand{\myparagraph}[1]{%
  \vspace{1mm}\noindent\textbf{#1.}%
}

\newtheorem{definition}{Definition}
\newtheorem{lemma}{Lemma}
\newtheorem{proposition}{Proposition}
\newtheorem{theorem}{Theorem}
\newtheorem{corollary}{Corollary}
\newtheorem{remark}{Remark}

\newcommand{\X}{\mathcal X}

\newcommand{\mR}{\mathbb R}
\newcommand{\cB}{\mathcal B}
\newcommand{\cR}{\mathcal R}
\newcommand{\iid}{\mathrm{i.i.d.}}

\newcommand{\unpt}{\underline p_T}

\newcommand{\So}{\overline{S_0}}
\newcommand{\St}{\overline{S_T}}

\DeclareMathOperator{\dist}{dist}
\DeclareMathOperator{\reach}{reach}

\DeclareMathOperator{\supp}{supp}

\title{On the Limits of Sampling-Based Reachability: Geometry, Dynamics, and Sample Complexity}

\author{
  Jixian Liu$^{1}$ \qquad
  Ihab Tabbara$^{2}$ \qquad
  Hussein Sibai$^{2}$ \qquad
  Enrique Mallada$^{1}$ \\
  $^{1}$Department of Electrical and Computer Engineering,
  Johns Hopkins University \\
  $^{2}$Department of Computer Science,
  Washington University in St. Louis \\
  \texttt{\{jliu376,mallada\}@jhu.edu} \\
  \texttt{\{i.k.tabbara,sibai\}@wustl.edu}
}

\begin{document}
\maketitle

\begingroup
\renewcommand{\thefootnote}{}
\footnotetext{All codes and relevant materials are available at: \url{https://github.com/JeanLew01/reachapprox}}
\endgroup

\begin{abstract}
Reachability analysis is central to safety-critical control, robotics, and neural network verification, but classical computational methods, such as Hamilton--Jacobi reachability and set propagation, scale poorly with state dimension. Sampling-based methods have emerged as a promising alternative, often providing finite-sample guarantees that bound the probability-mass left uncovered. However, an explicit account of how the geometry of the initial set, the dynamics, and the sampling law affect the accuracy of the estimator is not fully available in the literature. We study this by casting sampling-based reachable-set recovery as geometric support estimation over a family of problems specified by an initial set, its dynamics, and a sampling law. First, we identify two regularity properties, positive reach of the initial set's complement and Lipschitz continuity of the dynamics, that together make recovery well-posed: a probability-mass coverage guarantee can be upgraded to accuracy $r$ in Hausdorff distance. Second, we bound the resulting sample complexity: recovery is achievable with $\tilde{\mathcal{O}}\big((e^{3LT}/r)^n\big)$ samples, exponential in both the state dimension and the time horizon. Third, we show that neither can be removed: an minimax lower bound of $\Omega\big((e^{LT}/r)^n\big)$ holds for every estimator, so the exponential dependence on dimension and the degradation over the horizon are both intrinsic, not artifacts of a particular method. Experiments on nonlinear systems confirm that adversarial sampling improves constants but not the scaling.
\end{abstract}

\keywords{Reachability analysis, sample complexity, positive reach}
\etocdepthtag.toc{main}

\section{Introduction}
\label{sec:intro}

Reachability analysis is a fundamental tool for certifying the safety of dynamical and learning-enabled systems~\citep{holmes2020reachable, meng2022learning, ivanov2021verisig, wu2024verified}. Classical computational methods, based on Hamilton--Jacobi reachability~\citep{bansal2017hamilton, chen2018hamilton} or set propagation~\citep{girard2005reachability, kurzhanski2000ellipsoidal}, give strong deterministic guarantees but scale poorly with state dimension. This barrier is especially pronounced in modern learning-enabled systems, where neural-network controllers and learned dynamics put classical methods out of reach. To improve scalability, sampling-based methods approximate reachable sets by propagating finitely many initial states through the dynamics~\citep{selim2022safe, ganai2023iterative, liu2025recurrent, ouyang2026symplectic}. They are model-agnostic, parallelizable, and easy to implement, applicable to general nonlinear and learning-based systems. These methods are part of a broader rise of sampling-based control and optimization for high-dimensional systems~\citep{williams2018information, pan2025sampling, castellano2025data}, made practical by modern parallel hardware.

The accompanying finite-sample guarantees, however, are limited. They either bound the probability-mass the estimate fails to cover~\citep{liebenwein2018sampling,devonport2021data,devonport2023data}, permitting an estimate to miss a thin, spatially separated region without violating the bound, or they measure error in Hausdorff distance against the convex hull of the reachable set~\citep{lew2021sampling,lew2022simple}, which over-approximates nonconvex sets by filling holes and merging disconnected components. What remains missing is a characterization of Hausdorff recovery of the actual set, and an account of how the sample cost depends on the geometry of the initial set, the dynamics, and the sampling law. Posed this way, the question becomes one of geometric support estimation under flow distortion, a perspective with a rich classical theory~\citep{devroye1980detection, cuevas2009set, korostelev1993minimax} that has not yet been brought to bear on the reachability setting.

\myparagraph{Contributions}
To fill this gap, we study autonomous systems $\dot x=F(x)$ and recover the reachable set $S_T=\cR_T(S_0)$ from endpoint samples of trajectories initialized in $S_0$. Viewing this as geometric support estimation under flow distortion, we study a problem family specified by the initial-set scale and geometry, the dynamics, and the sampling law.~\vspace{-1ex}
\begin{enumerate}[leftmargin=*, label=\arabic*., itemindent=0em, itemsep=0.1em]
    \item \textbf{Well-posedness.} We identify conditions under which probability-mass coverage can be upgraded to Hausdorff-accurate recovery: positive reach ($r_0$) of the initial set's complement, Lipschitz continuity ($L$) of dynamics, and a lower sampling density bound ($\rho$). And Let $R$ denotes the volume-equivalent radius of the initial set, i.e., the radius of an $n$-dimensional Euclidean ball with the same Lebesgue volume as $S_0$.

    \item \textbf{Sample complexity.} We prove a random-covering upper bound: $\tilde{\mathcal O}(
    (e^{3LT}R/r)^n)$
    endpoint samples suffice for $r$-accurate inner coverage. With an
    estimator-dependent outer-deviation condition, this yields a Hausdorff
    guarantee.

    \item \textbf{Unavoidability.} We prove an minimax lower bound: any estimator requires $\tilde \Omega((e^{LT}R/r)^n)$ samples on some instance. Thus, the dependence on dimension, horizon, accuracy, and initial-set scale is intrinsic.

    \item \textbf{Numerical experiments.} On a 2D non-Lipschitz example and a
    multi-link MuJoCo robot arm, the empirical sample complexity tracks the
    predicted dimension-dependent scaling, while adversarial sampling improves
    constants but not scaling.
\end{enumerate}

\myparagraph{Closely related work}
Our analysis is closest to Lew et al.~\citep{lew2022simple}, who derive a finite-sample Hausdorff guarantee for sampling-based approximation of the convex hull of the reachable set under related assumptions: $r$-convex initial geometry, Lipschitz dynamics, and a sampling density lower bound. Our
positive-reach assumption $\reach(S_0^c)\ge r_0$ (Definition~\ref{def:reach}) is stronger and implies their $r$-convexity. We go further in three directions: (i) Hausdorff recovery of the actual endpoint support beyond convex-hull approximations; (ii) explicit horizon dependence, rather than a single flow
Lipschitz constant; and (iii) an minimax lower bound that can not be avoid. The classical $r^{-n}$ support-estimation rate~\citep{devroye1980detection,cuevas2009set,korostelev1993minimax,
ariascastro2019minimax} appears here through the scale-sensitive factor $(R/r)^n$ and is amplified by the flow factor $e^{nLT}$. Further related work, including scenario optimization, conformal prediction, and Lipschitz-based bloating, is discussed in Appendix~\ref{app:further_related_work}.

\myparagraph{Notation}
For a Lebesgue measurable set $A\subseteq\mR^n$, we write $A^c:=\mR^n\setminus A$ for its complement, $\overline A$ for its closure, and $\partial A$ for its boundary. Let $\lvert A\rvert:=\lambda_n(A)$, where $\lambda_n$ is the $n$-dimensional Lebesgue measure. For $x\in\mR^n$, $r>0$, and nonempty sets $A,B\subseteq\mR^n$, define $\cB_r(x):=\{y\in\mR^n:\|y-x\|\le r\}$, $d(x,A):=\inf_{a\in A}\|x-a\|$, and $\cB_r(A):=\{x\in\mR^n:d(x,A)\le r\}$. The Hausdorff distance between $A$ and $B$ is $d_H(A,B):=\max\{\sup_{a\in A}d(a,B),\ \sup_{b\in B}d(b,A)\}$. Let $\omega_n:=\lvert \cB_1(0)\rvert$ denote the volume of the unit ball in $\mR^n$. We use $A\oplus B:=\{a+b:a\in A,\ b\in B\}$, $A\ominus B:=\{x\in\mR^n:x+B\subseteq A\}$, and $A\circ B:=(A\ominus B)\oplus B$ for Minkowski sum, erosion, and opening, respectively. For a probability measure $P$, $\supp(P)$ denotes its support, and for probability measures $P,Q$, $D_{\rm KL}(P\|Q)$ denotes the Kullback--Leibler divergence.

\section{Problem Formulation}
\label{sec:problem_formulation}

We consider an autonomous continuous-time dynamical system $\dot x = F(x)$, where $x\in\X\subseteq\mR^n$ and $F:\X\to\mR^n$. Given an initial condition $x_0\in\X$, let
$\varphi(t,x_0)$ denote the solution at time $t \ge 0$. For an initial set $S_0\subseteq\X$ and a time horizon $T>0$, the forward reachable set is
\begin{align}
    \cR_T(S_0)
    :=
    \{\varphi(T,x_0)\in\X:\ x_0\in S_0\}.
    \label{eq:frs_autonomous}
\end{align}
When $S_0$ and $T$ are known, we write $S_T:=\cR_T(S_0)$. This paper studies the problem of approximating $S_T$ from finitely many endpoint samples. Specifically, suppose that initial states $X_1,\ldots,X_N$ are sampled independently from a probability distribution $P_0$ supported on $\So$, and the corresponding endpoints are $Y_i := \varphi(T,X_i), i=1,\ldots,N$. These endpoints are $\iid$ samples from the pushforward distribution $P_T := (\varphi(T,\cdot))_{\#}P_0$, whose support is $\St$. A sampling-based reachable-set estimator is a measurable set-valued map $\widehat S_N = \widehat S_N(Y_1,\ldots,Y_N)\subseteq\mR^n$ that approximates the reachable set $S_T$. 

Many existing sampling-based and learning-based approaches provide probabilistic coverage guarantees. Informally, such results show that, with probability at least $1-\delta$, the estimated set misses at most $\varepsilon$ probability-mass under the endpoint distribution $P_T$. A typical guarantee takes the form $P_T(\St\setminus \widehat S_N)\le \varepsilon$ with probability larger than $1-\delta$ over the sampled endpoints and the joint distribution $P_T^N$ of $(Y_1,\ldots,Y_N)$~\citep{liebenwein2018sampling,devonport2023data}. However, probability-mass accuracy alone does not imply geometric accuracy. An estimator may miss only a small amount of probability-mass while still incurring a large Hausdorff error. This can happen when the missed region has small probability but is geometrically far from the estimated set, or when the reachable set contains thin cusps, spikes, or low-density regions. In safety and verification problems, such geometric errors are important: missing a small-probability but spatially significant region may still lead to an incorrect reachable-set certificate.

Motivated by this gap, our goal is to understand how many endpoint samples are necessary and sufficient for $\widehat S_N$ to approximate $S_T$ in Hausdorff distance. To be more specific, for a target accuracy $r>0$ and confidence level $1-\delta$, we seek conditions and sample size $N$ under which:

\vspace{-1.0em}

\begin{align}
    P_T^N
    \left(
        d_H(S_T,\widehat S_N)\le r
    \right)
    \ge 1-\delta .
    \label{eq:hausdorff_goal}
\end{align}

\vspace{-0.5em}

This objective is stronger than controlling the probability-mass of the missed region. It requires every point of the true reachable set to lie within distance $r$ of the estimator, and conversely requires the estimator not to extend too far away from the true reachable set. This property is arguably necessary for safety-critical circumstances.

The central question is therefore: when can a probability-mass guarantee be
converted into a Hausdorff-distance guarantee? Such a conversion is impossible without additional structure. Indeed, if there exists a point $x\in S_T$ such that $\mathcal B_r(x) \cap \St$ has arbitrarily small $P_T$-mass, then an estimator may miss this entire $r$-scale neighborhood while still satisfying a small probability error bound. In that case, the probability error is small, but the Hausdorff distance from $S_T$ to $\widehat S_N$ is at least of order $r$. Thus, to make $(\varepsilon,\delta)$-type probability guarantees geometrically meaningful, one needs a lower bound on the probability-mass of local geometric neighborhoods.

\begin{minipage}[t]{0.29\textwidth}
\vspace{-0.5em}

Figure~\ref{fig:toy_example} illustrates this phenomenon on the autonomous system $\dot y=0,\dot x=x^2$. Starting from two initial sets with comparable geometric scale, a disk and a nonconvex star, we uniformly sample initial conditions, propagate the samples, and construct support estimators from the endpoints. The flow map expands regions with large $x$, causing geometric error to increase with time, i.e. the Hausdorff distance grows super-exponentially \,\,\,as\,\,\, the 
\end{minipage}
\hfill
\begin{minipage}[t]{0.69\textwidth}
\vspace{-1.5em}
\begin{figure}[H]
    \centering
    \begin{subfigure}[t]{0.502\textwidth}
        \centering
        \includegraphics[width=\textwidth]{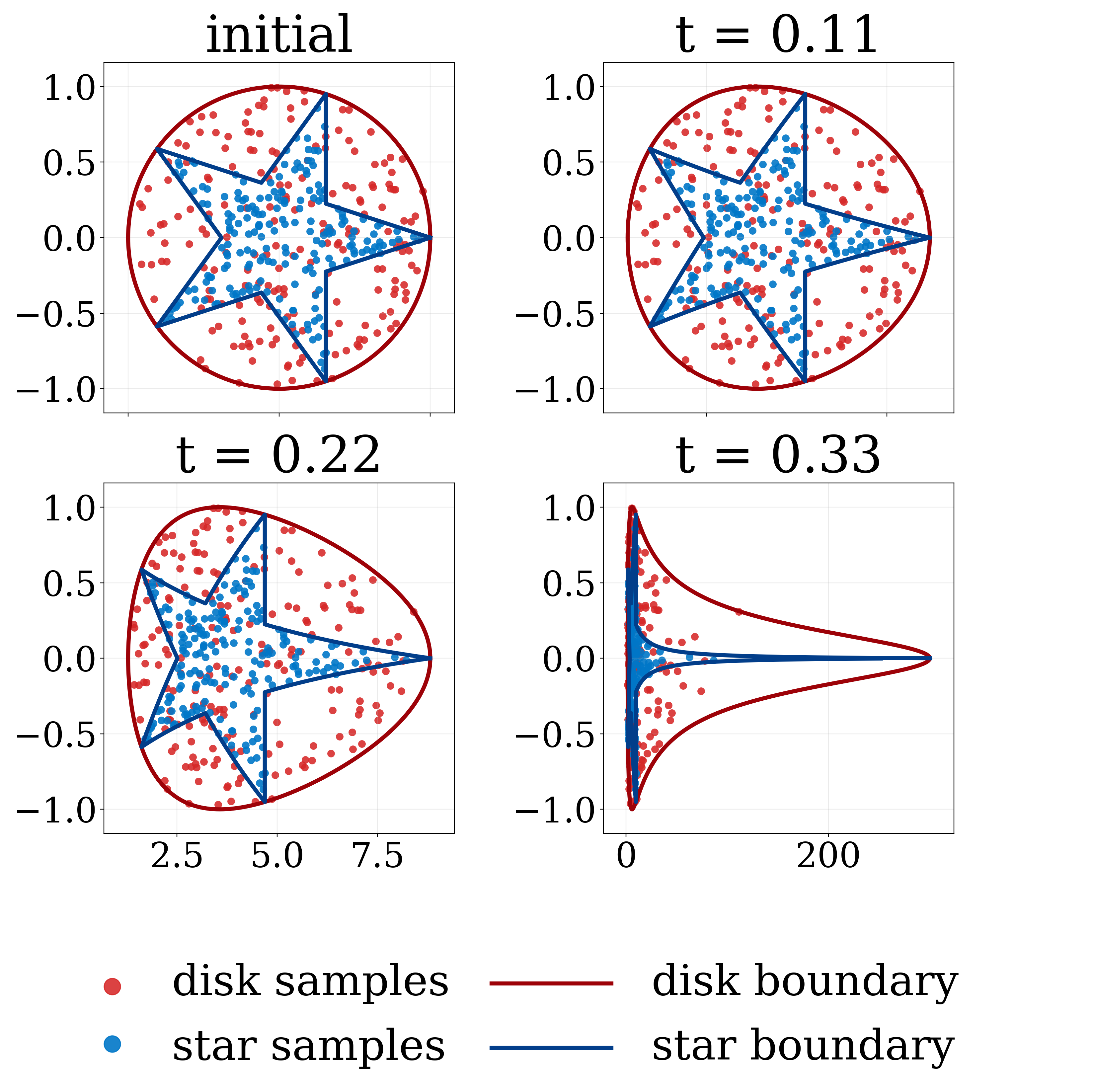}
        \label{fig:sample_flow_schematic}
    \end{subfigure}
    \hspace{-2em}
    \begin{subfigure}[t]{0.538\textwidth}
        \centering
        \includegraphics[width=\textwidth]{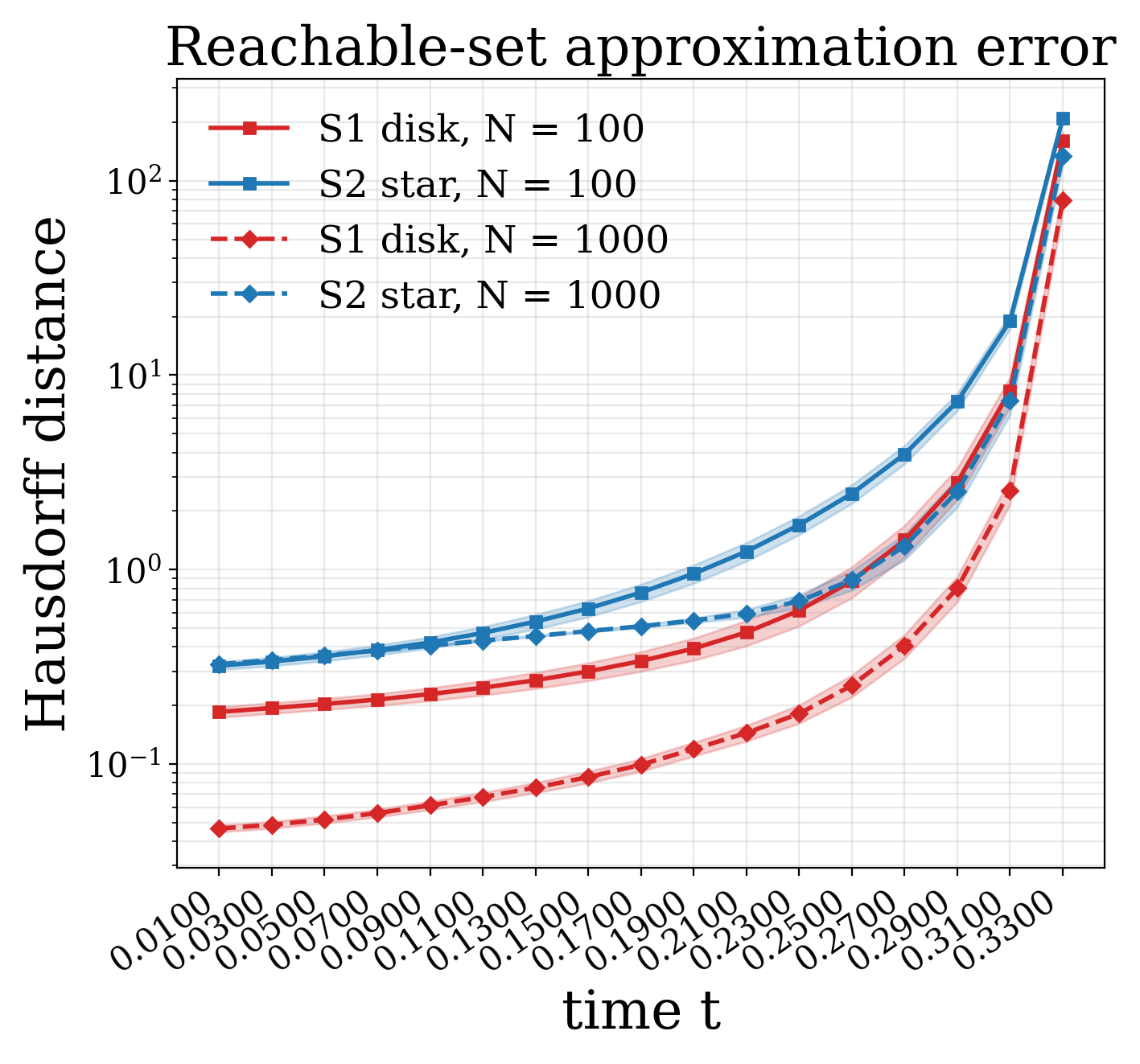}
        \label{fig:hausdorff_vs_samples}
    \end{subfigure}
    \hspace{-1.0em}
    \vspace{-0.85em}
    \caption{Reachable-set approximation under the autonomous dynamics
    $\dot y=0,\dot x=x^2$.}
    \label{fig:toy_example}
\end{figure}
\end{minipage}
 time horizon increases, reflecting the strong expansion of the flow map. Moreover, for the same sampling budget and comparable geometric scale, the star-shaped set produces larger Hausdorff error than the disk. This behavior is not explained by probability-mass alone. Rather, it reflects two separate effects. First, the dynamics can distort local neighborhoods and probability-mass over time. Second, the geometry of the initial set determines whether every small spatial neighborhood contains enough volume, and hence enough probability-mass, to be detected from samples. This discussion motivates the following problem class. We require the flow to be well posed over the finite horizon of interest and Lipschitz with respect to initial conditions, so that local neighborhoods can be related through the pushforward dynamics. We also impose a geometric regularity condition on the initial set to exclude thin degeneracies. Throughout, for each finite horizon
$T$ considered below, we assume that $\varphi(t,\cdot)$ is well defined and unique on the relevant initial states for all $t\in[0,T]$; this is a standing well-posedness convention, separate from the statistical problem class. To state the geometric condition, we first recall the notion of reach.

\begin{definition}[Reach~\citep{federer1959curvature}]
\label{def:reach}
Let $S\subset\mR^n$ be nonempty and closed. For $x\in\mR^n$, define
$d(x,S):=\inf_{y\in S}\|x-y\|$ and $\Pi_S(x):=\{y\in S:\|x-y\|=d(x,S)\}$. The reach of $S$ is defined as follows:
\begin{align*}
\reach(S) := \sup\left\{ r\ge 0: \Pi_S(x)\ \text{is a singleton whenever } d(x,S)<r \right\}.
\end{align*}
\end{definition}

We now collect the dynamical, geometric, and sampling requirements into a single problem family.

\begin{definition}[Problem family] \label{def:problem_family} Fix constants $R>0, L>0, r_0>0, \rho\in(0,1]$. Let $\mathcal F_{R,L,r_0,\rho}$ denote the class of triples $(S_0,F,P_0)$ such that:
\begin{enumerate} 
\item $S_0\subset\X$ is a bounded open set with $\lvert S_0 \lvert=\omega_n R^n$; 
\item $\reach(S_0^c)\ge r_0$, where $S_0^c:=\mathbb R^n\setminus S_0$; 
\item $F:\X\to\mathbb R^n$ is $L$-Lipschitz on $\X$, i.e., $\|F(x)-F(y)\|\le L\|x-y\|, \forall x,y\in\X$; 
\item $P_0$ is absolutely continuous with respect to the Lebesgue measure on $\mathbb R^n$, with density $p_0$, and satisfies $\supp(P_0)=\overline{S_0}, p_0(x) \ge \frac{\rho}{\lvert S_0 \rvert} \text{ for a.e. } x\in S_0$. \end{enumerate} 
\end{definition}

The parameter $R$ fixes the global scale of the initial set through the volume constraint $|S_0|=\omega_n R^n$, where $R$ is the radius of an $n$-dimensional ball with the same volume as $S_0$. Thus, $R$ encodes the overall size of the support, while the reach parameter $r_0$ controls its local regularity. This family gives a local thickness property: by~\citep[Lemma~4.8]{rataj2019curvature}, $\overline{S}_0=\overline{S}_0\circ\mathcal B_{r_0}(0)$, and positive reach of the complement rules out inward cusps and thin degeneracies, yielding lower bounds of the form $|\mathcal B_s(x)\cap\So|\gtrsim s^n$ for $x\in\So$ and $0<s\le r_0$.\footnote{$A\gtrsim B$ means $A\ge cB$ for a constant $c>0$ independent of the scale of $B$.} This local volume bound is the bridge from probability to geometry. After accounting for Lipschitz flow distortion and endpoint density lower bounds, it ensures that missing an $r$-scale region of the reachable set incurs non-negligible probability mass, which enables Hausdorff accuracy guarantees.

\section{Reachable Set Approximation under Regular Geometry}
\label{sec:regular_geometry}

Under the problem family introduced above, we study Hausdorff approximation of the reachable set from finite endpoint samples. The key ingredient is a local endpoint probability lower bound, obtained by combining positive reach, the
initial density lower bound, and Lipschitz flow distortion. This bound yields an inner-coverage sample-complexity result for any estimator containing all endpoint samples; a full Hausdorff guarantee follows after adding an estimator-dependent outer-deviation condition. We then prove that the $r^{-n}$ dependence is information-theoretically unavoidable.

\subsection{Local Mass Bounds under Flow Distortion}
\label{subsec:hausdorff_approximation}

We first establish local mass lower bounds for the reachable set. Positive reach gives a static volume lower bound near the initial support, and the density lower bound in Definition~\ref{def:problem_family} converts it into sampling probability.

\begin{proposition}[Lower Lebesgue mass bound for the initial set]
\label{prop:lower-lebesgue-initial-set}
Let $(S_0,F,P_0)\in\mathcal F_{R,L,r_0,\rho}$. Then, for every
$x\in\So$ and every $r\in(0,r_0]$, we have $\lvert\cB_r(x)\cap \So\rvert \ge \omega_n\,2^{-n}r^n$.
\end{proposition}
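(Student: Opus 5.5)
The plan is to use the opening identity quoted after Definition~\ref{def:problem_family}: by \citep[Lemma~4.8]{rataj2019curvature}, $\reach(S_0^c)\ge r_0$ gives $\So=\So\circ\cB_{r_0}(0)$. In words, $\So$ is a union of closed balls of radius $r_0$. Concretely, for every $x\in\So$ there is a center $c$ with $\cB_{r_0}(c)\subseteq\So$ and $x\in\cB_{r_0}(c)$, i.e.\ $\|x-c\|\le r_0$. I take this rolling-ball property as the starting point. If one prefers to avoid the citation, it can be derived directly. For $x$ in the open set $S_0$ near $\partial S_0$, take the unique nearest point $y\in\partial S_0=\partial(S_0^c)$. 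Then move from $y$ along the inward direction $(x-y)/\|x-y\|$ out to distance $r_0$. Federer's normal-ball characterization of positive reach shows that the open ball of radius $r_0$ with center $c=y+r_0(x-y)/\|x-y\|$ misses $S_0^c$. The closed ball therefore lies in $\So$ and contains $x$. Boundary points are handled by a limiting argument using closedness.

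Given the rolling-ball property, the claim reduces to an elementary estimate. Fix $x\in\So$ and $r\in(0,r_0]$, and pick such a ball $\cB_{r_0}(c)\subseteq\So$ with $\|x-c\|\le r_0$. It suffices to find a ball of radius $r/2$ inside $\cB_r(x)\cap\cB_{r_0}(c)$.

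\textbf{Choice of center.} Let $u=(c-x)/\|c-x\|$, taking any unit vector if $c=x$, and set $z=x+\tfrac r2 u$.

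\textbf{Containment in $\cB_r(x)$.} Since $\|z-x\|=r/2$, we get $\cB_{r/2}(z)\subseteq\cB_r(x)$.

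\textbf{Containment in $\cB_{r_0}(c)$.} Consider first the case $\|c-x\|\ge r/2$. Then $z$ lies on the segment from $x$ to $c$, so $\|z-c\|=\|x-c\|-r/2\le r_0-r/2$. In the other case, $\|c-x\|<r/2$, and then $\|z-c\|\le r/2\le r_0-r/2$, using $r\le r_0$. In both cases $\cB_{r/2}(z)\subseteq\cB_{r_0}(c)\subseteq\So$.

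Hence $\lvert\cB_r(x)\cap\So\rvert\ge\lvert\cB_{r/2}(z)\rvert=\omega_n 2^{-n}r^n$.

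The only substantive step is the rolling-ball property, namely that positive reach of the complement yields an interior ball of radius $r_0$ through every point. I will invoke the cited lemma for this rather than reprove it. The remaining care is at boundary points $x\in\partial S_0$, which are handled by the opening identity stated for closures. This is also why the statement ranges over $\So$ rather than $S_0$.
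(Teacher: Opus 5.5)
Your proposal is correct and matches the paper's proof step for step. Both use the rolling-ball property obtained from positive reach of $S_0^c$, the center $z=x+\tfrac r2 u$ pointing toward $c$, and the same two-case estimate showing $\cB_{r/2}(z)\subseteq\cB_r(x)\cap\cB_{r_0}(c)$. The only differences are cosmetic: you absorb $c=x$ by picking an arbitrary unit vector where the paper treats it separately, and you sketch a direct derivation of the rolling-ball property that the paper simply cites.
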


\begin{proof}
    See Appendix~\ref{app:lower-lebesgue-initial-set}.
\end{proof}

Combining Proposition~\ref{prop:lower-lebesgue-initial-set} with the density lower bound in Definition~\ref{def:problem_family} gives an initial local probability lower bound, making explicit the roles of $r_0$ and $\rho$.

\begin{corollary}[Initial local probability lower bound]
\label{cor:initial-local-probability-lower-bound}
Let $(S_0,F,P_0)\in\mathcal F_{R,L,r_0,\rho}$. Then, for every
$x\in\So$ and every $r\in(0,r_0]$, $P_0(\cB_r(x)\cap \So) \ge \rho 2^{-n}(\frac{r}{R})^n$.
\end{corollary}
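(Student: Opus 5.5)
The plan is to combine Proposition~\ref{prop:lower-lebesgue-initial-set} with the density lower bound in item 4 of Definition~\ref{def:problem_family}, and then simplify using the volume normalization $\lvert S_0\rvert=\omega_n R^n$. Fix $x\in\So$ and $r\in(0,r_0]$.

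First, pass from the closed set $\So$ to the open set $S_0$. The density bound $p_0\ge\rho/\lvert S_0\rvert$ holds only a.e.\ on $S_0$, and $P_0$ is absolutely continuous. So we need $\lvert \partial S_0\cap\cB_r(x)\rvert$ to be negligible, or more simply $\lvert\cB_r(x)\cap\So\rvert=\lvert\cB_r(x)\cap S_0\rvert$.

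I would get this from the reach assumption. Since $\reach(S_0^c)\ge r_0>0$, the boundary $\partial S_0=\partial(S_0^c)$ is the boundary of a set of positive reach. Such boundaries are locally $(n-1)$-rectifiable and have Lebesgue measure zero; a Steiner-type formula gives $\lvert\cB_\epsilon(\partial S_0)\cap K\rvert=O(\epsilon)$ on compacts. Alternatively, one can note that the lower bound in Proposition~\ref{prop:lower-lebesgue-initial-set} is presumably proved via an inner ball of radius $r/2$ contained in $\overline{S_0}$, from the opening property $\So=\So\circ\cB_{r_0}(0)$. Its interior then lies in $S_0$ up to a null set, which avoids any boundary issue.

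With this in hand, the estimate is
\begin{align*}
P_0(\cB_r(x)\cap\So)\;\ge\;\int_{\cB_r(x)\cap S_0}p_0\,d\lambda_n\;\ge\;\frac{\rho}{\lvert S_0\rvert}\,\lvert\cB_r(x)\cap S_0\rvert\;\ge\;\frac{\rho}{\omega_nR^n}\,\omega_n2^{-n}r^n=\rho\,2^{-n}\Big(\frac rR\Big)^n .
\end{align*}

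The only nonroutine step is justifying that $\partial S_0$ is Lebesgue-null, so that the a.e.\ density bound on the open set transfers to the closure. I expect this to be mild. I would cite the standard fact that boundaries of positive-reach sets have finite $(n-1)$-dimensional Hausdorff measure locally (Federer), and therefore zero $n$-dimensional measure.
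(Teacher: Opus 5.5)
Your proposal is correct and follows the paper's route: the paper obtains this corollary by combining the volume bound of Proposition~\ref{prop:lower-lebesgue-initial-set} with the density floor $\rho/\lvert S_0\rvert=\rho/(\omega_nR^n)$. Your explicit check that $\partial S_0$ is Lebesgue-null is the right way to pass from $\So$ to $S_0$, and the paper only asserts this fact later, in the covering-number lemma. Note, however, that your ``alternative'' via the inner ball does not bypass this step: an open ball contained in $\So$ lies in $S_0$ only up to a subset of $\partial S_0$.
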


To apply this local-mass argument to the reachable set at time $T$, we do not track the reach parameter itself. Instead, we propagate the local thickness implied by $\reach(S_0^c)\ge r_0$ through the Lipschitz flow and combine it with the propagated density lower bound.

\begin{proposition}[Propagation of local Lebesgue mass]
\label{prop:local-lebesgue-mass-propagation}
Let $(S_0,F,P_0)\in\mathcal F_{R,L,r_0,\rho}$. Then, for every
$x\in\St$ and every $r\in(0,e^{LT}r_0]$, we have $\lvert \cB_r(x)\cap\St\rvert \ge \omega_n\,2^{-n}e^{-2nLT}r^n$.
\end{proposition}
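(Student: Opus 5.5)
Let $\Phi:=\varphi(T,\cdot)$. The plan is to pull the ball $\cB_r(x)$ back to time $0$, apply Proposition~\ref{prop:lower-lebesgue-initial-set} there, and push the resulting volume forward again.

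\textbf{Step 1: two-sided Lipschitz control of the flow.} By Grönwall and the $L$-Lipschitz property of $F$, $\|\Phi(a)-\Phi(b)\|\le e^{LT}\|a-b\|$ for all admissible $a,b$. Running time backward gives $\|a-b\|\le e^{LT}\|\Phi(a)-\Phi(b)\|$, so $\Phi$ is a bi-Lipschitz homeomorphism onto its image. Liouville's formula gives $\det D\Phi(y)=\exp\bigl(\int_0^T\operatorname{tr}DF(\varphi(t,y))\,dt\bigr)$. Since $|\operatorname{tr}DF|\le nL$ a.e. (Rademacher), this yields $|\det D\Phi|\ge e^{-nLT}$. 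Alternatively, the cruder bi-Lipschitz bound $|\Phi(A)|\ge e^{-nLT}|A|$ follows from area-formula or covering arguments. Either route gives
\[
|\Phi(A)|\ge e^{-nLT}|A| \quad \text{for measurable } A\subseteq\So .
\]

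\textbf{Step 2: localization.} Fix $x\in\St$ and $r\in(0,e^{LT}r_0]$. By continuity of $\Phi$ on $\So$ and compactness of $\So$, we have $\St=\Phi(\So)$, so $x=\Phi(x_0)$ for some $x_0\in\So$. Set $s:=e^{-LT}r\le r_0$. If $y\in\cB_s(x_0)\cap\So$, then $\|\Phi(y)-x\|\le e^{LT}s=r$, hence
\[
\Phi\bigl(\cB_s(x_0)\cap\So\bigr)\subseteq\cB_r(x)\cap\St .
\]
Proposition~\ref{prop:lower-lebesgue-initial-set} applies because $s\le r_0$, giving $|\cB_s(x_0)\cap\So|\ge\omega_n2^{-n}s^n=\omega_n2^{-n}e^{-nLT}r^n$.

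\textbf{Step 3: combine.} Applying Step 1 to $A=\cB_s(x_0)\cap\So$ gives
\[
|\cB_r(x)\cap\St|\ge e^{-nLT}\,\omega_n2^{-n}e^{-nLT}r^n=\omega_n2^{-n}e^{-2nLT}r^n,
\]
which is exactly the claimed bound. The exponent $2nLT$ arises naturally: one factor $e^{-nLT}$ from shrinking the radius when pulling back, and one from the volume contraction when pushing forward.

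\textbf{Main obstacle.} The delicate step is the volume-distortion bound in Step 1 when $F$ is merely Lipschitz, so that $\Phi$ need not be $C^1$. I would handle this in one of two ways. The first option is to mollify $F$ to smooth $F_\epsilon$ with the same Lipschitz constant, apply Liouville to the flows $\Phi_\epsilon$, and pass to the limit using uniform convergence $\Phi_\epsilon\to\Phi$ on compacts together with the bi-Lipschitz structure. The second, cleaner option is to use that $\Phi^{-1}$ is $e^{LT}$-Lipschitz. A Lipschitz map with constant $K$ satisfies $|\Phi^{-1}(B)|\le K^n|B|$ (via the Hausdorff-measure scaling $\mathcal H^n(f(B))\le K^n\mathcal H^n(B)$ and $\mathcal H^n=\lambda_n$). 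Taking $B=\Phi(A)$ gives $|A|\le e^{nLT}|\Phi(A)|$ directly. I will use the second route, which avoids Jacobians entirely.

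A minor point to check is that $\Phi$ is defined and invertible on a neighborhood of $\So$ within $\X$, which the standing well-posedness convention provides. This ensures the backward flow restricted to $\St$ is well defined.
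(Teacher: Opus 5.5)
Your proposal is correct and follows the paper's proof essentially step for step. You pull back to $x_0$ with radius $s=e^{-LT}r\le r_0$, apply Proposition~\ref{prop:lower-lebesgue-initial-set}, and push forward using the $e^{LT}$ forward Lipschitz bound. You then get $|\Phi(A)|\ge e^{-nLT}|A|$ from the $e^{LT}$-Lipschitz inverse flow together with the $K^n$ measure-scaling of Lipschitz maps, which is exactly the paper's argument (it cites Evans--Gariepy for the scaling).
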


\begin{proof}
    See Appendix~\ref{app:volume-distortion-density-propagation}.
\end{proof}

\begin{proposition}[Propagation of density lower bounds]
\label{prop:density-lower-bound-propagation}
Let $(S_0,F,P_0)\in\mathcal F_{R,L,r_0,\rho}$. Then $P_T$ is absolutely continuous with respect to Lebesgue measure on $\St$. Moreover, if $p_T$ denotes its density, then $p_T(y) \ge \rho/(\omega_n R^n) e^{-nLT} \text{for a.e. } y\in \St$. In particular, at the terminal time $T, \unpt := \rho/(\omega_n R^n) e^{-nLT}$ is the endpoint density lower bound for the reachable set.
\end{proposition}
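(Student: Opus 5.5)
We plan to prove this with the change-of-variables formula for the flow map $\Phi:=\varphi(T,\cdot)$, together with Liouville's formula to bound its Jacobian determinant from above. Since $F$ is $L$-Lipschitz, standard ODE theory shows that $\Phi$ is injective on $\X$, because trajectories are unique forward and backward. Grönwall's inequality applied to $\|\varphi(t,x)-\varphi(t,y)\|$ shows that $\Phi$ is $e^{LT}$-Lipschitz. The same argument run backward in time shows that $\Phi^{-1}$ is $e^{LT}$-Lipschitz on $\St$. So $\Phi:\So\to\St$ is a bi-Lipschitz homeomorphism; it maps $S_0$ onto $S_T$, and by continuity and compactness it maps $\So$ onto $\St$.

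\textbf{Absolute continuity.} Let $A\subseteq\St$ with $|A|=0$. Then $P_T(A)=P_0(\Phi^{-1}(A))$. Because $\Phi^{-1}$ is Lipschitz, it maps Lebesgue-null sets to null sets, so $|\Phi^{-1}(A)|=0$. Since $P_0\ll\lambda_n$, we get $P_T(A)=0$, and hence $P_T\ll\lambda_n$.

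\textbf{Density formula.} By Rademacher's theorem, $\Phi$ is differentiable almost everywhere. The area formula for injective Lipschitz maps then gives, for a.e.\ $y\in S_T$,
\[
p_T(y)=\frac{p_0(\Phi^{-1}(y))}{\lvert\det D\Phi(\Phi^{-1}(y))\rvert}.
\]
The boundary $\partial S_T$ is the bi-Lipschitz image of $\partial S_0$. It is Lebesgue-null because positive reach of $S_0^c$ makes $\partial S_0$ a Lipschitz-type hypersurface of measure zero. Alternatively, one can avoid this fact entirely: the bound only needs to hold a.e.\ on $\St$, and the null set $\St\setminus S_T$ does not matter provided $|\partial S_T|=0$. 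I would cite the reach property for this.

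Since $p_0\ge\rho/(\omega_nR^n)$ a.e.\ on $S_0$, and $\Phi^{-1}$ preserves null sets, this bound transfers to a.e.\ $y$. It therefore remains to show $|\det D\Phi(x)|\le e^{nLT}$.

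\textbf{Jacobian bound (main obstacle).} This is the step I expect to be hardest, because $F$ is only Lipschitz, not $C^1$, so Liouville's formula $\det D\Phi=\exp\int_0^T \operatorname{div}F$ is not directly available. I would avoid differentiating $F$ altogether. Wherever $D\Phi(x)$ exists, the $e^{LT}$-Lipschitz property gives $\|D\Phi(x)\|_{\mathrm{op}}\le e^{LT}$. Then Hadamard's inequality, or simply the product of singular values, yields
\[
|\det D\Phi(x)|\le\|D\Phi(x)\|_{\mathrm{op}}^n\le e^{nLT}.
\]
This gives exactly the claimed exponent. If one preferred the Liouville route instead, one could mollify $F$ to $F_\epsilon$, which is still $L$-Lipschitz and satisfies $|\operatorname{div}F_\epsilon|\le nL$, and then pass to the limit. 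The operator-norm argument is cleaner, and I would use it.

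\textbf{Conclusion.} Combining the density formula with the Jacobian bound gives
\[
p_T(y)\ge\frac{\rho}{\omega_nR^n}\,e^{-nLT}=\unpt\quad\text{for a.e. } y\in\St.
\]
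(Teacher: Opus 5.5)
Your proof is correct, but it takes a different route from the paper. You work pointwise through Rademacher's theorem and the area formula; the paper works directly with sets and never identifies $p_T$. For an arbitrary measurable $B\subseteq\St$ the paper sets $A=\varphi(T,\cdot)^{-1}(B)$ and uses only the fact that an $e^{LT}$-Lipschitz map enlarges Lebesgue measure by at most $e^{nLT}$. Applied to the inverse flow, this gives absolute continuity. Applied to the forward flow, it gives $\lvert B\rvert\le e^{nLT}\lvert A\rvert$, hence $P_T(B)=P_0(A)\ge\frac{\rho}{\omega_nR^n}e^{-nLT}\lvert B\rvert$ for every $B$, which is equivalent to the a.e.\ density bound.

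Your estimate $\lvert\det D\Phi\rvert\le\|D\Phi\|_{\mathrm{op}}^n\le e^{nLT}$ is the infinitesimal form of that same measure inequality, so both arguments rest on one fact. The paper's version is lighter: it needs no Rademacher, no area formula, and never divides by a Jacobian. Your version needs one line you omitted: $\det D\Phi\neq0$ a.e., so that the formula for $p_T$ makes sense. This follows from bi-Lipschitzness (chain rule with $\Phi^{-1}$). Alternatively, you can avoid dividing by writing $\lvert B\rvert=\int_{\Phi^{-1}(B)}\lvert\det D\Phi\rvert\,dx\le e^{nLT}\lvert\Phi^{-1}(B)\rvert$, which collapses your argument into the paper's. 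In exchange, you obtain an explicit density formula. That formula is what the paper's instance-dependent remark, replacing $e^{nLT}$ by $e^{\Lambda_T}=\sup_{x\in S_0}\lvert\det D_x\varphi(T,x)\rvert$, would actually require.

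You are also more explicit than the paper about the null set $\St\setminus S_T=\Phi(\partial S_0)$. The paper's step $\int_A p_0\ge\frac{\rho}{\omega_nR^n}\lvert A\rvert$ for $A\subseteq\So$ silently needs $\lvert\partial S_0\rvert=0$ too. Your justification via a ``Lipschitz-type hypersurface'' is not accurate in general. For example, a ball with its center removed is admissible, and its boundary contains an isolated point. The measure-zero claim itself is right, though. Every boundary point of a closed set with reach $>t$ is touched by an open ball of radius $t$ lying in its complement; this ball is a limit of normal balls at nearby projections. So $S_0^c$ has Lebesgue density at most $1/2$ at each point of $\partial S_0$, and the Lebesgue density theorem gives $\lvert\partial S_0\rvert=0$.
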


\begin{proof}
    See Appendix~\ref{app:volume-distortion-density-propagation}.
\end{proof}

Combining Proposition~\ref{prop:local-lebesgue-mass-propagation} with Proposition~\ref{prop:density-lower-bound-propagation}, we obtain the local endpoint probability lower bound used in the sampling-covering argument in the next subsection.

\begin{corollary}[Endpoint local probability lower bound]
\label{cor:endpoint-local-probability-lower-bound}
Let $(S_0,F,P_0)\in\mathcal F_{R,L,r_0,\rho}$. Define $\Gamma_T(r) := \rho2^{-n}e^{-3nLT}(\frac{r}{R})^n$. Then, for every $x\in\St$ and every $r\in(0,e^{LT}r_0]$,
\[
    P_T(\cB_r(x)\cap \St)
    \ge
    \Gamma_T(r).
\]
\end{corollary}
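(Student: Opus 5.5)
The plan is to integrate the pointwise endpoint density lower bound of Proposition~\ref{prop:density-lower-bound-propagation} over the local neighborhood $\cB_r(x)\cap\St$, whose Lebesgue volume is controlled from below by Proposition~\ref{prop:local-lebesgue-mass-propagation}. No new geometric or dynamical argument should be needed; the corollary is a direct product of the two earlier bounds.

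Fix $x\in\St$ and $r\in(0,e^{LT}r_0]$. Proposition~\ref{prop:density-lower-bound-propagation} says that $P_T$ is absolutely continuous with density $p_T$, and that $p_T(y)\ge \unpt=\rho\, e^{-nLT}/(\omega_n R^n)$ for a.e.\ $y\in\St$. Hence
\[
P_T(\cB_r(x)\cap\St)=\int_{\cB_r(x)\cap\St}p_T(y)\,dy\ \ge\ \unpt\,\lvert\cB_r(x)\cap\St\rvert .
\]
Here we use that $\cB_r(x)\cap\St$ is Borel, being the intersection of a closed ball and a closed set. We also use that the exceptional null set of the density bound does not affect the integral.

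Since $r$ lies in the admissible range $(0,e^{LT}r_0]$, Proposition~\ref{prop:local-lebesgue-mass-propagation} gives $\lvert\cB_r(x)\cap\St\rvert\ge \omega_n2^{-n}e^{-2nLT}r^n$. Multiplying the two bounds, the factor $\omega_n$ cancels and the exponentials combine:
\[
P_T(\cB_r(x)\cap\St)\ \ge\ \rho\,2^{-n}e^{-3nLT}\Bigl(\frac{r}{R}\Bigr)^n=\Gamma_T(r).
\]

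The only points needing care are bookkeeping ones. First, the density bound must be stated on all of $\St$ (the closure), not only on $S_T$, so that it applies to the whole intersection. Proposition~\ref{prop:density-lower-bound-propagation} is stated on $\St$; otherwise one would note that $\St\setminus S_T$ has measure zero, or reduce to integrating over $S_T$. Second, the range of $r$ in the two propositions must match, which it does. I expect no substantive obstacle beyond these checks.
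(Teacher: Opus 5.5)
Your proposal is correct and follows the paper's own proof: you integrate the endpoint density lower bound of Proposition~\ref{prop:density-lower-bound-propagation} over $\cB_r(x)\cap\St$, then apply the local volume bound of Proposition~\ref{prop:local-lebesgue-mass-propagation}, so that $\omega_n$ cancels and the exponentials combine into $e^{-3nLT}$. Your remarks on measurability and on the density bound holding a.e.\ on $\St$ are sound details that the paper leaves implicit.
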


\begin{proof}
    See Appendix~\ref{app:volume-distortion-density-propagation}.
\end{proof}

\subsection{Sample Complexity Bounds}
\label{subsec:sample_complexity}

Corollary~\ref{cor:endpoint-local-probability-lower-bound} shows that every $r$-neighborhood centered on the reachable set carries at least $\Gamma_T(r)$ endpoint probability mass. We now use this local probability lower bound to quantify how many endpoint samples are sufficient to cover the reachable set in the inner direction. The argument is estimator-agnostic: it only requires that the estimator contains all sampled endpoints.

\begin{theorem}[Sampling upper bound for inner coverage] 
\label{thm:sample-upper-bound-inner} 
Let $(S_0,F,P_0)\in\mathcal F_{R,L,r_0,\rho}$, and let $Y_1,\ldots,Y_N$ be $\iid$ samples from $P_T$. Suppose that the estimator $\widehat S_N$ contains all endpoint samples, i.e., $Y_i\in\widehat S_N$ for all $i=1,\ldots,N$. Fix $0<r\le 2e^{LT}r_0$, $\delta \in (0,1)$. If 
\[
N \ge \frac{2^{2n}e^{3nLT}R^n}{\rho r^n} \left[ \log\left( \frac{2^{3n}e^{nLT}R^n}{r^n} \right) + \log\frac{1}{\delta} \right], 
\] 
then, with probability at least $1-\delta, \sup_{x\in \overline S_T} d(x,\widehat S_N)\le r$. \end{theorem}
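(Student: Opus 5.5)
We will prove the statement by a covering argument at scale $r/2$ combined with a union bound, using Corollary~\ref{cor:endpoint-local-probability-lower-bound}.

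\textbf{Step 1: a finite net of $\St$.} Let $\{z_1,\ldots,z_M\}\subseteq\St$ be a maximal $r/2$-separated subset of $\St$. Such a set is finite because $\St$ is bounded: $S_0$ is bounded and the flow is Lipschitz over the finite horizon. By maximality, every $x\in\St$ satisfies $\|x-z_j\|<r/2$ for some $j$.

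\textbf{Step 2: bounding $M$ by a packing argument.} The balls $\cB_{r/4}(z_j)$ are pairwise disjoint. Since $r/4\le e^{LT}r_0$, Proposition~\ref{prop:local-lebesgue-mass-propagation} applies to each of them. Combining it with the density bound of Proposition~\ref{prop:density-lower-bound-propagation} gives
\[
1\ \ge\ \sum_j P_T(\cB_{r/4}(z_j)\cap\St)\ \ge\ M\,\Gamma_T(r/4).
\]
Hence $M\le 1/\Gamma_T(r/4)$.

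A sharper alternative is to bound Lebesgue volume directly. We have $|\St|\le e^{nLT}|S_0|=e^{nLT}\omega_nR^n$, by the Jacobian bound $|\det D\varphi|\le e^{nLT}$ that underlies Proposition~\ref{prop:density-lower-bound-propagation}. Each disjoint piece $\cB_{r/4}(z_j)\cap\St$ has volume at least $\omega_n2^{-n}e^{-2nLT}(r/4)^n$. This gives
\[
M\le 2^{3n}e^{3nLT}R^n/r^n .
\]
The stated logarithmic factor is $2^{3n}e^{nLT}R^n/r^n$, which suggests the authors count differently. It may be that they use $|\St|$ against a packing of full balls, or that only the $e^{nLT}$ factor survives. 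Matching this constant exactly is the main bookkeeping obstacle. Since $M$ appears only inside the logarithm, I would first try to reproduce their count via the volume route and, failing that, accept a slightly different log constant.

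\textbf{Step 3: hitting each net cell.} For each $j$, apply Corollary~\ref{cor:endpoint-local-probability-lower-bound} with radius $r/2$, which is admissible since $r/2\le e^{LT}r_0$. This gives
\[
p:=P_T(\cB_{r/2}(z_j)\cap\St)\ \ge\ \Gamma_T(r/2)=\rho\,2^{-2n}e^{-3nLT}(R^{-1}r)^n .
\]
The probability that no $Y_i$ lands in $\cB_{r/2}(z_j)$ is $(1-p)^N\le e^{-Np}$. A union bound over the $M$ centers bounds the failure probability by $M e^{-N\Gamma_T(r/2)}$. This is at most $\delta$ once $N\ge \Gamma_T(r/2)^{-1}[\log M+\log(1/\delta)]$, which is exactly the stated condition given the bound on $M$.

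\textbf{Step 4: conclusion on the good event.} On the good event, fix $x\in\St$ and choose $z_j$ with $\|x-z_j\|<r/2$. Some sample $Y_i$ lies in $\cB_{r/2}(z_j)$, and $Y_i\in\widehat S_N$ by assumption. Therefore
\[
d(x,\widehat S_N)\le\|x-z_j\|+\|z_j-Y_i\|\le r .
\]
Taking the supremum over $x\in\St$ gives the claim.
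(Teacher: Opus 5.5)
Your Steps 1, 3 and 4 are exactly the paper's argument: an $r/2$-net of $\St$, Corollary~\ref{cor:endpoint-local-probability-lower-bound} at radius $r/2$ giving $q_r=\Gamma_T(r/2)$, a union bound, and the triangle inequality. The gap is Step 2.

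The count you actually obtain, $M\le 2^{3n}e^{3nLT}R^n/r^n$, is not enough for the theorem as stated. Your other route, $M\le 1/\Gamma_T(r/4)$, is even weaker and also carries a $1/\rho$. The hypothesis only guarantees $Nq_r\ge \log(2^{3n}e^{nLT}R^n/r^n)+\log(1/\delta)$. With your $M$ this yields only $Me^{-Nq_r}\le e^{2nLT}\delta$, not $\delta$. So the sentence ``which is exactly the stated condition given the bound on $M$'' is unjustified. ``Accepting a slightly different log constant'' would mean proving a strictly weaker theorem.

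The loss comes from packing inside $\St$. You pay $e^{nLT}$ for the volume growth $|\St|\le e^{nLT}|\So|$, and $e^{2nLT}$ more for the propagated local-volume bound of Proposition~\ref{prop:local-lebesgue-mass-propagation}.

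The missing idea is to do the volume counting in the initial set, where there is no distortion.

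\begin{itemize}
\item Keep your maximal $r/2$-separated set $\{z_j\}\subseteq\St$; you need centers in $\St$ to invoke Corollary~\ref{cor:endpoint-local-probability-lower-bound}. Pick preimages $w_j\in\So$ with $\varphi(T,w_j)=z_j$; they exist since $\St=\varphi(T,\So)$.
\item The forward Lipschitz bound of Proposition~\ref{prop:flow-regularity-volume-distortion} gives $\|w_i-w_j\|\ge e^{-LT}\|z_i-z_j\|\ge\beta:=e^{-LT}r/2$. Also $\beta\le r_0$ because $r\le 2e^{LT}r_0$.
\item Hence the sets $\cB_{\beta/2}(w_j)\cap\So$ have pairwise null intersections. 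Each has volume at least $\omega_n2^{-n}(\beta/2)^n$ by Proposition~\ref{prop:lower-lebesgue-initial-set}.
\item Finally $|\So|=|S_0|=\omega_nR^n$, because $\partial S_0$ is Lebesgue-null under the positive-reach assumption. Your volume route also silently needs this step.
\end{itemize}

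This gives $M\le 2^{2n}R^n/\beta^n=2^{3n}e^{nLT}R^n/r^n$, exactly the constant inside the logarithm. The paper packages the same computation as Lemma~\ref{lem:covering-number-reachable-set}. There it is phrased as: any $e^{-LT}h$-net of $\So$ is mapped by the $e^{LT}$-Lipschitz flow onto an $h$-net of $\St$, and $\So$ is covered using a maximal separated set. With this replacement for Step 2, the rest of your proof goes through unchanged.
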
 
\begin{proof} 
See Appendix~\ref{app:sample_complexity}. \end{proof}

\begin{corollary}[Hausdorff bound with outer deviation] 
\label{cor:hausdorff-bound-outer-deviation} Under the conditions of Theorem~\ref{thm:sample-upper-bound-inner}, suppose in addition that the estimator satisfies the outer-deviation condition $\widehat S_N\subseteq \cB_{\eta}(S_T)$ for some $\eta \ge 0$. Then, with probability at least $1-\delta$, we have $d_H(S_T,\widehat S_N) \le \max\{r,\eta\}$.
\end{corollary}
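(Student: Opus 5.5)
The plan is to combine the inner-coverage event of Theorem~\ref{thm:sample-upper-bound-inner} with the deterministic outer-deviation hypothesis. Once these are in hand, the argument reduces to unpacking the definition of $d_H$. Let $E$ be the event, of probability at least $1-\delta$ under $P_T^N$, on which
\[
\sup_{x\in\St} d(x,\widehat S_N)\le r .
\]
Theorem~\ref{thm:sample-upper-bound-inner} supplies this event directly, since its hypotheses are assumed here. On $E$ we bound the two one-sided terms in $d_H(S_T,\widehat S_N)$ separately.

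\emph{Inner term.} Because $S_T\subseteq\St$, the supremum over $S_T$ is at most the supremum over $\St$. Hence on $E$ we have $\sup_{x\in S_T} d(x,\widehat S_N)\le r$.

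\emph{Outer term.} The condition $\widehat S_N\subseteq\cB_\eta(S_T)$ means, by the definition of $\cB_\eta(\cdot)$ in the notation section, that $d(y,S_T)\le\eta$ for every $y\in\widehat S_N$. So $\sup_{y\in\widehat S_N} d(y,S_T)\le\eta$. This holds on every realization, with no probabilistic content, provided $\widehat S_N$ is nonempty. Nonemptiness is automatic because $\widehat S_N$ contains $Y_1$, so $d_H$ is well defined.

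Taking the maximum of the two bounds gives $d_H(S_T,\widehat S_N)\le\max\{r,\eta\}$ on $E$, and therefore with probability at least $1-\delta$.

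No step is a real obstacle. The only points needing care are bookkeeping ones:
\begin{enumerate}
\item Distinguish $S_T$ from its closure $\St$. This is harmless because enlarging the set over which the inner supremum is taken only weakens the bound we need. Also, $d(\cdot,A)=d(\cdot,\overline A)$, so the outer term is unaffected whether it is measured to $S_T$ or to $\St$.
\item Interpret the outer-deviation condition as holding almost surely, or at least on $E$. If it only held with some probability $1-\delta'$, a union bound would give confidence $1-\delta-\delta'$ instead.
\end{enumerate}
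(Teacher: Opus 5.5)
Your proposal is correct and matches the paper's proof. Both take the $1-\delta$ event from Theorem~\ref{thm:sample-upper-bound-inner} for the inner term, bound the outer term deterministically from $\widehat S_N\subseteq\cB_\eta(S_T)$, and take the maximum in the definition of $d_H$; your extra bookkeeping (restricting the inner supremum from $\overline{S_T}$ to $S_T$, and noting $\widehat S_N\neq\varnothing$ because it contains $Y_1$) is valid and slightly more careful than the paper's write-up.
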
 
\begin{proof} 
See Appendix~\ref{app:sample_complexity}. \end{proof}

\begin{remark}[Estimator-dependent outer deviation] The parameter $\eta$ measures the outer deviation of the estimator, where $\widehat S_N\subseteq \cB_{\eta}(S_T)$ is equivalent to $\sup_{z\in\widehat S_N}d(z,S_T)\le\eta$. In Corollary~\ref{cor:hausdorff-bound-outer-deviation}, the sampling argument controls the directed error from $S_T$ to $\widehat S_N$, while $\eta$ controls the opposite direction. Its value is estimator-dependent: $\eta=0$ for inner estimators, $\eta=h$ for a union-of-balls estimator with radius $h$, and for a convex-hull estimator it is governed by the convexification error $\sup_{z\in\operatorname{conv}(S_T)}d(z,S_T)$. For a Christoffel-function estimator $\widehat S_N^{\rm Chr}:=\{z\in\mathcal X:\kappa_N(z)\le\tau_N\}$, the corresponding value is $\eta_{\rm Chr,N}:=\sup_{\kappa_N(z)\le\tau_N}d(z,S_T)$. \end{remark} 
We next show that the $r^{-n}$ dependence is not an artifact of the covering argument, but an intrinsic limitation of endpoint-support recovery.

\begin{theorem}[Minimax Hausdorff lower bound]
\label{thm:minimax-hausdorff-lower-bound}
Suppose that $R\ge4r_0, \rho\in(0,1]$. Let $0<r \le 2^{-\frac{n+1}{n}}e^{LT}r_0, \delta\in\left(0,\frac12\right)$. If
\[
    N
    <
    \frac{e^{nLT}R^n}{2^{n+1}r^n}
    \log\frac{1}{2\delta},
\]
then, for every estimator $\widehat S_N$, there exists an instance $(S_0,F,P_0) \in \mathcal F_{R,L,r_0,\rho}$ such that $P_T^N\left(d_H(\widehat S_N,\St)>r\right)>\delta.$
\end{theorem}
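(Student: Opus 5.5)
We will use a two-point (Le Cam) construction with linear expanding dynamics. Take $F(x)=Lx$, which is $L$-Lipschitz, so $\varphi(T,x)=e^{LT}x$. Every reachable set is then an exact dilation of $S_0$ by the factor $e^{LT}$. Dilation scales reach and distances by $e^{LT}$ and volumes by $e^{nLT}$, and the uniform law on $S_0$ pushes forward to the uniform law on $S_T$. We build two instances that agree except on one small region. Let $S_0^{(1)}$ be a set of volume $\omega_nR^n$ whose complement has reach at least $r_0$; for example, a ball of radius $R$, whose complement has reach $R\ge r_0$. Let $S_0^{(2)}$ be obtained from $S_0^{(1)}$ by carving out a ball of radius $s:=e^{-LT}r'$ centered at distance $>2r_0$ from the boundary, with $r'$ slightly larger than $r$. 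We then enlarge the outer ball so that the volume is exactly $\omega_nR^n$. The carved ball must satisfy $s\ge r_0$ so that $\reach((S_0^{(2)})^c)\ge r_0$, since the carved ball becomes a convex component of the complement. That forces $r\gtrsim e^{LT}r_0$, which conflicts with the hypothesis $r\le 2^{-(n+1)/n}e^{LT}r_0$. So the construction should be reversed: instead of carving a hole, add a small extra component.

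Concretely, let $S_0^{(2)}=B\cup \cB^\circ_{a}(c)$, an open ball $\cB^\circ_{a}(c)$ of radius $a$ placed far from the big ball $B$. The complement of a disjoint union of balls has reach equal to the smallest radius, so $a\ge r_0$ is needed. Take $a=r_0$ and choose the main ball's radius so that the total volume is $\omega_nR^n$; this is possible since $R\ge4r_0$. Use the density $p_0=1/|S_0|$, which meets the requirement with $\rho\le1$. The lower bound should come from the condition $r<$ distance scale, so the small component should be at the same scale as $r$. The first instance is the main ball alone with radius $R$. After the flow, $S_T^{(2)}$ contains a component of radius $e^{LT}r_0$, placed at distance $>2r$ from the main component. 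Hence any set $\widehat S$ has $d_H(\widehat S,S_T^{(1)})\le r$ and $d_H(\widehat S,S_T^{(2)})\le r$ simultaneously impossible: the first forces $\widehat S$ to lie within $r$ of the main component, while the second forces $\widehat S$ to come within $r$ of the center of the small component. The samples carry mass $p:=r_0^n/R^n$ on the small component. I would rather tie the construction to $r$ and not $r_0$: choose the radius $a$ so that $(e^{LT}a)^n$ relates to $r^n$. The threshold $r\le 2^{-(n+1)/n}e^{LT}r_0$, i.e. $2^{n+1}r^n\le e^{nLT}r_0^n$, suggests the perturbation has mass about $2^{n+1}r^ne^{-nLT}/R^n$ relative to the base, consistent with the stated $N$.

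The main obstacle is reconciling the reach constraint with a perturbation whose mass scales like $(r e^{-LT}/R)^n$. A ball of radius $r_0$ has fixed mass $(r_0/R)^n$, not $(r/R)^n$. My first fix is to use the density freedom instead: keep the geometry $S_0^{(2)}$ with a component of radius $r_0$, but in instance 2 lower the density on that small component. The constraint $p_0\ge\rho/|S_0|$ does not allow this with $\rho=1$, but we may pick $\rho$ as given and the bound does not depend on $\rho$. This suggests the intended route is to make the two instances differ by a region of Lebesgue volume $\approx\omega_n 2^{n+1}(e^{-LT}r)^n$, within the problem class, using a ball of radius $r_0$ that is translated rather than added. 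Instance 1 has a small ball at $c_1$ and instance 2 has it at $c_2$, with $\|c_1-c_2\|e^{LT}>2r$. Uniform sampling on each, coupled on the common part, differs only if a sample lands in the small ball. Its mass is $q=(r_0/R)^n\le\tfrac12$, which uses $R\ge4r_0$.

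Given this, the final step is standard. Under a coupling, with probability $(1-q)^N$ all samples land in the common big ball, and then the samples, and hence $\widehat S_N$, are identical under both instances. On that event $\widehat S_N$ can be within $r$ in Hausdorff distance of at most one of $S_T^{(1)},S_T^{(2)}$. Therefore $\max_j P^N_{(j)}(d_H>r)\ge\tfrac12(1-q)^N$. Using $(1-q)^N\ge e^{-2qN}$ for $q\le\tfrac12$ and the assumed bound on $N$ gives $\tfrac12(1-q)^N>\delta$. I expect the delicate bookkeeping to be matching $q$ with the stated $e^{nLT}R^n/(2^{n+1}r^n)$. That requires choosing the small component's radius $a=e^{-LT}r\,2^{(n+1)/n}$, which is $\le r_0$ by hypothesis. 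The reach condition then needs an argument that the complement retains reach $\ge r_0$. That argument is the point I would check most carefully, possibly by replacing the small ball with a thin-but-convex shape of controlled volume.
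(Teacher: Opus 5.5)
\textbf{There is a genuine gap: the instance you end up with is not admissible.} You settle on a separate (added or translated) ball component of radius $a=2^{(n+1)/n}e^{-LT}r$. When $a<r_0$, this instance is not in $\mathcal F_{R,L,r_0,\rho}$. As you correctly note earlier, the complement of an open ball of radius $a$ has reach $a$, because the center has every boundary point as a nearest point.

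Replacing the ball by a thin convex piece makes things worse, not better. The condition $\reach(S_0^c)\ge r_0$ means every point of $\overline{S_0}$ lies in a closed $r_0$-ball contained in $\overline{S_0}$. So any extra component carries volume at least $\omega_n r_0^n$.

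With the admissible choice $a=r_0$, your coupling argument is valid, but it only gives failure probability above $\delta$ for $N<\frac{R^n}{2r_0^n}\log\frac{1}{2\delta}$. Since $2^{n+1}r^n\le e^{nLT}r_0^n$, this is at most half the stated threshold. It also has none of the $(e^{LT}/r)^n$ scaling the theorem asserts.

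Even granting admissibility, the bookkeeping is off. The choice $a=2^{(n+1)/n}e^{-LT}r$ gives $q=2^{n+1}e^{-nLT}(r/R)^n$. The estimate $(1-q)^N\ge e^{-2qN}$ then only covers $N$ up to half the stated bound.

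\textbf{The missing idea: your first construction was right.} You discarded the carved-hole construction on a false premise. A carved-out closed ball is a convex component of $S_0^c$, and closed convex sets have infinite reach. The reach condition on the complement penalizes small pieces of $S_0$, not small holes in it.

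Concretely, let $A:=\mathbb R^n\setminus\cB^\circ_{\bar R}(0)$, which has reach $\bar R$, and let the hole $B:=\overline{\cB_h(a)}$ satisfy $\dist(A,B)\ge 2r_0$. Then no point lies strictly within $r_0$ of both $A$ and $B$. Each of $A$ and $B$ has unique nearest points at that range, so $\reach(A\cup B)\ge r_0$. This holds for every hole radius $h$, as long as the hole stays $2r_0$ away from the outer sphere.

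\textbf{How the paper completes the argument.} The paper uses:
\begin{itemize}
\item two punctured balls $\cB^\circ_{\bar R}(0)\setminus\overline{\cB_h(a_i)}$ with $\bar R=(R^n+h^n)^{1/n}$ and disjoint holes at $a_0=-a_1$;
\item uniform sampling laws and $F(x)=Lx$;
\item hole radius $h=(2+\alpha)e^{-LT}r$.
\end{itemize}
The point $e^{LT}a_0$ lies in $\overline{S_T^{(1)}}$ at distance $(2+\alpha)r>2r$ from $\overline{S_T^{(0)}}$. The two endpoint laws share all but mass $(h/R)^n$. Letting $\alpha\downarrow 0$ in the strict inequality reproduces exactly $\frac{e^{nLT}R^n}{2^{n+1}r^n}\log\frac{1}{2\delta}$.

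The hypothesis $r\le 2^{-(n+1)/n}e^{LT}r_0$ is used only to get $h<r_0$. That ensures the holes sit more than $2r_0$ inside the outer sphere (using $R\ge 4r_0$) and that $(h/R)^n\le\tfrac12$. It is not a reach requirement on the perturbation.

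Your single-hole version also works: a plain ball of radius $R$ versus a punctured ball of outer radius $\bar R$. Its overlap is again $1-(s/R)^n$. The hole radius must be taken just above $2e^{-LT}r$ rather than $e^{-LT}r$, because the testing reduction needs Hausdorff separation exceeding $2r$.
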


\begin{proof}
See Appendix~\ref{app:sample_complexity}.
\end{proof}

Together, the bounds show that the intrinsic dependence on the accuracy parameter is $r^{-n}$. The upper bound gives inner-coverage achievability up to logarithmic and flow-dependent factors, while the lower bound shows that $\Omega (\frac{e^{nLT}R^n}{2^{n+1}r^n}
    \log\frac{1}{2\delta})$ samples are unavoidable in the worst case. Table~\ref{tab:upper-lower-sample-complexity} summarizes the comparison. \vspace{-1em}
\begin{table}[htbp] 
\centering 
\caption{Comparison between the upper and lower bounds.}
\label{tab:upper-lower-sample-complexity} \begin{tabular}{ccc} 
\toprule & \textbf{Upper bound} & \textbf{Lower bound} \\
\midrule Type & Achievability & Minimax Hausdorff impossibility \\
Rate & $ \mathcal O\!\left( \frac{2^{2n}e^{3nLT}R^n}{\rho r^n} \left[ \log\left( \frac{2^{3n}e^{nLT}R^n}{r^n} \right) + \log\frac{1}{\delta} \right] \right) $ & $ \Omega\!\left(\frac{e^{nLT}R^n}{2^{n+1}r^n} \log\frac{1}{2\delta} \right) $ \\
\bottomrule 
\end{tabular}
\end{table}

\vspace{-0.6em}

\section{Algorithm Design}
\label{sec:alg}

The $r^{-n}$ dependence in the lower bound highlights the intrinsic coverage
burden of Hausdorff reachable-set approximation. This motivates the practical
question studied in our experiments: under a fixed sample budget, can sampling
be allocated more effectively than uniform sampling so as to improve geometric
coverage and reduce the empirical burden of this curse of dimensionality?
Accordingly, we focus on the sampling stage rather than on the downstream set
representation. In the autonomous setting, samples are selected only through their initial conditions in $S_0$. For each sampled initial condition $X_j\in S_0$, we write $Y_j := \varphi(T,X_j)$ for the corresponding endpoint. Given endpoint samples $\mathcal Y_N=\{Y_j\}_{j=1}^N$, the reachable-set estimate is constructed as $\widehat S_N=\mathrm C(\mathcal Y_N)$, where $\mathrm C(\cdot)$ denotes a
generic reachable-set estimator. 

In the experiments, we compare uniform sampling with the adversarial
sampling heuristic described by Lew and Pavone ~\citep{lew2021sampling}. The purpose of using these two different sampling algorithms is to investigate whether adversarial sampling can improve finite-sample coverage by directing samples toward geometrically informative regions. We emphasize that the adversarial sampling procedure  can improve finite sample coverage but cannot violate the lower bound. The worst-case Hausdorff sample
complexity remains governed by the intrinsic $r^{-n}$ coverage requirement. Implementation details of the adversarial sampling procedure (Algorithm \ref{alg:adv_reachable_set_estimation}) are provided
in Appendix~\ref{app:adversarial-sampling}. When $n_{\rm adv}=0$ in Algorithm \ref{alg:adv_reachable_set_estimation}, no adversarial update is performed, and the method reduces to the uniform sampling method used in the experiments.

\section{Experiments}
\label{sec:exp}
In this section, we compare uniform and adversarial sampling on two 2D systems,
one non-Lipschitz and one Lipschitz, and on a closed-loop robot-arm control task with different dimensions.

\subsection{Adversarial Reachable Set Approximation under Non-Lipschitz Dynamics}
\label{subsec:adv_non_lipschitz}

Using the system in Figure~\ref{fig:toy_example}, we first evaluate reachable-set approximation for the non-Lipschitz autonomous dynamics. This vector field is not globally Lipschitz and exhibits finite-time blow-up, so sampling errors can be strongly amplified in the positive $x$-direction. We consider three initial sets centered at $(2,0)$: a disk, an equilateral triangle, and a morphologically opened triangle, all with the same area but different boundary geometry. For each set, we approximate the reachable set by the convex hull of propagated samples and report its Hausdorff distance to a high-resolution reference reachable set over time. Solid and dashed curves denote uniform and adversarial sampling, with shaded regions showing 95\% confidence intervals over 50 random seeds.

\vspace{-1em}
\begin{figure}[htbp]
    \hspace{-1.0em}
    \includegraphics[width=1.0\linewidth]{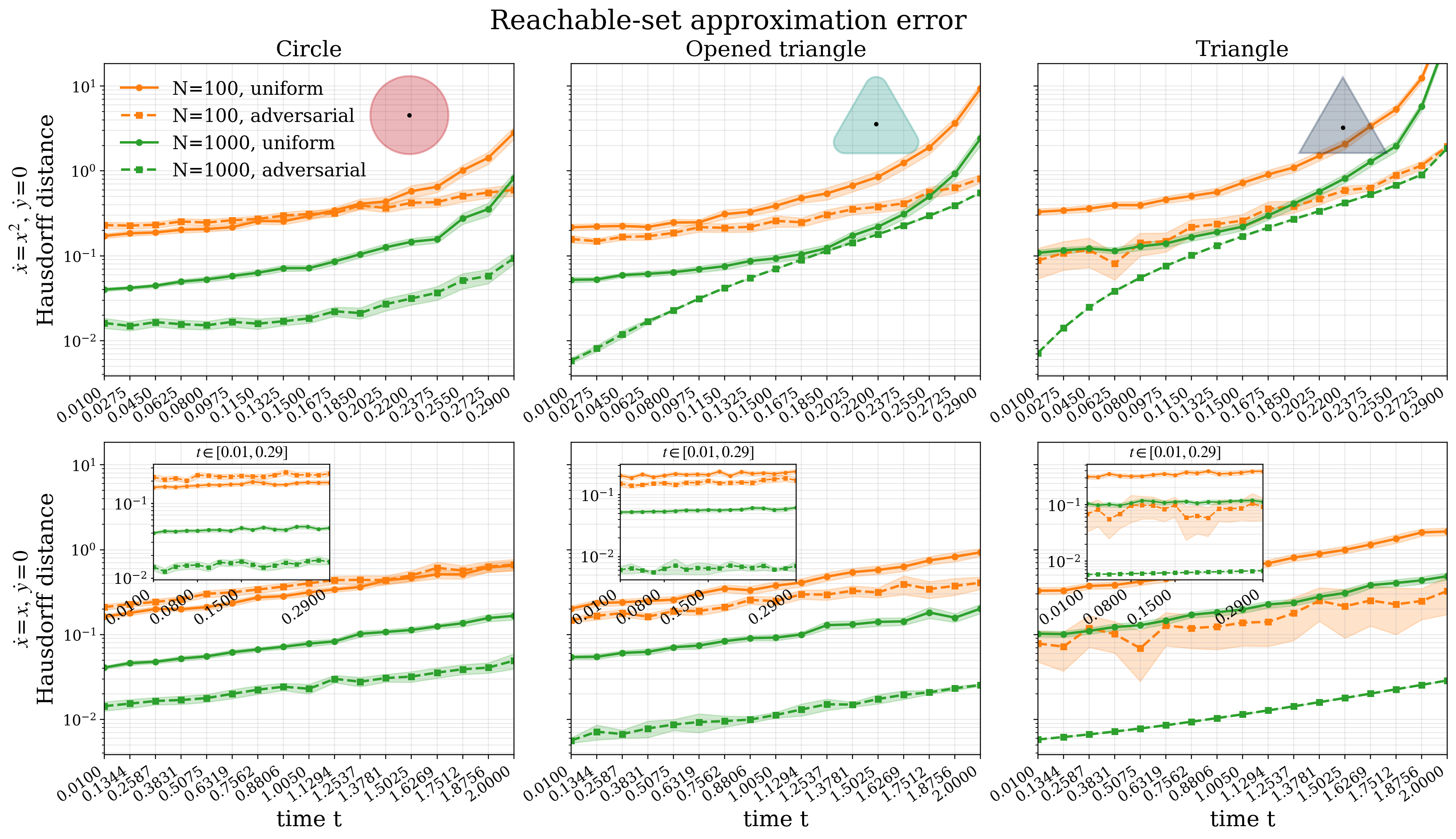}
    \caption{Hausdorff error versus time for $\dot y=0,\dot x=x^2$ and $\dot y=0,\dot x=x$ using different sampling methods. Solid and dashed curves denote uniform and adversarial sampling.}
    \label{fig:hausdorff_vs_time_uniform_adversarial_ci}
\end{figure}

\vspace{-0.5em}

To isolate the effect of flow regularity, we also compare with the Lipschitz linear system $\dot y=0,\dot x=x$ under the same initial-set configurations. Figure~\ref{fig:hausdorff_vs_time_uniform_adversarial_ci} shows that the linear system follows the expected exponential flow expansion, while the nonlinear system grows much faster due to finite-time blow-up. Across both systems, adversarial sampling reduces finite-sample error but mainly improves constants; the triangle and opened triangle remain harder than the disk, reflecting the effect of initial-set geometry.

\subsection{Intrinsic Curse of Dimensionality Cannot be Circumvented}

We also consider a vertical $n$-link robot arm tracking problem with $n\in\{2,3,4\}$. The state is $x=[q^\top,v^\top]^\top\in\mR^{2n}$, where $q\in\mR^n$ denotes joint angles and $v=\dot q\in\mR^n$ denotes joint velocities. The trajectories are simulated in MuJoCo under the rigid-body dynamics $ M(q)\dot v + C(q,v)v + g(q) = \tau$. Given a smooth reference trajectory $q_d(t)$, we use a non-adaptive tracking controller $\tau = M(q)\ddot q_d(t)+C(q,v)v+g(q) -K_p(q-q_d(t))-K_d(v-\dot q_d(t))$ for control.

Although the nominal task is trajectory tracking, the initial state is uncertain, so safety verification requires characterizing how this uncertainty propagates through the closed-loop dynamics. We model the initial uncertainty by
$S_0=[-\rho_q,\rho_q]^n\times[-\rho_v,\rho_v]^n$ with $\rho_q=\rho_v=0.1$. We propagate sampled initial states to $T=1.0$ s and measure approximation error by a numerical directed distance from a 50000-point subset of a dense reference terminal cloud to the convex-hull estimator of the sampled $N$-ponits terminal cloud.

\begin{minipage}[t]{0.29\textwidth}
\vspace{-0.5em}

As shown in Figure~\ref{fig:robot_arm_curse} and Table~\ref{tab:robotarm-slopes}, the Hausdorff error decreases with the sample size under both uniform and adversarial sampling. However, the decay becomes slower as the state dimension increases: the fitted log--log slopes become less negative from dimension $4$ (2-link) to dimension $8$ (4-link). Adversarial sampling gives steeper slopes than uniform sampling in all dimensions, indicating better finite-sample efficiency. Nevertheless, its slopes also flatten as the dimension increases. Thus, targeted sampling improves the constants but does not remove the intrinsic dimension dependent 
\end{minipage}
\hfill
\begin{minipage}[t]{0.695\textwidth}
\vspace{-1.5em}
\begin{figure}[H]
    \centering
    \begin{subfigure}[t]{0.49\textwidth}
        \centering
        \includegraphics[width=\textwidth]{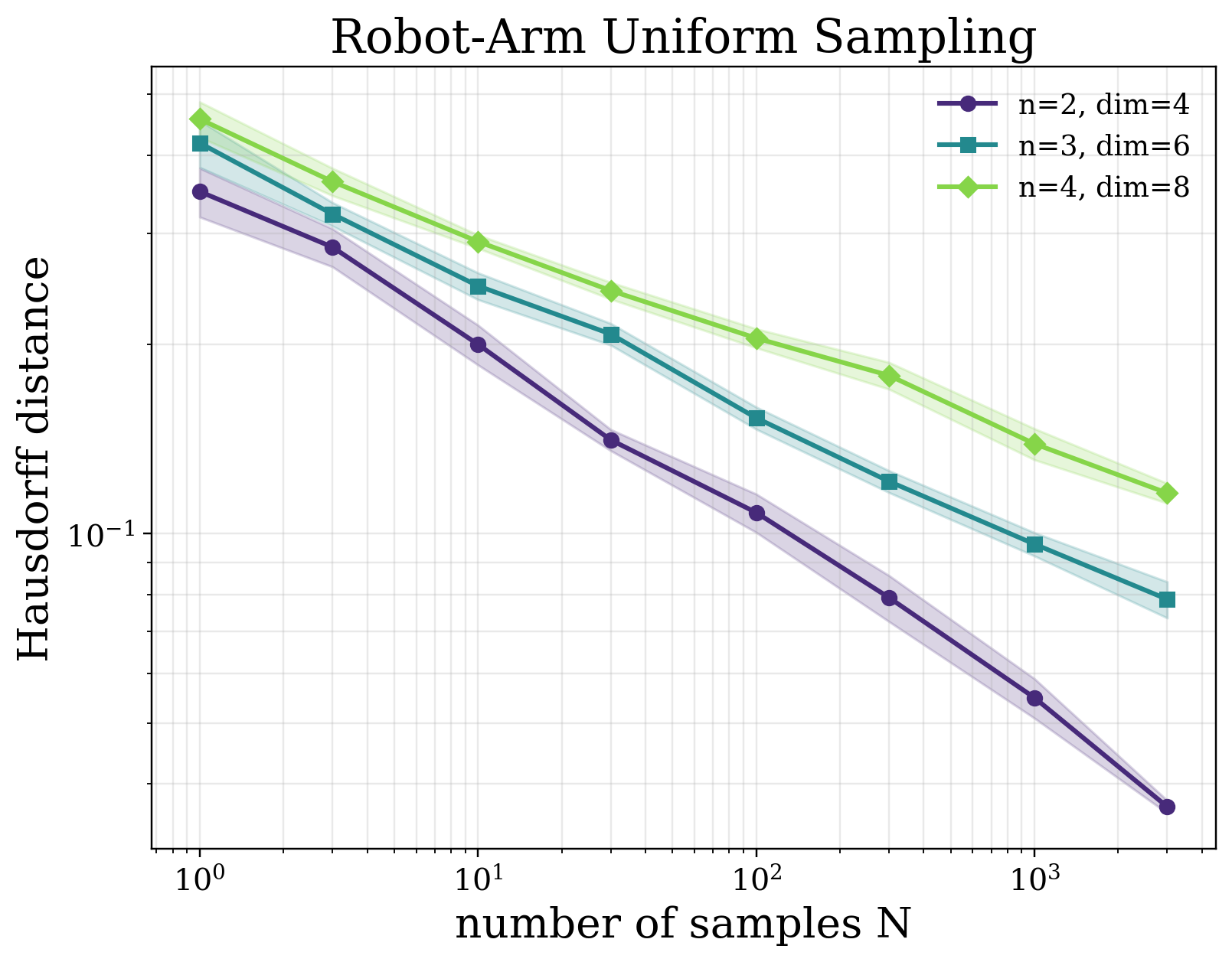}
        \label{fig:kura_uni}
    \end{subfigure}
    \begin{subfigure}[t]{0.49\textwidth}
        \centering
        \includegraphics[width=\textwidth]{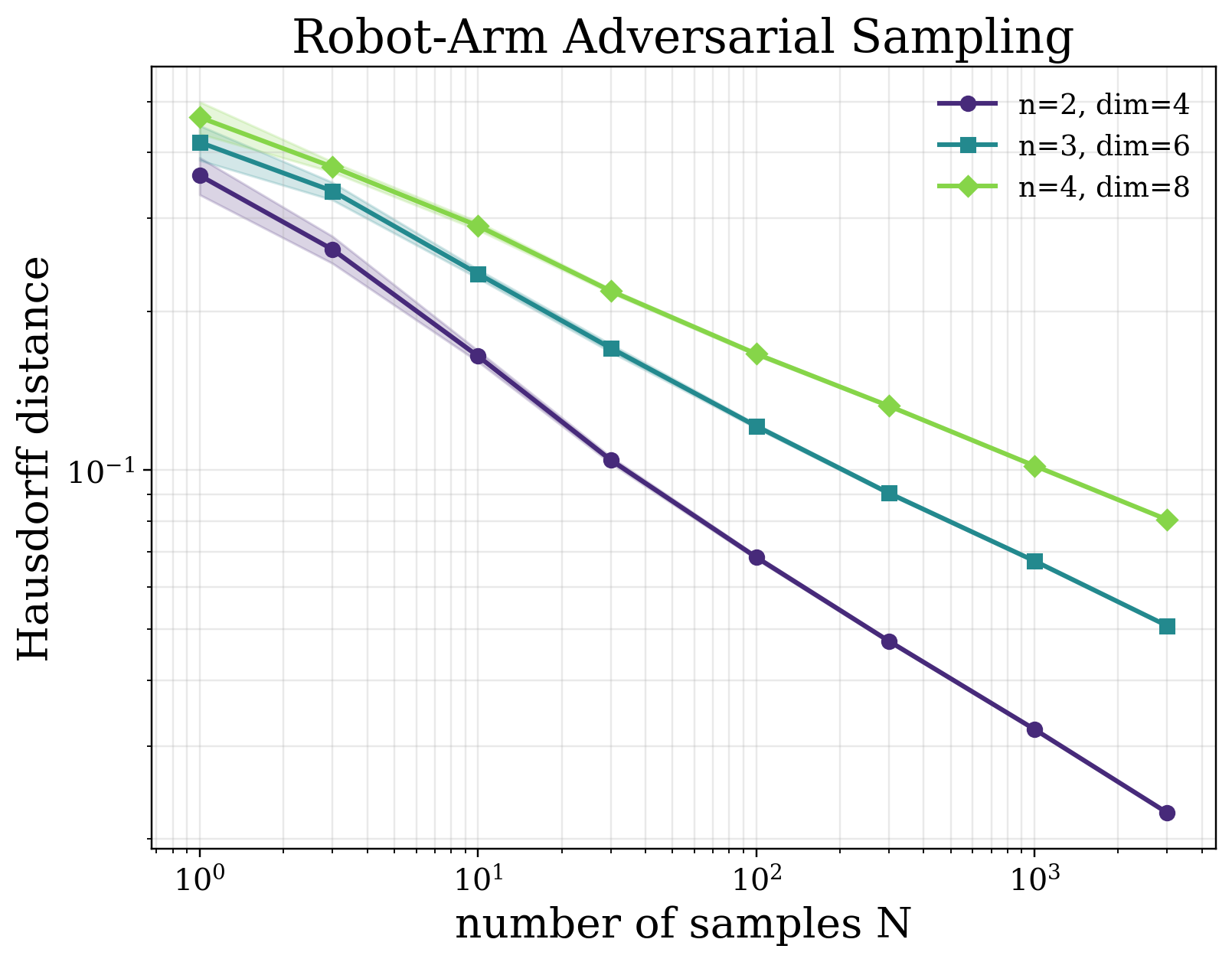}
        \label{fig:kura_adv}
    \end{subfigure}
    \vspace{-0.9em}
    \caption{Hausdorff error versus sample size for robot-arm uncertainty propagation corresponding to different state dimensions under uniform sampling and adversarial sampling.}
    \label{fig:robot_arm_curse}
\end{figure}

\vspace{-.25cm}

\centering
\begin{tabular}{cccc}
\toprule
Dimension & 4 & 6 & 8\\
\midrule
Uniform sampling
& -0.2806 & -0.2094 & -0.1662 \\
Adversarial sampling
& -0.3535 & -0.2701 & -0.2224\\
\bottomrule
\end{tabular}
\captionsetup{hypcap=false}
\vspace{-.1em}
\captionof{table}{Empirical log--log slopes for robot-arm uncertainty propagation. Each slope is obtained by fitting $\log d_H$ versus $\log N$, where $d_H$ is the Hausdorff distance and $N$ is the number of samples.}
\label{tab:robotarm-slopes}
\vspace{.5ex}
\end{minipage} degradation predicted by the $r^{-n}$-type sample-complexity scaling. Additional implementation details and more detailed comparisons and explanation, including the sample-budget-dependent improvement of adversarial sampling, are provided in Appendix~\ref{app:additional_results}.

\section{Conclusion and Future Work}
\label{sec:conclusion}

This paper studied the finite-sample limits of sampling-based reachable-set approximation in Hausdorff distance. By viewing endpoint samples as observations from a pushforward distribution, we formulated reachable-set recovery as a geometric support estimation problem. We showed that, under positive-reach regularity of the initial domain, Lipschitz continuity of the autonomous flow, and a lower bound on the initial sampling density, probability-mass coverage can be upgraded to Hausdorff
accuracy. Our results give matching upper- and lower-bound perspectives. The upper bound shows that Hausdorff accuracy is attainable once every relevant $r$-scale region of the reachable set receives samples. The minimax lower bound shows that the dominant $r^{-n}$ dependence, together with the flow-induced factor $e^{nLT}$, is intrinsic in the worst case. Experiments further support this picture: adversarial sampling improves finite-sample performance by emphasizing geometrically informative regions, but cannot remove the intrinsic dimension-dependent scaling. A natural direction for future work is to go beyond geometric regularity and global Lipschitz continuity by exploiting physics-informed inductive biases in the dynamics~\citep{raissi2019physics}, such as energy conservation~\citep{duong21hamiltonian}, contraction~\citep{singh2023robust}, or Hamiltonian structure~\citep{greydanus2019hamiltonian}, which may reduce the effective sample complexity for physically structured systems~\citep{djeumou23how}.

\section{Limitations}
The analysis is restricted to autonomous dynamics. This includes uncontrolled systems and closed-loop systems, but not open-loop reachable sets, which are unions over a control class rather than images of $S_0$ under a single flow map. As a result, the injectivity, volume-distortion, and endpoint-density arguments used here do not directly apply. Moreover, endpoint mass depends on how control signals are sampled, so geometrically important regions may receive very small probability mass.

\clearpage
\acknowledgments{This work was supported by the NSF Global Centers program under Grant No.~2330450 and by the DOE Office of Science (ASCR) under Award No.~826565.}

\bibliography{example}

\clearpage
\appendix
\etocdepthtag.toc{appendix}

\section*{Appendix}

The Appendix is organized as follows. Section~\ref{app:further_related_work} provides additional related work on sampling-based reachable set approximation methods. Section~\ref{app:proofs} presents the proofs of the theoretical results in the main paper, including the lower Lebesgue-mass bound, the mass-to-Hausdorff conversion argument, and sample complexity's lower bound and upper bound. Section~\ref{app:additional_experiments} reports additional numerical experiments and implementation details.

\tableofcontents

\clearpage

\section{Further Related Work}
\label{app:further_related_work}

Beyond the Hausdorff-oriented works discussed in the main paper, much of the sampling-based reachability literature studies weaker probabilistic notions of approximation, such as volume coverage, and violation probability, which are often characterized by scenario optimization and conformal prediction. We briefly review these adjacent frameworks below. While some of them provide important finite-sample and high-confidence guarantees, they do not by themselves ensure geometric recovery of the actual reachable set. This makes them complementary to our goal of understanding when probability-level guarantees can be upgraded to Hausdorff accuracy.

\subsection{Other Sampling-based Reachable Set Approximation Methods}

\paragraph{Scenario optimization and sampling-based certificates.}
Scenario optimization provides a principled framework for deriving
probabilistic guarantees from finitely many sampled constraints. In the classical scenario approach, an infinite family of constraints is replaced by a finite set of sampled constraints, leading to high-confidence bounds on the probability of constraint violation~\citep{calafiore2006scenario,campi2008exact}. This viewpoint has been adopted in reachability and safety verification, where sampled reachable states are used to construct data-driven outer approximations or safety certificates. For instance, scenario-based methods have been used to estimate reachable sets within tractable templates such as boxes, ellipsoids, and other parameterized set families~\citep{devonport2020estimating,dietrich2025data}. Related work has also studied reach-avoid problems for discrete-time linear time-invariant systems with additive probabilistic uncertainty, using Voronoi partition-based scenario reduction to accelerate sampling-based stochastic reachability computation~\citep{sartipizadeh2019voronoi}. Scenario-based techniques have also been extended to nonconvex set representations~\citep{dietrich2024nonconvex} and model-predictive control settings~\citep{hewing2019scenario}.These methods are attractive since they reduce reachable-set estimation or safety certification to finite-dimensional optimization problems with explicit confidence bounds. However, their guarantees are typically expressed in terms of violation probability or marginal coverage, rather than Hausdorff distance. This distinction is important: a set of small probability-mass can still be geometrically far from the estimator, especially in the presence of low-density regions, thin components, or separated reachable branches. Thus, such guarantees do not directly imply Hausdorff accuracy without additional geometric regularity, such as positive reach, and lower bounds on local endpoint probability mass.

\paragraph{Set growth and Lipschitz constant-based bloating.}
Another line of sampling-based reachability methods estimates how sets grow under the dynamics by bounding the sensitivity of trajectories to initial conditions, disturbances, or model parameters. These methods often rely on Lipschitz constants, local discrepancy functions, finite-order Taylor approximations, or learned expansion rates to inflate sampled trajectories into tubes covering nearby trajectories. For example, some algorithms use smoothness of the dynamics together with Lipschitz constants to control the Lagrange remainder of Taylor approximations and thereby compute outer approximations of reachable sets~\citep{dean2020robust}. Tools such as DryVR use discrepancy functions to quantify how the distance between trajectories evolves over time, allowing finitely many simulations to certify a larger reach tube~\citep{fan2017dryvr}. More recent statistical verification methods, such as GoTube, estimate local expansion or Lipschitz-like quantities to construct probabilistic tubes for continuous-depth models and neural ODEs~\citep{gruenbacher2022gotube,chen2018neural}. A key limitation of this line of work is that tight Lipschitz constants are rarely available in data-driven or learning-based control systems. Computing upper bounds on the Lipschitz constants of neural networks is itself an active research problem~\citep{fazlyab2019efficient,latorre2020lipschitz}, and such bounds can be computationally expensive or overly conservative. Although scalable sampling-based procedures can estimate Lipschitz-like quantities, these estimates are generally not guaranteed to be valid upper bounds. These approaches are closely related to our flow-dependent analysis, since both use sensitivity of the dynamics to translate finite sampling resolution into geometric coverage after propagation. Our theory instead directly characterizes how many endpoint samples are needed to approximate the actual reachable set in Hausdorff distance, and how this sample complexity depends on the set scale, target accuracy, state dimension, and flow distortion.

\paragraph{Conformal prediction and distribution-free coverage.}
Conformal prediction offers another route to finite-sample guarantees. Given a calibration dataset and a user-defined nonconformity score, conformal methods compute an empirical quantile of the calibration scores to construct prediction sets with distribution-free marginal coverage, where a future test sample is contained in the conformal set with a prescribed probability~\citep{shafer2008tutorial}. Since this construction does not require a parametric model of the data-generating distribution, it has been widely used as an uncertainty-quantification tool in learning-enabled autonomous systems, including perception, prediction, safe control, offline verification, and online monitoring~\citep{lindemann2025formal,mei2026perceive,yao2025towards,tabbara2025statistically}. For instance, in safe planning for dynamic environments, conformal prediction calibrates prediction regions around learned trajectory predictors and incorporates them into model predictive control constraints to obtain user-specified probabilistic safety guarantees~\citep{lindemann2023safe}. In reachability and safety verification, conformal prediction is typically used as a calibration layer on top of a data-driven reachable-set representation or a learned reachability surrogate. It has been used to calibrate Christoffel function's sublevel set thresholds for data-driven reachability, improving sample efficiency and robustness to outliers~\citep{tebjou2023data}; to verify DeepReach-style~\citep{bansal2021deepreach} neural HJ reachable tubes with probabilistic safety guarantees despite outlier errors~\citep{lin2024verification}; and to inflate neural surrogate flowpipes for black-box stochastic systems using conformal residual bounds~\citep{hashemi2023data}. Recent PCA-based variants further reduce the conservatism of this inflation and improve scalability under distribution shift~\citep{hashemi2025pca}. These methods are appealing because they require weak distributional assumptions and provide finite-sample coverage or safety guarantees. However, their guarantees remain probabilistic: a conformal set contains a future sample, trajectory, or prediction error with prescribed probability, but need not be close to the full reachable set in Hausdorff distance. It may cover most endpoint samples while missing a low-probability but spatially separated component. Thus, conformal prediction is complementary to our analysis, which identifies geometric, dynamical, and density conditions under which probability-mass coverage can be upgraded to Hausdorff accuracy.

\section{Proofs and Auxiliary Results}
\label{app:proofs}

\subsection{Lower Lebesgue Mass Bound for the Initial Set}
\label{app:lower-lebesgue-initial-set}

We prove the local thickness property used in
Proposition~\ref{prop:lower-lebesgue-initial-set}. Recall and that the positive-reach condition is imposed on the
closed complement $S_0^c=\mR^n\setminus S_0$.

\noindent\textbf{Proposition~\ref{prop:lower-lebesgue-initial-set}}
(Lower Lebesgue mass bound for the initial set).
Let $(S_0,F,P_0)\in\mathcal F_{R,L,r_0,\rho}$. Then, for every $x\in\So$ and every
$r\in(0,r_0]$,
\[
    \lvert \cB_r(x)\cap \So\rvert
    \ge
    \omega_n\,2^{-n}r^n .
\]

\begin{figure}[htbp]
    \centering
    \includegraphics[width=0.9\linewidth]{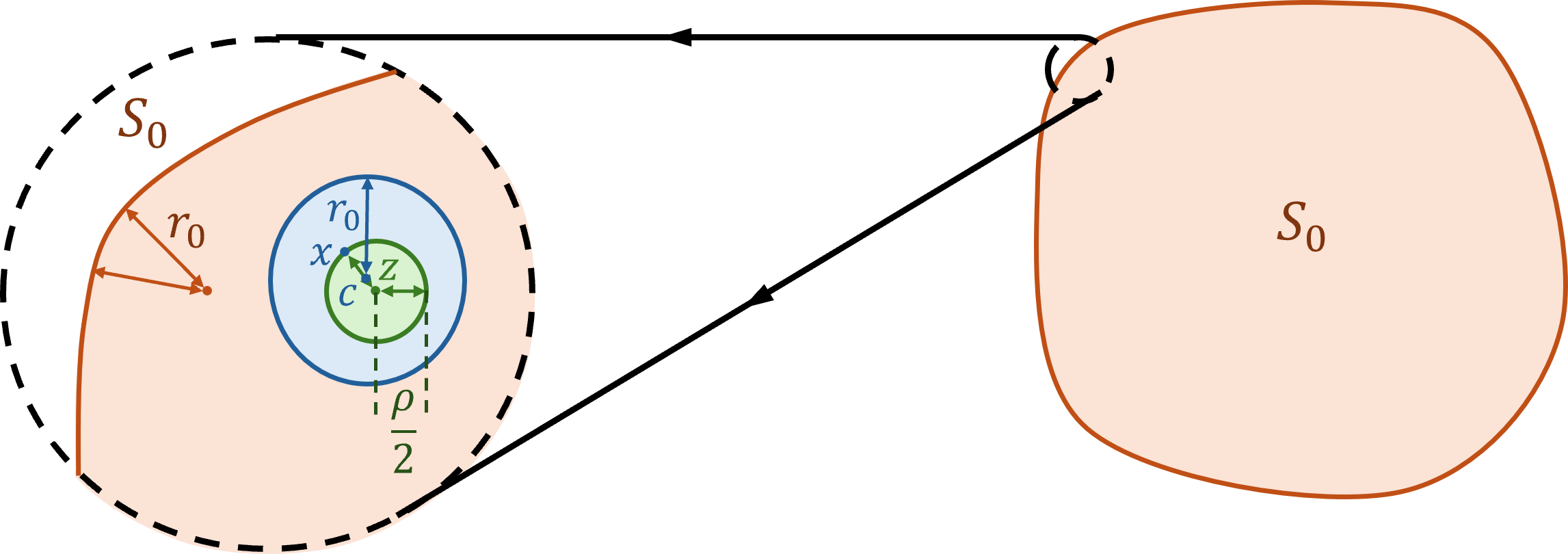}
    \caption{Local ball inclusion used in the proof of
    Proposition~\ref{prop:lower-lebesgue-initial-set}.}
    \label{fig:cover}
\end{figure}

\begin{proof}[Proof of Proposition~\ref{prop:lower-lebesgue-initial-set}]
Fix $x\in\So$ and $r\in(0,r_0]$. By the positive-reach opening property \citep[Lemma~4.8]{rataj2019curvature}, the condition $\reach(S_0^c)\ge r_0$ implies that $\So$ has an interior rolling-ball representation at scale $r_0$: for every $x\in\So$, there exists $c\in\So$ such that $x\in \cB_{r_0}(c), \cB_{r_0}(c)\subseteq \So$. We claim that $\cB_r(x)\cap \So$ contains a Euclidean ball of radius $r/2$. This immediately implies the
desired lower bound. Let $d:=\|x-c\|$. If $x=c$, then since $r\le r_0$,
\[
    \cB_r(x)\subseteq \cB_{r_0}(c)\subseteq \So .
\]
Therefore,
\[
    \lvert \cB_r(x)\cap\So\rvert
    =
    \omega_nr^n
    \ge
    \omega_n 2^{-n}r^n .
\]

Now suppose $x\neq c$. Define $\nu:=\frac{c-x}{\|c-x\|}, z:=x+\frac{r}{2}\nu$. Then $\|z-x\|=r/2$, and hence $\cB_{r/2}(z)\subseteq \cB_r(x)$. It remains to show that $\cB_{r/2}(z)\subseteq \cB_{r_0}(c)$. Take any $w\in\cB_{r/2}(z)$. If $d\ge r/2$, then $\|z-c\|=d-r/2$, and therefore
\[
    \|w-c\|
    \le
    \|w-z\|+\|z-c\|
    \le
    \frac{r}{2}+\left(d-\frac{r}{2}\right)
    =
    d
    \le
    r_0 .
\]
If $d<r/2$, then $\|z-c\|=r/2-d$, and hence
\[
    \|w-c\|
    \le
    \|w-z\|+\|z-c\|
    \le
    \frac{r}{2}+\left(\frac{r}{2}-d\right)
    =
    r-d
    \le
    r
    \le
    r_0 .
\]
Thus, in both cases, $w\in \cB_{r_0}(c)$. Since $w$ was arbitrary,
\[
    \cB_{r/2}(z)
    \subseteq
    \cB_r(x)\cap \cB_{r_0}(c)
    \subseteq
    \cB_r(x)\cap \So .
\]
Taking Lebesgue measure gives
\[
    \lvert \cB_r(x)\cap \So\rvert
    \ge
    \lvert \cB_{r/2}(z)\rvert
    =
    \omega_n\left(\frac{r}{2}\right)^n
    =
    \omega_n\,2^{-n}r^n .
\]
This proves the claim.
\end{proof}

\subsection{Closure of the Reachable Set}
\label{app:closure-reachable-set}

We first record a simple topological fact used to identify the support of the
endpoint distribution.

\begin{lemma}[Reachability commutes with closure]
\label{lem:flow-image-closure}
Suppose $S_0\subset\mR^n$ is bounded and the finite-time flow map
$\varphi(T,\cdot)$ is continuous on $\So=\overline{S_0}$. Then
\[
    \cR_T(\So)
    =
    \varphi(T,\So)
    =
    \overline{\varphi(T,S_0)}
    =
    \overline{\cR_T(S_0)} .
\]
In particular, if $S_T:=\cR_T(S_0)$, then $\cR_T(\So)=\St$.
\end{lemma}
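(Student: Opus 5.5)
The argument proves the two inclusions between $\varphi(T,\So)$ and $\overline{\varphi(T,S_0)}$ separately. Compactness of $\So$ and continuity of $\varphi(T,\cdot)$ on $\So$ are the only ingredients. The outer equalities $\cR_T(\So)=\varphi(T,\So)$ and $\overline{\varphi(T,S_0)}=\overline{\cR_T(S_0)}$ follow directly from the definition~\eqref{eq:frs_autonomous}. The final claim $\cR_T(\So)=\St$ is then just the middle equality rewritten with $S_T:=\cR_T(S_0)$.

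\emph{Inclusion $\varphi(T,\So)\subseteq\overline{\varphi(T,S_0)}$.} Take $y=\varphi(T,x)$ with $x\in\So$. By the definition of closure, there is a sequence $x_k\in S_0$ with $x_k\to x$. Continuity of $\varphi(T,\cdot)$ on $\So$ gives $\varphi(T,x_k)\to\varphi(T,x)=y$. Each $\varphi(T,x_k)$ lies in $\varphi(T,S_0)$, so $y$ is a limit of points of $\varphi(T,S_0)$ and hence $y\in\overline{\varphi(T,S_0)}$. This is just the general fact that a continuous map sends the closure of a set into the closure of its image, and it does not need boundedness.

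\emph{Inclusion $\overline{\varphi(T,S_0)}\subseteq\varphi(T,\So)$.} Here boundedness is used. Since $S_0$ is bounded, $\So$ is closed and bounded, hence compact by Heine--Borel. The continuous image $\varphi(T,\So)$ of a compact set is compact, and therefore closed in $\mR^n$. It contains $\varphi(T,S_0)$, so it also contains the closure $\overline{\varphi(T,S_0)}$. An equivalent sequential argument: take $y_k=\varphi(T,x_k)\to y$ with $x_k\in S_0$, extract a subsequence $x_{k_j}\to x\in\So$, and use continuity to get $y=\varphi(T,x)$.

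The only delicate point is the compactness step in the second inclusion, which is exactly where the hypothesis that $S_0$ is bounded enters. The argument also needs $\varphi(T,\cdot)$ to be defined and continuous on all of $\So$, not just on $S_0$. This is the standing well-posedness convention of the paper, and under Lipschitz $F$ it holds because the flow is Lipschitz, with $\|\varphi(T,x)-\varphi(T,x')\|\le e^{LT}\|x-x'\|$ by Gr\"onwall. Combining the two inclusions gives $\varphi(T,\So)=\overline{\varphi(T,S_0)}$, which is the full chain of equalities and hence $\cR_T(\So)=\St$.
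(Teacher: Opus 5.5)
Your proposal is correct and follows essentially the same argument as the paper. The inclusion $\varphi(T,\overline{S_0})\subseteq\overline{\varphi(T,S_0)}$ comes from density of $S_0$ in $\overline{S_0}$ plus continuity, and the reverse inclusion comes from compactness of $\overline{S_0}$ (via boundedness), which makes $\varphi(T,\overline{S_0})$ closed and hence a superset of $\overline{\varphi(T,S_0)}$.
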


\begin{proof}
Since $S_0$ is dense in $\So$ and $\varphi(T,\cdot)$ is continuous on $\So$,
we have $\varphi(T,\So) \subseteq \overline{\varphi(T,S_0)}$. Conversely, since $S_0$ is bounded, $\So$ is compact. Therefore $\varphi(T,\So)$ is compact, hence closed. Moreover,
$\varphi(T,S_0)\subseteq \varphi(T,\So)$, so
\[
    \overline{\varphi(T,S_0)}
    \subseteq
    \varphi(T,\So).
\]
Combining the two inclusions gives the claim.
\end{proof}

\subsection{Volume Distortion and Density Propagation}
\label{app:volume-distortion-density-propagation}

We next prove the flow-regularity and density-propagation statements used in the main text. Throughout this subsection, $\So=\overline{S_0}$ and $\St := \cR_T(\So)=\overline{\cR_T(S_0)}$ by Lemma~\ref{lem:flow-image-closure}. We begin with the basic distortion properties of the finite-time flow. These estimates show that the flow is bi-Lipschitz on the relevant support and that Lebesgue volume cannot shrink by more than a factor $e^{-nLT}$.

\begin{proposition}[Lipschitz regularity and geometry distortion of the flow]
\label{prop:flow-regularity-volume-distortion}
Let $(S_0,F,P_0)\in\mathcal F_{R,L,r_0,\rho}$. Then we have:
\begin{enumerate}
    \item the map $x\mapsto \varphi(T,x)$ from $\So$ to $\St$ is injective;
    \label{enu:1}
    \item for all 
    \begin{align*}
    x,y\in \So, \|\varphi(T,x)-\varphi(T,y)\| \le e^{LT}\|x-y\|;
    \end{align*}
    \label{enu:2}
    \item for all 
    \begin{align*}
    x,y \in \So, \|\varphi(T,x)-\varphi(T,y)\| \ge e^{-LT}\|x-y\|,
    \end{align*}
    equivalently, the inverse map $\varphi(T,\cdot)^{-1}:\St\to \So$ is $e^{LT}$-Lipschitz;
    \label{enu:3}
    \item for every Lebesgue measurable set
    \begin{align}
    A \subseteq \So, \lvert\cR_T(A)\rvert \ge e^{-nLT}\lvert A\rvert.
    \label{eq:autonomous-flow-volume-lower}
    \end{align}
    \label{enu:4}
\end{enumerate}
\end{proposition}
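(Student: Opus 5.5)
The plan is to derive all four items from Grönwall-type estimates for the flow of an $L$-Lipschitz vector field, applied both forward and backward in time.

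\textbf{Forward bound (item~\ref{enu:2}).} Fix $x,y\in\So$ and set $u(t):=\varphi(t,x)-\varphi(t,y)$. Under the standing well-posedness convention, $u$ is absolutely continuous on $[0,T]$. Since $\dot u = F(\varphi(t,x))-F(\varphi(t,y))$, the Lipschitz condition gives
\[
\|u(t)\|\le \|u(0)\|+\int_0^t L\|u(s)\|\,ds .
\]
Grönwall's inequality then yields $\|u(T)\|\le e^{LT}\|x-y\|$.

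\textbf{Reverse bound (item~\ref{enu:3}).} I would bound $\frac{d}{dt}\|u(t)\|$ from below. Wherever $u\neq0$,
\[
\tfrac{d}{dt}\|u\|^2 = 2\langle u, F(\varphi(t,x))-F(\varphi(t,y))\rangle \ge -2L\|u\|^2 .
\]
This gives $\|u(t)\|^2\ge e^{-2Lt}\|u(0)\|^2$ on any interval where $u$ does not vanish. To close the argument, I need to rule out $u$ reaching zero. Suppose $u(t^*)=0$ at a first time $t^*$ with $u(0)\neq 0$. Then the inequality holds on $[0,t^*)$, and continuity gives $\|u(t^*)\|\ge e^{-Lt^*}\|u(0)\|>0$, a contradiction. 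Equivalently, one can run Grönwall on the time-reversed system $\dot z=-F(z)$, which is also $L$-Lipschitz. This step carries the one real subtlety: the trajectories must stay in $\X$, where $F$ is Lipschitz, which I take from the standing well-posedness convention. Item~\ref{enu:3} follows immediately, and item~\ref{enu:1} is a direct corollary, since $x\neq y$ forces $\varphi(T,x)\neq\varphi(T,y)$. The $e^{LT}$-Lipschitz property of the inverse on $\St=\varphi(T,\So)$ (Lemma~\ref{lem:flow-image-closure}) is item~\ref{enu:3} restated.

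\textbf{Volume bound (item~\ref{enu:4}).} This is the main obstacle, because $\varphi(T,\cdot)$ need not be differentiable when $F$ is only Lipschitz, so the Liouville formula $\det D\varphi = \exp\int \operatorname{div}F$ is not directly available. I would avoid differentiability entirely. Let $\psi:=\varphi(T,\cdot)^{-1}:\St\to\So$, which is $e^{LT}$-Lipschitz by item~\ref{enu:3}, and note $A=\psi(\cR_T(A))$ for $A\subseteq\So$. A Lipschitz map with constant $K$ scales $n$-dimensional Hausdorff measure by at most $K^n$, and Hausdorff measure agrees with Lebesgue measure in $\mR^n$ under the standard normalization. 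Hence
\[
|A|\le e^{nLT}\,|\cR_T(A)| .
\]
Measurability of $\cR_T(A)$ is handled as follows. If $A$ is Borel, then $\cR_T(A)$ is a continuous injective image of a Borel set and hence Borel by the Lusin--Souslin theorem. For general Lebesgue measurable $A$, write $A$ as a Borel set plus a null set; the null part maps to a null set because $\varphi(T,\cdot)$ is Lipschitz. Alternatively, one can interpret the bound as an outer measure statement. Either route closes the proof.
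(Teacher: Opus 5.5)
Your proof is correct and follows essentially the same route as the paper: Gr\"onwall on the integral form for item~2, a reverse-time Gr\"onwall bound for item~3 (your lower differential inequality on $\|u\|^2$ is an equivalent way to run it), and, for item~4, the fact that the $e^{LT}$-Lipschitz inverse map can enlarge Lebesgue (equivalently $n$-dimensional Hausdorff) measure by at most a factor $e^{nLT}$, applied to $\cR_T(A)$. The differences are minor: you get injectivity as a corollary of the lower bound, whereas the paper proves it first from backward uniqueness. You also justify measurability of $\cR_T(A)$, which the paper leaves implicit.
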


\begin{proof}[Proof of Proposition~\ref{prop:flow-regularity-volume-distortion}]
We prove the four claims in such an order: \ref{enu:2} $\rightarrow$ \ref{enu:1} $\rightarrow$ \ref{enu:3} $\rightarrow$ \ref{enu:4}. First, we prove the upper Lipschitz bound. Fix $x,y\in\So$. Since
$(S_0,F,P_0)\in\mathcal F_{R,L,r_0,\rho}$, the vector field $F$ is $L$-Lipschitz on
$\X$. Hence, for all $s\in[0,T]$,
\[
    \|F(\varphi(s,x))-F(\varphi(s,y))\|
    \le
    L\|\varphi(s,x)-\varphi(s,y)\|.
\]
Using the integral form of the flow,
\[
    \varphi(s,x)-\varphi(s,y)
    =
    x-y
    +
    \int_0^s
    \bigl[
        F(\varphi(\sigma,x))-F(\varphi(\sigma,y))
    \bigr]\,d\sigma .
\]
Taking norms gives
\[
    \|\varphi(s,x)-\varphi(s,y)\|
    \le
    \|x-y\|
    +
    L\int_0^s
    \|\varphi(\sigma,x)-\varphi(\sigma,y)\|\,d\sigma .
\]
By Gronwall's inequality~\cite[Corollary 3.17]{fb2024book},
\[
    \|\varphi(s,x)-\varphi(s,y)\|
    \le
    e^{Ls}\|x-y\|,
    \qquad s\in[0,T].
\]
Taking $s=T$ proves
\[
    \|\varphi(T,x)-\varphi(T,y)\|
    \le
    e^{LT}\|x-y\|.
\]

Next, we prove injectivity. Suppose that
$\varphi(T,a)=\varphi(T,b)$ for some $a,b\in\So$. Define the reverse-time
trajectories
\[
    \eta_a(s):=\varphi(T-s,a),
    \qquad
    \eta_b(s):=\varphi(T-s,b),
    \qquad s\in[0,T].
\]
Then $\eta_a(0)=\eta_b(0)$, and both trajectories solve the reverse-time
equation
\[
    \dot \eta(s)=-F(\eta(s)).
\]
Since $F$ is $L$-Lipschitz, the reverse-time vector field $-F$ is also
$L$-Lipschitz. By uniqueness of solutions, $\eta_a(s)=\eta_b(s)$ for all
$s\in[0,T]$. Evaluating at $s=T$ gives $a=b$. Thus
$x\mapsto\varphi(T,x)$ is injective on $\So$.

We now prove the lower Lipschitz bound. Fix $x,y\in\So$ and set
\[
    p:=\varphi(T,x),
    \qquad
    q:=\varphi(T,y).
\]
Apply the upper Lipschitz estimate to the reverse-time dynamics. Since the
reverse flow from $p$ and $q$ over time $T$ returns to $x$ and $y$, respectively,
we obtain
\[
    \|x-y\|
    \le
    e^{LT}\|p-q\|
    =
    e^{LT}\|\varphi(T,x)-\varphi(T,y)\|.
\]
Equivalently,
\[
    \|\varphi(T,x)-\varphi(T,y)\|
    \ge
    e^{-LT}\|x-y\|.
\]
This also shows that the inverse map
$\varphi(T,\cdot)^{-1}:\St\to\So$ is $e^{LT}$-Lipschitz.

It remains to prove the volume distortion bound. By~\cite[Theorem 2.8]{evans2025measure}, we know that the inverse flow map $\varphi(T,\cdot)^{-1}:\St\to \So$, which is $e^{LT}$-Lipschitz, gives $|\varphi(T,\cdot)^{-1}(B)| \le e^{nLT}|B|, \forall B\subseteq \varphi(T,\So)$ for every Lebesgue measurable set $B$. Taking $B=\varphi(T,A)=\cR_T(A)$ and using injectivity of $x\mapsto\varphi(T,x)$ on $\So$, we have
\begin{align*}
    \lvert A\rvert &=
    \lvert\varphi(T,\cdot)^{-1}(\varphi(T,A))\rvert\\
    &\le e^{nLT}\lvert \cR_T(A)\rvert.
\end{align*}
Rearranging yields $|\cR_T(A)| \ge e^{-nLT}|A|$, which proves \eqref{eq:autonomous-flow-volume-lower}. This completes the proof.
\end{proof}

The preceding proposition is the technical tool needed to propagate the local thickness of the initial set. We now combine its Lipschitz and volume-distortion bounds with the initial local Lebesgue-mass bound from
Proposition~\ref{prop:lower-lebesgue-initial-set}. This yields a corresponding local Lebesgue-mass lower bound for the reachable set at time $T$.

\noindent\textbf{Proposition~\ref{prop:local-lebesgue-mass-propagation}}
(Propagation of local Lebesgue mass).
Let $(S_0,F,P_0)\in\mathcal F_{R,L,r_0,\rho}$. Then, for every
$x\in\St$ and every $r\in(0,e^{LT}r_0]$,
\[
    \lvert \cB_r(x)\cap\St\rvert
    \ge
    \omega_n\,2^{-n}e^{-2nLT}r^n.
\]

\begin{proof}[Proof of Proposition~\ref{prop:local-lebesgue-mass-propagation}]
Fix $x\in\St$ and $r\in(0,e^{LT}r_0]$. Since
$\St=\varphi(T,\So)$ and the flow map is injective on $\So$ by
Proposition~\ref{prop:flow-regularity-volume-distortion}, there exists a
unique $y\in\So$ such that $x=\varphi(T,y)$. Set
$s:=e^{-LT}r$. Then $0<s\le r_0$. By
Proposition~\ref{prop:lower-lebesgue-initial-set},
\[
    \lvert \cB_s(y)\cap\So\rvert
    \ge
    \omega_n2^{-n}s^n.
\]

Define
\[
    A:=\cB_s(y)\cap\So.
\]
For every $p\in A$, the upper Lipschitz estimate in
Proposition~\ref{prop:flow-regularity-volume-distortion} gives
\[
    \|\varphi(T,p)-x\|
    =
    \|\varphi(T,p)-\varphi(T,y)\|
    \le
    e^{LT}\|p-y\|
    \le
    e^{LT}s
    =
    r.
\]
Therefore, $\cR_T(A)\subseteq\cB_r(x)\cap\St$. Using the volume-distortion
lower bound in Proposition~\ref{prop:flow-regularity-volume-distortion}, we
obtain
\begin{align*}
    \lvert\cB_r(x)\cap\St\rvert
    &\ge
    \lvert\varphi(T,A)\rvert
    \\
    &\ge
    e^{-nLT}\lvert A\rvert
    \\
    &\ge
    e^{-nLT}\omega_n2^{-n}s^n
    \\
    &=
    \omega_n2^{-n}e^{-2nLT}r^n,
\end{align*}
where the last equality uses $s=e^{-LT}r$.
\end{proof}

The previous result propagates geometric thickness from $\So$ to $\St$. We
also need to propagate the sampling density lower bound. The next proposition
shows that the pushforward distribution remains absolutely continuous and
that its density lower bound degrades by at most the flow-induced
volume-distortion factor.

\noindent\textbf{Proposition~\ref{prop:density-lower-bound-propagation}}
(Propagation of density lower bounds).
Suppose $(S_0,F,P_0)\in\mathcal F_{R,L,r_0,\rho}$. Then
$P_T=(\varphi(T,\cdot))_\#P_0$ is absolutely continuous with respect to
Lebesgue measure on $\St$. Moreover, if $p_T$ denotes its density, then
\[
    p_T(y)
    \ge
    \frac{\rho}{\omega_nR^n}e^{-nLT},
    \qquad
    \text{for a.e. }y\in\St.
\]

\begin{proof}[Proof of Proposition~\ref{prop:density-lower-bound-propagation}]
Let $B\subseteq\St$ be a Lebesgue measurable set and define
\[
    A:=\varphi(T,\cdot)^{-1}(B)\subseteq\So.
\]
Since $P_T$ is the pushforward of $P_0$ under $\varphi(T,\cdot)$,
\[
    P_T(B)=P_0(A).
\]

We first prove absolute continuity. By
Proposition~\ref{prop:flow-regularity-volume-distortion}, the inverse map
$\varphi(T,\cdot)^{-1}:\St\to\So$ is $e^{LT}$-Lipschitz. Hence,
\[
    \lvert A\rvert
    =
    \lvert\varphi(T,\cdot)^{-1}(B)\rvert
    \le
    e^{nLT}\lvert B\rvert.
\]
Therefore, if $\lvert B\rvert=0$, then $\lvert A\rvert=0$. Since $P_0$ is
absolutely continuous with respect to Lebesgue measure, this implies
\[
    P_T(B)=P_0(A)=0.
\]
Thus, $P_T$ is absolutely continuous with respect to Lebesgue measure and
admits a density $p_T$.

It remains to prove the density lower bound. Since the flow map is injective
on $\So$, we have $B=\cR_T(A)$. By the upper Lipschitz bound in
Proposition~\ref{prop:flow-regularity-volume-distortion},
\[
    \lvert B\rvert
    =
    \lvert\cR_T(A)\rvert
    \le
    e^{nLT}\lvert A\rvert.
\]
Therefore, $\lvert A\rvert\ge e^{-nLT}\lvert B\rvert$. Using the density
condition in Definition~\ref{def:problem_family}, we obtain
\begin{align*}
    P_T(B)
    &=
    P_0(A)
    \\
    &=
    \int_A p_0(x)\,dx
    \\
    &\ge
    \frac{\rho}{\omega_nR^n}\lvert A\rvert
    \\
    &\ge
    \frac{\rho}{\omega_nR^n}e^{-nLT}\lvert B\rvert.
\end{align*}
Since this holds for every Lebesgue measurable set $B\subseteq\St$, the
density $p_T$ satisfies
\[
    p_T(y)
    \ge
    \frac{\rho}{\omega_nR^n}e^{-nLT},
    \qquad
    \text{for a.e. }y\in\St.
\]
\end{proof}

We now combine the propagated local Lebesgue-mass bound with the endpoint
density lower bound to obtain the local probability estimate used in the
sample-complexity argument.

\noindent\textbf{Corollary~\ref{cor:endpoint-local-probability-lower-bound}}
(Endpoint local probability lower bound).
Suppose $(S_0,F,P_0)\in\mathcal F_{R,L,r_0,\rho}$. Define
\[
    \Gamma_T(r)
    :=
    \rho\,2^{-n}e^{-3nLT}
    \left(\frac{r}{R}\right)^n.
\]
Then, for every $x\in\St$ and every $r\in(0,e^{LT}r_0]$,
\[
    P_T(\cB_r(x)\cap\St)
    \ge
    \Gamma_T(r).
\]

\begin{proof}[Proof of Corollary~\ref{cor:endpoint-local-probability-lower-bound}]
By Proposition~\ref{prop:local-lebesgue-mass-propagation}, for every
$x\in\St$ and every $r\in(0,e^{LT}r_0]$,
\[
    \lvert\cB_r(x)\cap\St\rvert
    \ge
    \omega_n2^{-n}e^{-2nLT}r^n.
\]
By Proposition~\ref{prop:density-lower-bound-propagation}, the endpoint
density satisfies
\[
    p_T(z)
    \ge
    \frac{\rho}{\omega_nR^n}e^{-nLT},
    \qquad
    \text{for a.e. }z\in\St.
\]
Therefore,
\begin{align*}
    P_T(\cB_r(x)\cap\St)
    &=
    \int_{\cB_r(x)\cap\St}p_T(z)\,dz
    \\
    &\ge
    \frac{\rho}{\omega_nR^n}e^{-nLT}
    \lvert\cB_r(x)\cap\St\rvert
    \\
    &\ge
    \frac{\rho}{\omega_nR^n}e^{-nLT}
    \omega_n2^{-n}e^{-2nLT}r^n
    \\
    &=
    \rho\,2^{-n}e^{-3nLT}
    \left(\frac{r}{R}\right)^n
    \\
    &=
    \Gamma_T(r).
\end{align*}
\end{proof}

\subsection{Sample Complexity}
\label{app:sample_complexity}

\textbf{Sample Complexity Upper Bound.}
We prove Theorem~\ref{thm:sample-upper-bound-inner} using a direct
random-covering argument. The argument only requires that $\widehat S_N$
contain all endpoint samples.

We first record the covering-number estimate used in the proof.

\begin{definition}[Covering number]
\label{def:covering-number}
For a bounded set $A\subset\mR^n$ and a radius $h>0$, the
$h$-covering number of $A$ is defined as
\[
    \mathcal N(A,h)
    :=
    \min\left\{
        M\in\mathbb N:
        \exists z_1,\ldots,z_M\in A
        \text{ such that }
        A\subseteq
        \bigcup_{j=1}^M\cB_h(z_j)
    \right\}.
\]
\end{definition}

\begin{figure}[htbp]
    \centering
    \includegraphics[width=0.45\linewidth]
    {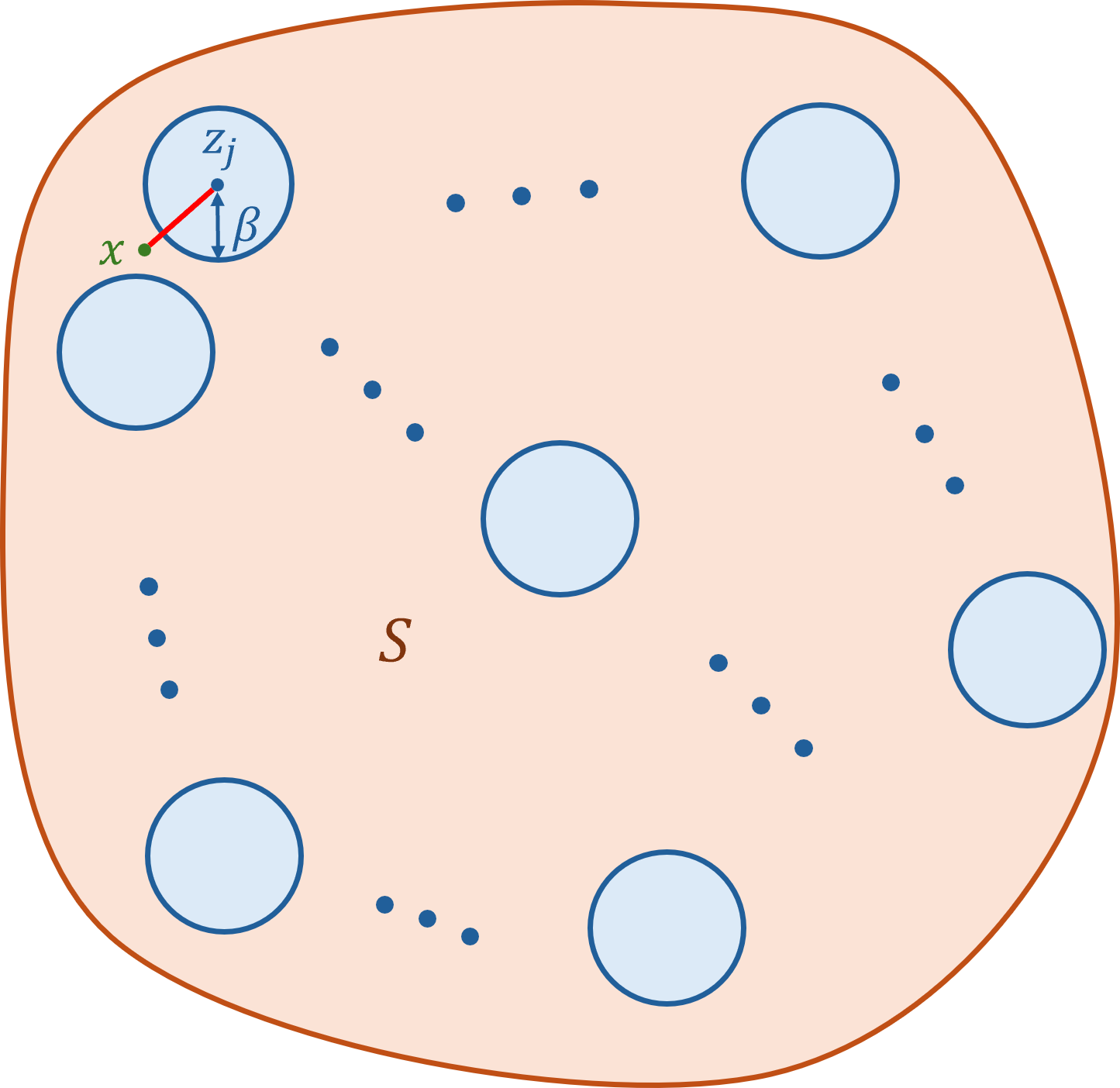}
    \caption{A finite covering of a bounded set.}
    \label{fig:covering}
\end{figure}

\begin{lemma}[Covering number of the reachable set]
\label{lem:covering-number-reachable-set}
Let $(S_0,F,P_0)\in\mathcal F_{R,L,r_0,\rho}$. Then, for every
$h\in(0,e^{LT}r_0]$, 
\[
\mathcal N(\St,h) \le \frac{2^{2n}e^{nLT}R^n}{h^n}.
\]
In particular, $\mathcal N(\St,r/2) \le \frac{2^{3n}e^{nLT}R^n}{r^n}$ whenever $r\in(0,2e^{LT}r_0]$.
\end{lemma}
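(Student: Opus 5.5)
The bound follows from a maximal-packing (volume) argument combined with the local Lebesgue-mass bound of Proposition~\ref{prop:local-lebesgue-mass-propagation}. Fix $h\in(0,e^{LT}r_0]$ and let $\{z_1,\ldots,z_M\}\subseteq\St$ be a maximal $h$-separated subset, so $\|z_i-z_j\|>h$ for $i\neq j$. Since $\St$ is compact (Lemma~\ref{lem:flow-image-closure}), the packing bound below shows that such a set is finite. Maximality gives that every $x\in\St$ lies within distance $h$ of some $z_j$. Hence $\St\subseteq\bigcup_j\cB_h(z_j)$ with centers in $\St$, and $\mathcal N(\St,h)\le M$. It therefore suffices to bound $M$.

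The balls $\cB_{h/2}(z_j)$ are pairwise disjoint by separation. Since $h/2\le e^{LT}r_0$, Proposition~\ref{prop:local-lebesgue-mass-propagation} applies with radius $h/2$ at each $z_j\in\St$, giving
\[
|\cB_{h/2}(z_j)\cap\St|\ge \omega_n 2^{-n}e^{-2nLT}(h/2)^n=\omega_n2^{-2n}e^{-2nLT}h^n.
\]
Summing over $j$ and using disjointness,
\[
M\,\omega_n2^{-2n}e^{-2nLT}h^n\le|\St|.
\]

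Next we need an upper bound on $|\St|$. By the upper Lipschitz estimate in Proposition~\ref{prop:flow-regularity-volume-distortion} and the standard Lipschitz image bound \cite[Theorem 2.8]{evans2025measure},
\[
|\St|=|\varphi(T,\So)|\le e^{nLT}|\So|.
\]
One must also check that $|\So|=|S_0|=\omega_nR^n$, i.e.\ that $\partial S_0$ is Lebesgue-null. I expect this to follow from the rolling-ball structure implied by $\reach(S_0^c)\ge r_0$: sets of positive reach have boundaries of finite $(n-1)$-dimensional measure. Combining the estimates gives
\[
M\le\frac{e^{nLT}\omega_nR^n}{\omega_n2^{-2n}e^{-2nLT}h^n}=\frac{2^{2n}e^{3nLT}R^n}{h^n}.
\]

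This is the main obstacle: the exponent is $e^{3nLT}$, not the claimed $e^{nLT}$. The Lebesgue-mass route loses $e^{2nLT}$ from the local bound and $e^{nLT}$ from the global volume. To reach $e^{nLT}$ I would instead transfer a covering from the initial set. Cover $\So$ by balls of radius $s=e^{-LT}h$ centered in $\So$, and push the centers forward with $\varphi(T,\cdot)$. The Lipschitz estimate sends each $\cB_s(y)\cap\So$ into $\cB_h(\varphi(T,y))$, so $\mathcal N(\St,h)\le\mathcal N(\So,e^{-LT}h)$.

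The initial covering is then bounded by the same packing argument, now carried out in $\So$ using Proposition~\ref{prop:lower-lebesgue-initial-set} at radius $s/2\le r_0$:
\[
\mathcal N(\So,s)\le\frac{\omega_nR^n}{\omega_n2^{-n}(s/2)^n}=\frac{2^{2n}R^n}{s^n}=\frac{2^{2n}e^{nLT}R^n}{h^n}.
\]
This yields the stated bound. The second claim follows by substituting $h=r/2$, which is admissible exactly when $r\le 2e^{LT}r_0$, and gives $2^{3n}e^{nLT}R^n/r^n$.
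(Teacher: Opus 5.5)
Your final argument is correct and is essentially the paper's proof: bound $\mathcal N(\St,h)\le\mathcal N(\So,e^{-LT}h)$ by pushing forward an $e^{-LT}h$-cover through the $e^{LT}$-Lipschitz flow. Then bound $\mathcal N(\So,s)\le 2^{2n}R^n/s^n$ with a maximal $s$-separated set in $\So$, Proposition~\ref{prop:lower-lebesgue-initial-set} at radius $s/2$, and $|\So|=\omega_nR^n$ (the paper also simply asserts the boundary is Lebesgue-null). Your first attempt, a direct packing in $\St$, is valid but only gives $e^{3nLT}$, and you were right to discard it.
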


\begin{proof}
Since the flow map $\varphi(T,\cdot)$ is $e^{LT}$-Lipschitz on $\So$ and
maps $\So$ onto $\St$, every $e^{-LT}h$-cover of $\So$ is mapped to an
$h$-cover of $\St$. Hence,
\[
    \mathcal N(\St,h)
    \le
    \mathcal N(\So,e^{-LT}h).
\]

It remains to bound the covering number of $\So$. Fix
$\beta\in(0,r_0]$, and let
$\{z_1,\ldots,z_M\}\subset\So$ be a maximal $\beta$-separated set, meaning
that $\|z_i-z_j\|>\beta$ whenever $i\neq j$, and that no additional point
of $\So$ can be added while preserving this property.

By maximality, $\{z_1,\ldots,z_M\}$ is a $\beta$-net of $\So$. Indeed, if
there existed $x\in\So$ such that
\[
    x\notin
    \bigcup_{j=1}^M\cB_\beta(z_j),
\]
then $\|x-z_j\|>\beta$ for every $j$, so
$\{z_1,\ldots,z_M,x\}$ would remain $\beta$-separated, contradicting
maximality. Therefore,
\[
    \So
    \subseteq
    \bigcup_{j=1}^M\cB_\beta(z_j),
\]
and consequently $\mathcal N(\So,\beta)\le M$. We now upper bound $M$. Since the points $z_1,\ldots,z_M$ are $\beta$-separated, the balls $\cB_{\beta/2}(z_j)$ are pairwise disjoint up to their boundaries. Hence, the sets
\[
    \cB_{\beta/2}(z_j)\cap\So,
    \qquad
    j=1,\ldots,M,
\]
are also pairwise disjoint. By Proposition~\ref{prop:lower-lebesgue-initial-set}, applied with radius
$\beta/2$, we have
\[
    \left|
        \cB_{\beta/2}(z_j)\cap\So
    \right|
    \ge
    \omega_n2^{-n}
    \left(\frac{\beta}{2}\right)^n
    =
    \omega_n2^{-2n}\beta^n,
    \qquad
    j=1,\ldots,M.
\]
Moreover, the positive-reach assumption implies that $\partial S_0$ has zero $n$-dimensional Lebesgue measure, and hence
$|\So|=|S_0|=\omega_nR^n$. Since the preceding sets are pairwise disjoint subsets of $\So$,
\begin{align*}
    \omega_nR^n &= |\So| \\
    &\ge \sum_{j=1}^M
    \left|
        \cB_{\beta/2}(z_j)\cap\So
    \right|
    \\
    &\ge
    M\omega_n2^{-2n}\beta^n.
\end{align*}
It follows that
\[
    M
    \le
    \frac{2^{2n}R^n}{\beta^n}.
\]
Combining this inequality with $\mathcal N(\So,\beta)\le M$ gives
\[
    \mathcal N(\So,\beta)
    \le
    \frac{2^{2n}R^n}{\beta^n},
    \qquad
    \beta\in(0,r_0].
\]

Finally, take $\beta=e^{-LT}h$. Since $h\le e^{LT}r_0$, we have
$\beta\le r_0$, and therefore
\begin{align*}
    \mathcal N(\St,h)
    &\le
    \mathcal N(\So,e^{-LT}h)
    \\
    &\le
    \frac{2^{2n}R^n}{(e^{-LT}h)^n}
    \\
    &=
    \frac{2^{2n}e^{nLT}R^n}{h^n}.
\end{align*}
Setting $h=r/2$ yields
\[
    \mathcal N(\St,r/2)
    \le
    \frac{2^{3n}e^{nLT}R^n}{r^n}.
\]
\end{proof}

We now prove the sampling upper bound. The proof combines the endpoint local
probability lower bound from
Corollary~\ref{cor:endpoint-local-probability-lower-bound} with a covering
argument. Specifically, we cover $\St$ by an $r/2$-net and show that, with
high probability, every net ball contains at least one endpoint sample.

\noindent\textbf{Theorem~\ref{thm:sample-upper-bound-inner}}
(Sampling upper bound for inner coverage).
Let $(S_0,F,P_0)\in\mathcal F_{R,L,r_0,\rho}$, and let
$Y_1,\ldots,Y_N\overset{\iid}{\sim}P_T$. Suppose that the estimator
$\widehat S_N$ contains all endpoint samples, i.e.,
$Y_i\in\widehat S_N$ for every $i=1,\ldots,N$. Fix
$r\in(0,2e^{LT}r_0]$ and $\delta\in(0,1)$. If
\[
    N
    \ge
    \frac{2^{2n}e^{3nLT}R^n}{\rho r^n}
    \left[
        \log\left(
            \frac{2^{3n}e^{nLT}R^n}{r^n}
        \right)
        +
        \log\frac1\delta
    \right],
\]
then, with probability at least $1-\delta$,
\[
    \sup_{x\in\St}d(x,\widehat S_N)
    \le
    r.
\]

\begin{proof}[Proof of Theorem~\ref{thm:sample-upper-bound-inner}]
By Corollary~\ref{cor:endpoint-local-probability-lower-bound}, $\Gamma_T(s) = \rho\,2^{-n}e^{-3nLT} \left(\frac{s}{R}\right)^n$. Therefore, define
\begin{align*}
    q_r
    &:=
    \Gamma_T(r/2)
    \\
    &=
    \rho\,2^{-n}e^{-3nLT}
    \left(\frac{r}{2R}\right)^n
    \\
    &=
    \rho\,2^{-2n}e^{-3nLT}
    \left(\frac rR\right)^n.
\end{align*}

Let $\{z_1,\ldots,z_M\}\subset\St$ be an $r/2$-net of $\St$. By
Lemma~\ref{lem:covering-number-reachable-set}, we may choose this net so that $M \le \frac{2^{3n}e^{nLT}R^n}{r^n}$. By Corollary~\ref{cor:endpoint-local-probability-lower-bound}, for every
$j=1,\ldots,M$,
\[
    P_T(\cB_{r/2}(z_j)\cap\St)
    \ge
    \Gamma_T(r/2)
    =
    q_r.
\]
Since $P_T$ is supported on $\St$, this is equivalent to
\[
    P_T(\cB_{r/2}(z_j))
    \ge
    q_r.
\]

For each $j=1,\ldots,M$, define the bad event
\[
    E_j
    :=
    \left\{
        \cB_{r/2}(z_j)
        \cap
        \{Y_1,\ldots,Y_N\}
        =
        \varnothing
    \right\}.
\]
Because $Y_1,\ldots,Y_N$ are independent samples from $P_T$,
\begin{align*}
    P_T^N(E_j)
    &=
    \left(
        1-P_T(\cB_{r/2}(z_j))
    \right)^N
    \\
    &\le
    (1-q_r)^N
    \\
    &\le
    e^{-Nq_r}.
\end{align*}
Therefore, by the union bound,
\begin{align*}
    P_T^N\left(
        \bigcup_{j=1}^M E_j
    \right)
    &\le
    \sum_{j=1}^M P_T^N(E_j)
    \\
    &\le
    Me^{-Nq_r}.
\end{align*}

The definition of $q_r$ gives the exact identity
\[
    \frac1{q_r}
    =
    \frac{2^{2n}e^{3nLT}R^n}{\rho r^n}.
\]
Thus, the assumed sample-size condition implies
\begin{align*}
    Nq_r
    &\ge
    \log\left(
        \frac{2^{3n}e^{nLT}R^n}{r^n}
    \right)
    +
    \log\frac1\delta
    \\
    &\ge
    \log M+\log\frac1\delta.
\end{align*}
Consequently,
\[
    Me^{-Nq_r}
    \le
    \delta.
\]
Hence, with probability at least $1-\delta$, none of the events
$E_1,\ldots,E_M$ occurs. Equivalently, every ball centered at a point of
the $r/2$-net contains at least one endpoint sample.

On this event, fix any $x\in\St$. Since
$\{z_1,\ldots,z_M\}$ is an $r/2$-net of $\St$, there exists
$j\in\{1,\ldots,M\}$ such that
\[
    \|x-z_j\|
    \le
    \frac r2.
\]
Since $E_j$ does not occur, there exists an endpoint sample $Y_i$ such that
\[
    \|Y_i-z_j\|
    \le
    \frac r2.
\]
By the sample-containment assumption, $Y_i\in\widehat S_N$. Therefore,
\[
    d(x,\widehat S_N)
    \le
    \|x-Y_i\|
    \le
    \|x-z_j\|+\|z_j-Y_i\|
    \le
    r.
\]
Since this holds for every $x\in\St$, we conclude that
\[
    \sup_{x\in\St}d(x,\widehat S_N)
    \le
    r.
\]
\end{proof}

The preceding theorem controls only the inner directed error from the reachable set to the estimator. We now add the estimator-dependent outer deviation to obtain a full Hausdorff bound.

\noindent\textbf{Corollary~\ref{cor:hausdorff-bound-outer-deviation}}
(Hausdorff bound with outer deviation).
Under the conditions of Theorem~\ref{thm:sample-upper-bound-inner}, suppose in addition that the estimator satisfies the outer-deviation condition
\[
    \widehat S_N\subseteq \cB_{\eta}(S_T)
\]
for some $\eta>0$. Then, with probability at least $1-\delta$,
\[
    d_H(S_T,\widehat S_N) \le \max\{r,\eta\}.
\]

\begin{proof}[Proof of Corollary~\ref{cor:hausdorff-bound-outer-deviation}]
By Theorem~\ref{thm:sample-upper-bound-inner}, with probability at least
$1-\delta$, $\sup_{x\in\St} d(x,\widehat S_N)\le r$. The outer-deviation condition $\widehat S_N\subseteq \cB_\eta(S_T)$ implies $\sup_{z\in\widehat S_N} d(z,S_T)\le \eta$. Therefore, on the same event,
\begin{align*}
    d_H(S_T,\widehat S_N) & = \max \{ \sup_{x\in S_T}d(x,\widehat S_N), \sup_{z\in\widehat S_N}d(z,S_T)\} \\
    &\le \max\{r,\eta\}.
\end{align*}
Thus, with probability at least $1-\delta$,
\[
    d_H(S_T,\widehat S_N)\le \max\{r,\eta\}.
\]
\end{proof}

\textbf{Sample Complexity Lower Bound.} For the lower bound, we construct two admissible punctured-ball domains whose endpoint supports are separated by more than $2r$, while their uniform sampling distributions have a large common component. This reduces Hausdorff support estimation to binary hypothesis testing.

To make the testing reduction independent of the particular two-point construction, we first state it for two generic candidate endpoint supports.

\begin{definition}[Hypothesis test induced by a support estimator]
\label{def:induced-binary-test}
Let $S^{(0)},S^{(1)}\subseteq\mathbb R^n$ be two nonempty compact sets. Given a support estimator $\widehat S_N$, define the induced binary test $\psi=\psi(Y_1,\ldots,Y_N)\in\{0,1\}$ by
\[
    \psi(Y_1,\ldots,Y_N) :=
    \begin{cases}
        0, & d_H(\widehat S_N,S^{(0)}) \le d_H(\widehat S_N,S^{(1)}),
        \\[0.5ex]
        1, & d_H(\widehat S_N,S^{(1)}) < d_H(\widehat S_N,S^{(0)}).
    \end{cases}
\]
\end{definition}

The test selects the candidate support that is closer to the estimated set. If the estimator is accurate and the two candidate supports are separated by more than twice the target accuracy, then the induced test must be correct.

\begin{lemma}[Accurate support estimation induces a correct test]
\label{lem:estimator-induces-test}
Suppose $d_H(S^{(0)},S^{(1)})>2r$, and let $\psi$ be the test in
Definition~\ref{def:induced-binary-test}. Then $\{\psi=1\} \subseteq
\{d_H(\widehat S_N,S^{(0)})>r\}$, and $\{\psi=0\} \subseteq \{d_H(\widehat S_N,S^{(1)})>r\}$.
\end{lemma}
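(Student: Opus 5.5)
The argument is a direct application of the triangle inequality for the Hausdorff distance on nonempty compact sets, together with the tie-breaking rule in Definition~\ref{def:induced-binary-test}. I will prove the contrapositive of each inclusion. In both cases, accuracy $r$ with respect to one hypothesis forces the estimator to be farther than $r$ from the other hypothesis, so the test selects the correct one.

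For the first inclusion, suppose $d_H(\widehat S_N,S^{(0)})\le r$. The triangle inequality
\[
    d_H(S^{(0)},S^{(1)})\le d_H(S^{(0)},\widehat S_N)+d_H(\widehat S_N,S^{(1)})
\]
together with $d_H(S^{(0)},S^{(1)})>2r$ gives $d_H(\widehat S_N,S^{(1)})>2r-r=r\ge d_H(\widehat S_N,S^{(0)})$. Hence the condition $d_H(\widehat S_N,S^{(1)})<d_H(\widehat S_N,S^{(0)})$ fails, so $\psi=0$. Contrapositively, $\psi=1$ implies $d_H(\widehat S_N,S^{(0)})>r$.

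For the second inclusion, suppose $d_H(\widehat S_N,S^{(1)})\le r$. The same triangle inequality gives $d_H(\widehat S_N,S^{(0)})>r\ge d_H(\widehat S_N,S^{(1)})$. The inequality is strict, so the rule in Definition~\ref{def:induced-binary-test} returns $\psi=1$. Contrapositively, $\psi=0$ implies $d_H(\widehat S_N,S^{(1)})>r$. The tie-break assigns equality to $\psi=0$, but this never matters here, because the strict inequality rules out ties.

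The only step needing care is whether the Hausdorff triangle inequality holds when $\widehat S_N$ is an arbitrary set rather than a compact one. The triangle inequality for $d_H$ holds for arbitrary nonempty subsets of $\mR^n$, with values in $[0,\infty]$. If some distance is infinite, the inequalities above still hold in the extended reals. If $\widehat S_N$ is empty, I will adopt the convention $d_H=\infty$ to both candidates. Then $\psi=0$, and $d_H(\widehat S_N,S^{(1)})=\infty>r$ holds trivially. I expect no real obstacle beyond stating these conventions.
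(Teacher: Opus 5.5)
Your proof is correct and follows the paper's argument: take the contrapositive and use the Hausdorff triangle inequality, and obtain the second inclusion the same way (the paper simply exchanges the roles of $S^{(0)}$ and $S^{(1)}$). Your extra remarks on the tie-breaking rule and on noncompact or empty $\widehat S_N$ are harmless refinements that the paper leaves implicit.
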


\begin{proof}
Suppose that $d_H(\widehat S_N,S^{(0)})\le r$. By the triangle inequality for the Hausdorff distance,
\begin{align*}
    d_H(\widehat S_N,S^{(1)}) &\ge d_H(S^{(0)},S^{(1)}) - d_H(\widehat S_N,S^{(0)})
    \\
    &> 2r-r = r.
\end{align*}
Consequently, $d_H(\widehat S_N,S^{(0)}) < d_H(\widehat S_N,S^{(1)})$, and hence $\psi=0$. Taking the contrapositive gives $\{\psi=1\} \subseteq \{d_H(\widehat S_N,S^{(0)})>r\}$. The second inclusion follows by exchanging the roles of $S^{(0)}$ and $S^{(1)}$.
\end{proof}

We next record an elementary consequence of the definition of reach that will be used to verify the admissibility of the two-point construction.

\begin{lemma}[Reach of a separated union]
\label{lem:reach-separated-union}
Let $A,B\subseteq\mathbb R^n$ be nonempty closed sets. Then
\[
    \reach(A\cup B)
    \ge
    \min\left\{
        \reach(A),
        \reach(B),
        \frac12\dist(A,B)
    \right\},
\]
where $\dist(A,B) := \inf\{\|a-b\|:a\in A,\ b\in B\}$.
In particular, if $\reach(A)\ge r_0, \reach(B)\ge r_0, \dist(A,B)\ge 2r_0$, then $\reach(A\cup B)\ge r_0$.
\end{lemma}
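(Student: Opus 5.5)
The plan is to work directly from Definition~\ref{def:reach}. Set $\tau:=\min\{\reach(A),\reach(B),\tfrac12\dist(A,B)\}$. If $\tau=0$ there is nothing to show, so assume $\tau>0$. By the definition of reach as a supremum, it suffices to show that $\Pi_{A\cup B}(x)$ is a singleton for every $x$ with $d(x,A\cup B)<\tau$. Fix such an $x$, and write $d_A:=d(x,A)$, $d_B:=d(x,B)$. Then
\[
d(x,A\cup B)=\min\{d_A,d_B\}.
\]

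The key step is to show that $x$ is near only one of the two sets. Suppose without loss of generality that $d_A\le d_B$, so $d_A<\tau\le\tfrac12\dist(A,B)$. I will show that $d_B>d_A$ strictly. Since $A$ is closed and nonempty, pick $a\in\Pi_A(x)$; this is nonempty because closed sets in $\mR^n$ attain distances. For any $b\in B$,
\[
\|x-b\|\ge\|a-b\|-\|x-a\|\ge \dist(A,B)-d_A\ge 2\tau-d_A>d_A .
\]
Taking the infimum over $b$ gives $d_B\ge 2\tau-d_A$, and this is strictly greater than $d_A$ because $d_A<\tau$. So every nearest point of $x$ in $A\cup B$ lies in $A$, and $\Pi_{A\cup B}(x)=\Pi_A(x)$.

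Finally, since $d_A<\tau\le\reach(A)$, the supremum in the definition of $\reach(A)$ exceeds $d_A$. Hence there is an admissible $r'>d_A$, and for it $\Pi_A(x)$ is a singleton. Therefore $\Pi_{A\cup B}(x)$ is a singleton.

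Since this holds for every $x$ with $d(x,A\cup B)<\tau$, we get $\reach(A\cup B)\ge\tau$. The "in particular" clause follows by substituting the assumed bounds: with $\reach(A),\reach(B)\ge r_0$ and $\dist(A,B)\ge 2r_0$, we have $\tau\ge r_0$.

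The only delicate point is the strict inequality $d_B>d_A$. It relies on $d_A$ being strictly below $\tfrac12\dist(A,B)$, which the strict condition $d(x,A\cup B)<\tau$ provides. If $\dist(A,B)=0$, then $\tau=0$ and the claim is trivial, so that case is already covered.
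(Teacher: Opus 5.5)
Your proof is correct and follows the paper's argument essentially step for step: the triangle inequality $\dist(A,B)\le d(x,A)+d(x,B)$ forces only one of $d(x,A)$, $d(x,B)$ to fall below $\tfrac12\dist(A,B)$, so $\Pi_{A\cup B}(x)=\Pi_A(x)$, which is a singleton because $d(x,A)<\reach(A)$. Your explicit extraction of an admissible radius $r'>d_A$ from the supremum in the definition of reach is a small piece of extra care that the paper leaves implicit.
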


\begin{proof}
Set $s := \min\{ \reach(A), \reach(B), \frac12\dist(A,B)\}$.
If $s=0$, the result is immediate. Suppose therefore that $s>0$. Fix $x\in\mathbb R^n$ such that $d(x,A\cup B)<s$. Since $d(x,A\cup B) = \min\{d(x,A),d(x,B)\}$, at least one of $d(x,A)$ and $d(x,B)$ is strictly smaller than $s$. They cannot both be strictly smaller than $s$. Indeed, if $d(x,A)<s$ and $d(x,B)<s$, then
\[
    \dist(A,B) \le d(x,A)+d(x,B) < 2s \le \dist(A,B),
\]
which is a contradiction. Thus exactly one of the two distances is smaller than $s$. Without loss of generality, suppose that $d(x,A)<s, d(x,B)\ge s$. Since $d(x,A)<s\le\reach(A)$, the projection of $x$ onto $A$ is unique. Let $a_x$ denote this unique projection. Moreover,
\[
    \|x-a_x\|
    =
    d(x,A)
    <
    s
    \le
    d(x,B),
\]
so no point of $B$ can be a nearest point of $x$ in $A\cup B$. Therefore,
\[
    \Pi_{A\cup B}(x)
    =
    \Pi_A(x)
    =
    \{a_x\}.
\]
The case $d(x,B)<s$ is analogous. Hence every point at distance strictly less
than $s$ from $A\cup B$ has a unique nearest point in $A\cup B$. By the
definition of reach,
\[
    \reach(A\cup B)\ge s.
\]
\end{proof}

The following standard testing inequality relates binary testing error to the overlap of two probability measures.

\begin{lemma}[Testing lower bound via overlap]
\label{lem:testing-total-variation}
Let $P$ and $Q$ be probability measures on a measurable space
$(\Omega,\mathcal F)$, and let $\psi:\Omega\to\{0,1\}$ be any measurable binary test. Then
$P(\psi=1)+Q(\psi=0)\ge \int_\Omega \min\{p,q\}\,d\mu$.
Equivalently, if $\mu:=P+Q$ and
\(p:=\frac{dP}{d\mu}\), \(q:=\frac{dQ}{d\mu}\), then
\[
    P(\psi=1)+Q(\psi=0) \ge \int_\Omega \min\{p,q\}\,d\mu,
\]
\end{lemma}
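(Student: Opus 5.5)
The plan is the standard Le Cam-type argument: split $\Omega$ into the set where $p\ge q$ and its complement, and compare the two error probabilities with $\min\{p,q\}$ pointwise.

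First I fix the dominating measure $\mu:=P+Q$. Both $P\ll\mu$ and $Q\ll\mu$ hold, so by Radon--Nikodym the densities $p=dP/d\mu$ and $q=dQ/d\mu$ exist; they are nonnegative and measurable. Let $A:=\{\psi=1\}\in\mathcal F$, which is measurable because $\psi$ is. Then
\begin{align*}
P(\psi=1)+Q(\psi=0)=\int_A p\,d\mu+\int_{A^c} q\,d\mu .
\end{align*}

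On $A$ I use $p\ge\min\{p,q\}$, and on $A^c$ I use $q\ge\min\{p,q\}$. Integrating these pointwise bounds and adding the two pieces gives
\begin{align*}
P(\psi=1)+Q(\psi=0)\ge\int_A\min\{p,q\}\,d\mu+\int_{A^c}\min\{p,q\}\,d\mu=\int_\Omega\min\{p,q\}\,d\mu .
\end{align*}
This is the claimed inequality. I will also note that the bound is attained by the likelihood-ratio test $A=\{p<q\}$, though that is not needed here.

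The only delicate point is well-definedness. The quantity $\int\min\{p,q\}\,d\mu$ does not depend on the choice of dominating measure, because for any $\nu$ dominating both $P$ and $Q$ one has $\min\{dP/d\nu,dQ/d\nu\}=\min\{p,q\}\,d\mu/d\nu$. This makes the first and second formulations in the statement consistent. Otherwise I expect no obstacle; the proof is routine.
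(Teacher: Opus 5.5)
Your proof is correct and is essentially the paper's argument: with $\mu=P+Q$ you write the error sum as $\int_\Omega\bigl(p\mathbf 1_{\{\psi=1\}}+q\mathbf 1_{\{\psi=0\}}\bigr)\,d\mu$ and bound the integrand pointwise by $\min\{p,q\}$. Your extra remarks are also correct but not needed: the quantity does not depend on the dominating measure, and equality holds for the test $\{\psi=1\}=\{p<q\}$.
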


\begin{proof}
Let
\(A_0:=\{\psi=0\}\) and \(A_1:=\{\psi=1\}\). Then
\begin{align*}
    P(\psi=1)+Q(\psi=0)
    &=
    \int_{A_1}p\,d\mu
    +
    \int_{A_0}q\,d\mu
    \\
    &=
    \int_\Omega
    \left[
        p\mathbf 1_{A_1}
        +
        q\mathbf 1_{A_0}
    \right]d\mu.
\end{align*}
At every point,
\(p\mathbf 1_{A_1}+q\mathbf 1_{A_0}\ge\min\{p,q\}\).
Therefore,
\[
    P(\psi=1)+Q(\psi=0)
    \ge
    \int_\Omega\min\{p,q\}\,d\mu.
\]
\end{proof}

We now prove the minimax lower bound by constructing two admissible instances
whose endpoint supports are well separated in Hausdorff distance, while their
endpoint sampling distributions have a large common component.

\noindent\textbf{Theorem~\ref{thm:minimax-hausdorff-lower-bound}}
(Minimax Hausdorff lower bound).
Suppose that $R\ge 4r_0, \rho\in(0,1]$, and let $0<r \le 2^{-\frac{n+1}{n}}e^{LT}r_0, \delta\in\left(0,\frac12\right)$. If
\[
    N
    <
    \frac{e^{nLT}R^n}{2^{n+1}r^n}
    \log\frac{1}{2\delta},
\]
then, for every estimator $\widehat S_N$, there exists an instance $ (S_0,F,P_0)\in\mathcal F_{R,L,r_0,\rho}$
such that
\[
    P_T^N\left(
        d_H(\widehat S_N,\overline{S_T})>r
    \right) > \delta.
\]

\begin{proof}[Proof of Theorem~\ref{thm:minimax-hausdorff-lower-bound}]
Fix an arbitrary estimator $\widehat S_N$. We construct two admissible instances indexed by $\ell\in\{0,1\}$. Because $2e^{-LT}r \le 2^{-1/n}r_0 < r_0$, and because the assumed upper bound on $N$ is strict, continuity with respect to $\alpha$ allows us to choose a sufficiently small $\alpha>0$ such that, with $h:=(2+\alpha)e^{-LT}r$, we have $h<r_0$ and
\[
    N
    <
    \frac{e^{nLT}R^n}
    {2(2+\alpha)^n r^n}
    \log\frac{1}{2\delta}.
    \label{eq:alpha-sample-condition}
\]

Define $\bar R := (R^n+h^n)^{1/n}$. Since $\bar R\ge R\ge4r_0$
and $h<r_0$, we have
\[
    \bar R>2r_0+2h.
\]
Hence there exists $\varepsilon>0$ such that $2r_0+2h+\varepsilon<\bar R$. Let $e_1$ denote the first Euclidean basis vector and define $a_0 :=
    -(h+\frac{\varepsilon}{2})e_1, a_1 := (h+\frac{\varepsilon}{2})e_1$.
Then
\[
    \|a_0-a_1\|
    =
    2h+\varepsilon
    >
    2h,
\]
so the two closed balls $\overline{\mathcal B_h(a_0)}$ and $\overline{\mathcal B_h(a_1)}$ are disjoint. Moreover,
\begin{align*}
    \dist\!\left(
        \overline{\mathcal B_h(a_i)},
        \mathbb R^n\setminus\mathcal B_{\bar R}^{\circ}(0)
    \right)
    &=
    \bar R-\|a_i\|-h
    \\
    &=
    \bar R-2h-\frac{\varepsilon}{2}
    \\
    &>
    2r_0,
\end{align*}
for $i=0,1$. Define $S_0^{(0)} := \mathcal B_{\bar R}^{\circ}(0) \setminus \overline{\mathcal B_h(a_0)}$, and $S_0^{(1)} := \mathcal B_{\bar R}^{\circ}(0) \setminus \overline{\mathcal B_h(a_1)}$. Both sets are bounded and open. Furthermore,
\begin{align*}
    |S_0^{(i)}|
    &= \omega_n\bar R^n-\omega_nh^n \\
    &= \omega_nR^n,
    \qquad i=0,1.
\end{align*}

We next verify the reach condition. Write $A := \mathbb R^n\setminus\mathcal B_{\bar R}^{\circ}(0), B_i := \overline{\mathcal B_h(a_i)}$. Then $(S_0^{(i)})^c=A\cup B_i$.
The set $B_i$ is closed and convex, and therefore $\reach(B_i)=+\infty$. Moreover, $\reach(A)=\bar R$ and $\dist(A,B_i)>2r_0$. Since $\bar R\ge r_0$, Lemma~\ref{lem:reach-separated-union} gives
\begin{align*}
    \reach((S_0^{(i)})^c)
    &= \reach(A\cup B_i)\\
    &\ge \min\left\{ \bar R, +\infty, \frac12\dist(A,B_i)
    \right\}
    \\
    &\ge
    r_0.
\end{align*}

Let $P_0^{(i)} := \operatorname{Unif}(S_0^{(i)}), i=0,1.$ Its density is $p_0^{(i)}(x) = \frac{1}{\omega_nR^n} \mathbf 1_{S_0^{(i)}}(x)$. Since $\rho\in(0,1]$,
\[
    p_0^{(i)}(x) \ge \frac{\rho}{\omega_nR^n} = \frac{\rho}{|S_0^{(i)}|}
\]
for almost every $x\in S_0^{(i)}$. Hence both sampling laws satisfy the density condition in Definition~\ref{def:problem_family}.

For both instances, use the vector field $F(x)=Lx$, whose flow is $\varphi(T,x)=e^{LT}x$. This vector field is $L$-Lipschitz. Define the endpoint supports $\overline{S_T^{(i)}} := \varphi(T,\overline{S_0^{(i)}}), i=0,1$. The flow maps each initial hole of radius $h$ to a hole of radius $e^{LT}h = (2+\alpha)r$.
Let $y_0:=e^{LT}a_0$. Because the two initial holes are disjoint, $a_0\in S_0^{(1)}$. Hence $y_0\in\overline{S_T^{(1)}}$. On the other hand, $d(a_0,\overline{S_0^{(0)}})=h$, and therefore $d(y_0,\overline{S_T^{(0)}}) = e^{LT}h = (2+\alpha)r$.
It follows that
\begin{align*}
    d_H(\overline{S_T^{(0)}},\overline{S_T^{(1)}})
    & \ge d(y_0, \overline{S_T^{(0)}}) \\
    & =(2+\alpha)r \\
    & >2r.
\end{align*}

It remains to compare the two endpoint sampling distributions. Since the two
holes are disjoint,
\begin{align*}
    |S_0^{(0)}\cap S_0^{(1)}|
    &=
    \omega_n\bar R^n
    -
    2\omega_nh^n
    \\
    &=
    \omega_nR^n-\omega_nh^n.
\end{align*}
Consequently,
\[
    \frac{|S_0^{(0)}\cap S_0^{(1)}|}
    {|S_0^{(i)}|}
    =
    1-\left(\frac hR\right)^n.
    \label{eq:one-sample-overlap}
\]

Let $A_i:=\varphi(T,S_0^{(i)})$. Because $\varphi(T,x)=e^{LT}x$, the endpoint distribution $P_T^{(i)}$ is uniform on $A_i$, and $|A_i| = e^{nLT}\omega_nR^n$. Moreover, $|A_0\cap A_1| = e^{nLT}|S_0^{(0)}\cap S_0^{(1)}|$. Thus the one-sample overlap coefficient is
\begin{align*}
    \int
    \min\{dP_T^{(0)},dP_T^{(1)}\}
    & = \frac{|A_0\cap A_1|}{|A_i|} \\
    & = 1-\left(\frac hR\right)^n.
\end{align*}
For $N$ independent endpoint samples, the product densities are
\[
    p_{T,N}^{(i)}(y_1,\ldots,y_N)
    =
    |A_i|^{-N}
    \mathbf 1_{A_i^N}(y_1,\ldots,y_N).
\]
Since the two product densities have the same constant value on their
respective supports,
\begin{align*}
    \int
    \min\{
        d(P_T^{(0)})^N,
        d(P_T^{(1)})^N
    \}
    &=
    \frac{|(A_0\cap A_1)^N|}{|A_i|^N}
    \\
    &=
    \left[
        1-\left(\frac hR\right)^n
    \right]^N.
\end{align*}
Let $\psi$ be the test induced by $\widehat S_N$ according to
Definition~\ref{def:induced-binary-test}. Lemma~\ref{lem:testing-total-variation}
gives
\begin{align*}
    (P_T^{(0)})^N(\psi=1)
    +
    (P_T^{(1)})^N(\psi=0)
    &\ge
    \int
    \min\{
        d(P_T^{(0)})^N,
        d(P_T^{(1)})^N
    \}
    \\
    &=
    \left[
        1-\left(\frac hR\right)^n
    \right]^N.
\end{align*}
Since $d_H(\overline{S_T^{(0)}}, \overline{S_T^{(1)}}) > 2r$,
Lemma~\ref{lem:estimator-induces-test} implies
\[
    \{\psi=1\}
    \subseteq
    \left\{
        d_H(
            \widehat S_N,
            \overline{S_T^{(0)}}
        )>r
    \right\},
\]
and
\[
    \{\psi=0\}
    \subseteq
    \left\{
        d_H(
            \widehat S_N,
            \overline{S_T^{(1)}}
        )>r
    \right\}.
\]
Therefore,
\begin{align*}
    &
    \sup_{\ell\in\{0,1\}}
    (P_T^{(\ell)})^N
    \left(
        d_H(
            \widehat S_N,
            \overline{S_T^{(\ell)}}
        )>r
    \right)
    \\
    &\qquad\ge
    \frac12
    \left[
        (P_T^{(0)})^N(\psi=1)
        +
        (P_T^{(1)})^N(\psi=0)
    \right]
    \\
    &\qquad\ge
    \frac12
    \left[
        1-\left(\frac hR\right)^n
    \right]^N.
\end{align*}

Since $h<r_0\le\frac R4$, we have $\left(\frac hR\right)^n < 4^{-n} \le \frac12$. Using $\log(1-u)\ge-2u, 0\le u\le\frac12$, we obtain
\begin{align*}
    \left[
        1-\left(\frac hR\right)^n
    \right]^N
    &=
    \exp\left(
        N\log\left[
            1-\left(\frac hR\right)^n
        \right]
    \right)
    \\
    &\ge
    \exp\left(
        -2N\left(\frac hR\right)^n
    \right).
\end{align*}
Hence
\begin{align*}
    &
    \sup_{\ell\in\{0,1\}}
    (P_T^{(\ell)})^N
    \left(
        d_H(
            \widehat S_N,
            \overline{S_T^{(\ell)}}
        )>r
    \right)
    \\
    &\qquad\ge
    \frac12
    \exp\left(
        -2N\left(\frac hR\right)^n
    \right)
    \\
    &\qquad=
    \frac12
    \exp\left(
        -2N(2+\alpha)^n e^{-nLT}
        \left(\frac rR\right)^n
    \right).
\end{align*}

By~\eqref{eq:alpha-sample-condition}, $2N(2+\alpha)^n e^{-nLT} \left(\frac rR\right)^n < \log\frac{1}{2\delta}$. Therefore,
\begin{align*}
    &
    \sup_{\ell\in\{0,1\}}
    (P_T^{(\ell)})^N
    \left(
        d_H(
            \widehat S_N,
            \overline{S_T^{(\ell)}}
        )>r
    \right)
    \\
    &\qquad>
    \frac12
    \exp\left(
        -\log\frac{1}{2\delta}
    \right)
    =
    \delta.
\end{align*}

Thus at least one of the two admissible instances satisfies
\[
    P_T^N\left(
        d_H(\widehat S_N,\overline{S_T})>r
    \right)
    >
    \delta.
\]
This proves the result.
\end{proof}

\begin{remark}[Worst-case and instance-dependent time dependence]
A natural question is whether the exponential-in-time factor in
Theorem~\ref{thm:minimax-hausdorff-lower-bound} is only a pathological worst-case effect. The minimax result is a uniform impossibility statement: it does not imply that every fixed instance requires sample complexity scaling as $e^{nLT}r^{-n}$, where $L$ is a global Lipschitz constant and $T$ is the fixed
time horizon. For a fixed instance, the relevant quantity is the smallest endpoint mass of an $r$-scale neighborhood,
\[
    \gamma_T(r)
    :=
    \inf_{y\in \St}
    P_T\bigl(\cB_r(y)\cap \St\bigr),
\]
and the effective sample complexity is governed by $1/\gamma_T(r)$.

The lower-bound construction shows that exponential degradation can occur when the dynamics expands small initial neighborhoods before time $T$. In the hard instance used in the proof, an endpoint ball of radius $r$ has a preimage with radius on the order of $e^{-LT}r$. Consequently, its initial probability mass is on the order of $e^{-nLT}r^n$, which leads to the lower-bound scaling
$N\gtrsim e^{nLT}r^{-n}$ up to constants.

For a general nonlinear system $\dot x=F(x)$, the corresponding
instance-dependent quantity is controlled by the derivative of the flow map $D_x\varphi(T,x)$. If the flow expands volume over the relevant region, then small endpoint neighborhoods have small preimages under the inverse flow. Writing
\[
    \Lambda_T
    :=
    \sup_{x\in S_0}
    \log \left|\det D_x\varphi(T,x)\right|
\]
as an effective finite-time volume-expansion exponent over the relevant region, one expects an $r$-scale endpoint neighborhood to have mass of order $\gamma_T(r) \asymp e^{-\Lambda_T} r^n$,\footnote{$A\lesssim B$ means $A\le cB$, $A\gtrsim B$ means $A\ge cB$, and $A\asymp B$ means both hold, for constants independent of the scale parameter.} up to density, anisotropy, and curvature-dependent constants. Thus an instance-dependent analysis can replace the worst-case exponent $nLT$ by a smaller effective volume-growth exponent $\Lambda_T$, giving the heuristic sample requirement $N\gtrsim e^{\Lambda_T}r^{-n}$. In the isotropic case $\varphi(T,x)\approx e^{\lambda T}x$, we have $\Lambda_T=n\lambda T$, recovering the scaling $e^{n\lambda T}r^{-n}$. Hence, while the minimax theorem uses the worst-case global Lipschitz expansion $nLT$, persistent positive local volume expansion can still create an exponential-in-time sampling burden for Hausdorff-accurate reachable-set recovery.
\end{remark}

\section{Experiments Details}
\label{app:additional_experiments}

This section provides additional implementation details and experimental results. We first describe the simulation platform, hardware configuration,
robot-arm dynamics, parameter settings, and the estimator based on Christoffel functions used in the supplementary experiments. We then report additional
results on dimension-dependent approximation of robot-arm uncertainty propagation, followed by reachable-set approximation results obtained using the Christoffel estimator.

\subsection{Experimental Setup}
\label{app:experimental_setup_additional_results}

\paragraph{Simulation Platform and Hardware.}
All simulations are run on a 3.2 GHz AMD Ryzen 7 7735HS CPU with 16 GB RAM. The robot-arm simulations are implemented in MuJoCo 3.8.1. We use deterministic MuJoCo MJCF models with a fixed integration time step of $2\times 10^{-3}$ seconds. The approximation error is computed offline from simulated terminal samples. In the main experiments in Section~\ref{sec:exp}, we use the convex hull of the sampled endpoints as the reachable-set estimator.

\paragraph{Dynamics and Parameter Settings of Robotic Arms.}
We consider vertical planar serial $n$-link robot arms with $n\in\{2,3,4\}$. The state is $x=[q^\top,v^\top]^\top\in\mathbb{R}^{2n}$, where $q\in\mathbb{R}^n$ denotes joint angles and $v=\dot q\in\mathbb{R}^n$ denotes joint velocities. Each link has length $0.5$, capsule radius $0.035$, density $1000$, joint damping $0.2$, and armature $0.01$. Gravity is enabled with acceleration $(0,0,-9.81)$. MuJoCo simulates the rigid-body dynamics
\begin{align*}
M(q)\dot v + C(q,v)v + g(q) = \tau ,
\end{align*}
where $\tau\in\mathbb{R}^n$ is the vector of joint torques. For the simulation, we consider a non-adaptive inverse-dynamics tracking controller. The desired reference trajectory is $q_{d,i}(t) = q_{c,i} + A_i \sin(\omega t+\phi_i),$ with $q_c = \mathrm{linspace}(0.25,0.65,n),\, A_i=0.08,\, \omega=0.5,\, \phi=\mathrm{linspace}(0,\pi/3,n)$. The analytic derivatives $\dot q_d(t)$ and $\ddot q_d(t)$ are used in the controller. Let $e=q-q_d(t)$ and $\dot e=v-\dot q_d(t)$. MuJoCo inverse dynamics is used to compute \[ \tau_{\rm track} = M(q)\ddot q_d(t)+C(q,v)v+g(q). \] The applied torque is \[ \tau = \tau_{\rm track}-K_p e-K_d\dot e . \] We use weak tracking gains \[ K_p=0.01I_n,\qquad K_d=0.005I_n, \] and clip torques to $[-100,100]$ for numerical stability. The weak gains are chosen deliberately: the benchmark is designed for uncertainty propagation and finite-sample reachable-set approximation, rather than aggressive trajectory tracking or controller design.

The initial uncertainty set is the box $S_0=([-\rho_q,\rho_q]^n\times[-\rho_v,\rho_v]^n) \circ \cB(0, r)$ with $\rho_q=\rho_v=0.1, r=0.01$. Each trajectory is propagated to time $T=1.0$ second, and the terminal propagated uncertainty set is approximated from sampled initial conditions. 

\paragraph{Estimator using Christoffel function.}

To further validate the geometric implications of our theory, we use the empirical inverse Christoffel function as an independent verification tool~\citep{devonport2021data}. Given endpoint samples $\{Y_i\}_{i=1}^N$, we construct
\begin{align*}
\widehat M_{m,\sigma_0} = \sigma_0^2 I+\frac{1}{N}\sum_{i=1}^N z_m(Y_i)z_m(Y_i)^\top,
\end{align*}
where $z_m$ is the vector of monomials of degree at most $m$, and define
\begin{align*}
C(y)=z_m(y)^\top \widehat M_{m,\sigma_0}^{-1}z_m(y).
\end{align*}
Following the polynomial Christoffel estimator in~\citep[Algorithm 3]{devonport2023data}, we use the sublevel set $\widehat S_N^{\rm Ch}
=
\{y:C(y)\le \eta\}$
as a data-driven certificate of support coverage. In our setting, this construction is not used as a competing reachable-set estimator, but rather as an auxiliary verification procedure: if the endpoint samples generated under
our sampling scheme are sufficiently informative, the resulting Christoffel sublevel set should concentrate around the reachable set and provide an independent check of the predicted finite-sample behavior.

\subsection{Adversarial Sampling Details}
\label{app:adversarial-sampling}
For completeness, we describe the adversarial sampling heuristic over $S_0$ used in
the experiments to improve empirical coverage ~\citep{goodfellow2015explaining,dong2018boosting}. Given the current endpoint cloud $\mathcal Y^i$, define $c^i := |\mathcal Y^i|^{-1}\sum_{Y\in\mathcal Y^i}Y$ and $Q^i := [ \frac{1}{|\mathcal Y^i|-1}\sum_{Y\in\mathcal Y^i}(Y-c^i)(Y-c^i)^\top +\lambda I]^{-1}$, where $\lambda>0$ is a regularization parameter. We then maximize the novelty objective $\mathcal L^i(x):=\|\varphi(T,x)-c^i\|_{Q^i}^2$, 

\noindent
\begin{algorithm}[H]
\caption{Reachable Set Estimation via Adversarial Sampling}
\label{alg:adv_reachable_set_estimation}
\small
\begin{algorithmic}[1]
\State \textbf{Input:} Initial samples $\{X_j^0\}_{j=1}^M\subset S_0$
\State \textbf{Parameters:} Stepsize $\eta$, iterations $n_{\rm adv}$
\State \textbf{Output:} Estimate $\widehat S_N^{\rm adv}$
\State $Y_j^0\gets \varphi(T,X_j^0),\quad j=1,\ldots,M$
\State $\mathcal Y^0\gets \{Y_j^0\}_{j=1}^M$
\For{$i=0,\ldots,n_{\rm adv}-1$}
    \State Compute $c^i,Q^i$ from $\mathcal Y^i$
    \For{$j=1,\ldots,M$}
        \State $X_j^{i+1}
        \gets
        \mathrm{Proj}_{S_0}
        \bigl(X_j^i+\eta\nabla_x\mathcal L^i(X_j^i)\bigr)$
        \State $Y_j^{i+1}\gets \varphi(T,X_j^{i+1})$
    \EndFor
    \State $\mathcal Y^{i+1}\gets
    \mathcal Y^i\cup\{Y_j^{i+1}\}_{j=1}^M$
\EndFor
\State $N\gets M(n_{\rm adv}+1)$
\State \Return $\widehat S_N^{\rm adv}=\mathrm C(\mathcal Y^{n_{\rm adv}})$
\end{algorithmic}
\end{algorithm}
$x\in S_0$, which encourages new samples to generate endpoints far from the current cloud after covariance normalization~\citep{lew2021sampling}. Thus, the procedure allocates samples toward geometrically underexplored regions of the reachable set. At each iteration, the current endpoint cloud is updated along $\nabla_x\mathcal L^i$, projected back to $S_0$, and propagated through the dynamics. The newly generated endpoints are
then added to the sample cloud, and the final estimator is applied to all accumulated endpoints. For more details, please refer to \citep{lew2021sampling}.

\subsection{Additional Results}
\label{app:additional_results}

\paragraph{The necessity of a uniform lower bound on the probability density.}
\label{app:role-density-lower-bound}
The condition $\supp(P_0)=\overline{S_0}$ guarantees that every nonempty open subset of $S_0$ has positive probability, but it gives no uniform quantitative lower bound on its mass. Consequently, regions where the density becomes small may remain poorly sampled even when the sample size is large. The assumption
$p_0(x)\ge\rho/(\omega_nR^n)$ excludes this degeneracy, since
\[
    P_0(A)\ge \rho |A|/|S_0|
\]
for every measurable $A\subseteq S_0$. Together with the local thickness of $S_0$ and the flow-distortion bounds, this yields the uniform local probability estimate used in Theorem~\ref{thm:sample-upper-bound-inner}.

To illustrate the role of this assumption, consider the globally Lipschitz linear system $\dot x=2x,\ \dot y=0$ with $T=0.5$. The initial set is the unit disk $S_0=\{(x,y)\in\mathbb R^2:r(x,y)<1\}$ centered at $(1,0)$, where $r(x,y):=\sqrt{(x-1)^2+y^2}$. For $\beta\in\{0,2,4\}$, initial states are sampled from
\[
    p_\beta(x,y) = \frac{(1-r(x,y))^\beta}{Z_\beta}
    \mathbf 1_{\{r(x,y)<1\}},
    \qquad
    Z_\beta = \frac{2\pi}{(\beta+1)(\beta+2)}.
\]
All three distributions satisfy $\supp(P_\beta)=\overline{S_0}$. The case $\beta=0$ is uniform on $S_0$, whereas for $\beta=2$ and $\beta=4$, $p_\beta(x,y)\to0$ as $r(x,y)\rightarrow 1$, and hence
$\inf_{(x,y)\in S_0}p_\beta(x,y)=0$.

The exact reachable set at time $T$ is the ellipse $S_T=\{(X,Y):(X-e^{2T})^2/e^{4T}+Y^2\le1\}$. For $N\in\{10,30,100,300,10^3,3\times 10^3,10^4,3\times 10^4,10^5,3\times10^5, 10^6\}$, we propagate the samples to time $T$ and approximate $S_T$ using convex-hull, fixed-radius packing-ball, and Christoffel estimators. Figure~\ref{fig:density-lower-bound-ablation} reports the mean symmetric Hausdorff error over $50$ independent trials, with shaded regions showing the $5$--$95$ percentile range.

\begin{figure}[htbp]
    \centering
    \includegraphics[width=\linewidth]{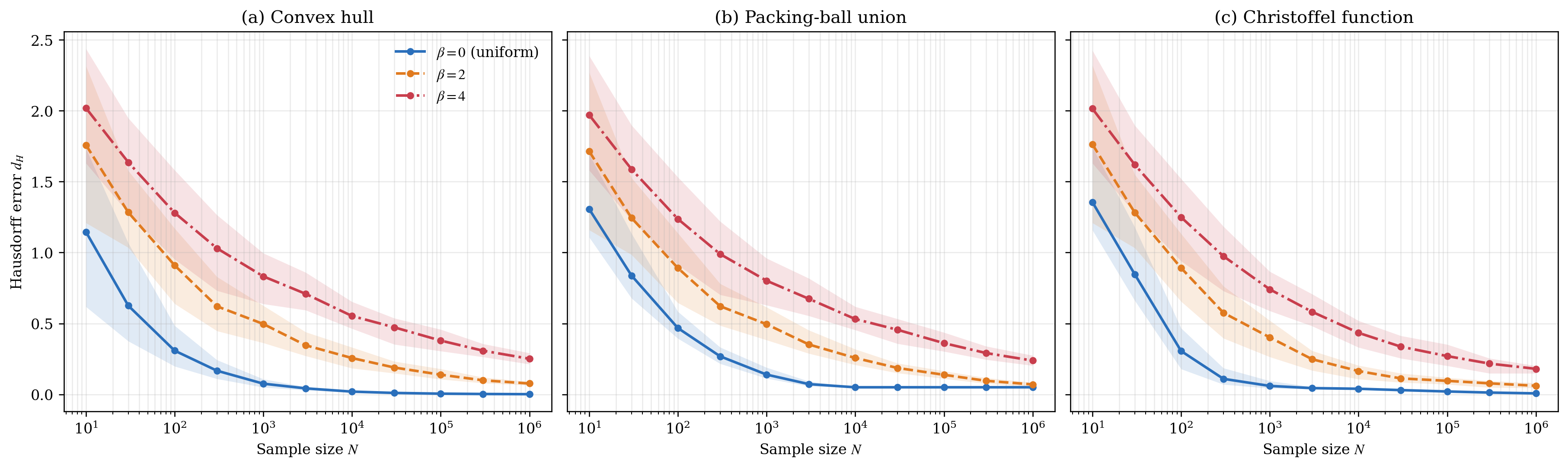}
    \caption{
        Reachable-set approximation error for sampling distributions with the same support but different rates of density decay near the boundary. Curves show the mean symmetric Hausdorff error, and shaded regions show the $5$--$95$ percentile range over $50$ independent trials. The packing-ball radius is fixed at $h=0.05$ for every sample size and density.
    }
    \label{fig:density-lower-bound-ablation}
\end{figure}

The results show that full support alone does not provide uniform
finite-sample coverage. At $N=10^6$, the mean Christoffel errors are
approximately $0.0068$, $0.0605$, and $0.1800$ for $\beta=0$, $2$, and $4$, respectively. For $\beta=4$, the error decreases only from approximately $0.434$ at $N=10^4$ to $0.270$ at $N=10^5$ and $0.180$ at $N=10^6$. Thus, super-exponentially increasing large sample budgets are required to resolve the boundary as the density vanishes more rapidly. The packing estimator exhibits an additional estimator-dependent effect. For the uniform distribution, its error approaches the fixed radius $h=0.05$ because the union of radius-$h$ balls extends beyond the true boundary. This plateau is an outer-approximation effect rather than a sampling-coverage failure. In contrast, the larger errors for $\beta=2$ and $\beta=4$ are dominated by insufficient observations near the boundary. These results distinguish the role of the density lower bound in controlling inner coverage from estimator-dependent outer deviation.

\paragraph{Fitted Dimension-Dependent Slopes.} To summarize the dimension-dependent trend in Table~\ref{tab:robotarm-slopes}, we fit the magnitude of the empirical log--log slope as a function of the state dimension $n$ using 
\[ 
m(n)=\frac{1}{a n^b+c}, \qquad \text{slope}(n)\approx -m(n). 
\] 
Since the fitted curves are obtained from plots of $\log d_H$ versus $\log N$, they can also be interpreted as empirical sample-complexity exponents. Indeed, if \[ \log d_H \approx \alpha(n)-m(n)\log N,\] then $d_H \approx A_n N^{-m(n)}$, where $A_n = e^{\alpha(n)}$ is the dimension-dependent prefactor in the empirical scaling law; equivalently, it is the value predicted by the fitted power law at $N=1$. In practice, $A_n$ should be interpreted as an intercept parameter rather than as a reliable one-sample approximation error, since the fit is obtained over a finite range of sample sizes. This prefactor captures effects not represented by the slope alone, including the geometric scale of the reachable set, flow-induced expansion, sampling density, and estimator bias. Empirically, $A_n$ often increases with the state dimension, which further amplifies the sample requirement. This interpretation is consistent with classical support- and level-set estimation results, where Hausdorff-type recovery rates and covering complexities depend explicitly on the ambient or intrinsic dimension~\cite{rodriguez2022data}. Thus achieving accuracy $d_H\le r$ requires, up to dimension-dependent constants, $N \gtrsim \left(\frac{A_n}{r}\right)^{1/m(n)} = \left(\frac{A_n}{r}\right)^{a n^b+c}$. Therefore, the fitted denominator $a n^b+c$ represents the empirical exponent with which the sample budget must grow as the target accuracy $r$ decreases.

The corresponding parameters and fitted curves are shown in Table~\ref{tab:robotarm-slope-fit} and Figure~\ref{fig:robotarm-slope-fit}, respectively.

\begin{table}[htbp] 
\centering 
\caption{Fitted parameters for the empirical slope model $m(n)=1/(a n^b+c)$, where $n$ is the state dimension and $\text{slope}(n)\approx -m(n)$. The implied sample-complexity exponent is $1/m(n)=a n^b+c$.} \label{tab:robotarm-slope-fit} \begin{tabular}{lccc} 
\toprule 
Sampling method & $a$ & $b$ & $c$ \\ 
\midrule Uniform sampling & 0.5049 & 1.0709 & 1.3353 \\ Adversarial sampling & 0.9488 & 0.7203 & 0.2535 \\ 
\bottomrule 
\end{tabular} 
\end{table}

\begin{figure}[htbp] \centering \includegraphics[width=0.7\linewidth]{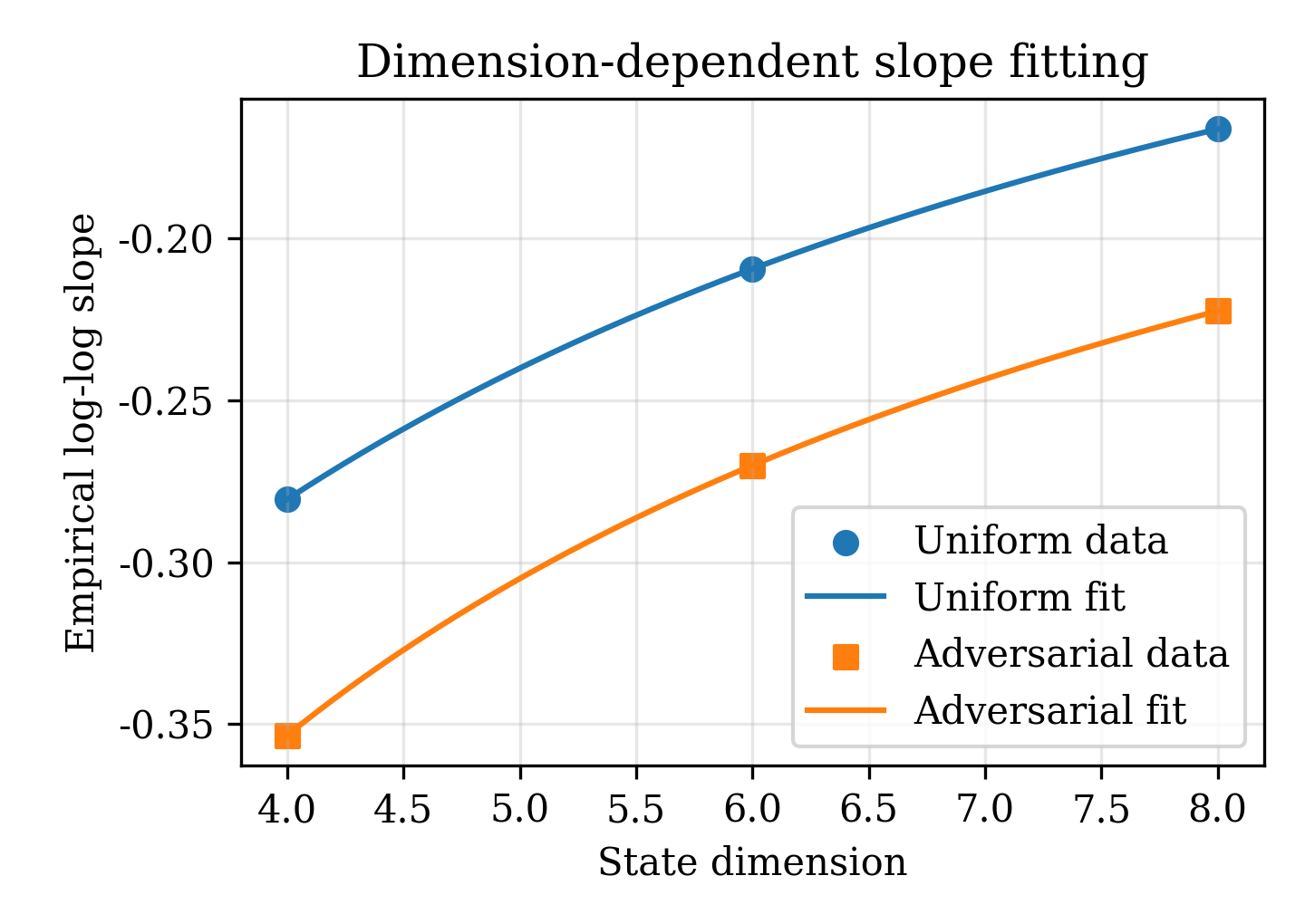} \caption{Fitted dimension-dependent empirical slopes for robot-arm uncertainty propagation. The fitted model captures the flattening of the log--log convergence slopes as the state dimension increases.} \label{fig:robotarm-slope-fit} \end{figure}

Substituting the fitted parameters in Table~\ref{tab:robotarm-slope-fit} into the relation $ N \gtrsim \left(\frac{A_n}{r}\right)^{a n^b+c} $ shows that the sample budget grows as \[ 
N_{\rm unif} \gtrsim \left(\frac{A_n}{r}\right)^{0.5049\,n^{1.0709}+1.3353}, N_{\rm adv} \gtrsim \left(\frac{A_n}{r}\right)^{0.9488\,n^{0.7203}+0.2535}. \] 
Equivalently, taking logarithms on both sides yields
\[ 
\log N_{\rm unif} \gtrsim \bigl(0.5049\,n^{1.0709}+1.3353\bigr) \log\frac{A_n}{r}, \qquad \log N_{\rm adv} \gtrsim \bigl(0.9488\,n^{0.7203}+0.2535\bigr) \log\frac{A_n}{r}. 
\] 
Thus the empirical sample requirement does not grow only logarithmically with dimension. Both uniform and adversarial sampling exhibit an exponential-type dependence on the state dimension through the exponent multiplying $\log(A_n/r)$. Adversarial sampling reduces the fitted exponent, but the exponent still increases with $n$, indicating that targeted sampling improves finite-sample efficiency without eliminating the intrinsic curse of dimensionality.

\paragraph{Sample-Budget-Dependent Improvement of Adversarial Sampling.} Section~\ref{sec:exp} reports the main dimension-dependent robot-arm results. Here we provide an additional comparison showing how the benefit of adversarial sampling depends on the sample budget. Table~\ref{tab:robotarm-adv-improvement-combined} reports the absolute and relative improvement of adversarial sampling over uniform sampling for the $2$-, $3$-, and $4$-link robot arms. 

The advantage of adversarial sampling is not uniform across all sample sizes. When $N$ is very small, adversarial updates may over-concentrate samples along a few extreme directions and reduce global coverage. As the sample budget grows, the sampler has enough coverage to exploit informative directions, and the relative improvement becomes positive and more stable. At $N=3000$, the relative improvements are approximately $39.09\%$, $35.56\%$, and $30.58\%$ for the $2$-, $3$-, and $4$-link arms, respectively. 

\begin{table}[htbp] 
\centering 
\caption{Absolute (Abs.) and relative (Rel.) improvement of adversarial sampling over uniform sampling for robot-arm uncertainty propagation.} \label{tab:robotarm-adv-improvement-combined} \begin{tabular}{rcc cc cc}
\toprule & \multicolumn{2}{c}{$n=2$} & \multicolumn{2}{c}{$n=3$} & \multicolumn{2}{c}{$n=4$} \\ 
\cmidrule(lr){2-3} \cmidrule(lr){4-5} \cmidrule(lr){6-7} $N$ & Abs. & Rel. (\%) & Abs. & Rel. (\%) & Abs. & Rel. (\%) \\ 
\midrule 1 & -0.0117 & -3.35 & 0.0000 & 0.01 & -0.0098 & -2.15 \\ 
3 & 0.0236 & 8.27 & -0.0157 & -4.88 & -0.0126 & -3.47 \\ 
10 & 0.0356 & 17.79 & 0.0122 & 4.93 & 0.0011 & 0.37 \\ 
30 & 0.0365 & 25.86 & 0.0374 & 18.01 & 0.0250 & 10.26 \\ 
100 & 0.0397 & 36.72 & 0.0316 & 20.70 & 0.0386 & 18.84 \\ 
300 & 0.0317 & 40.10 & 0.0307 & 25.35 & 0.0462 & 25.90 \\ 
1000 & 0.0226 & 41.17 & 0.0290 & 30.12 & 0.0371 & 26.68 \\ 
3000 & 0.0144 & 39.09 & 0.0280 & 35.56 & 0.0355 & 30.58 \\ 
\bottomrule 
\end{tabular} 
\end{table}

\paragraph{Time dependence under closed-loop control.}
Figure~\ref{fig:robotarm-time-sweep} shows that, under the closed-loop
robot-arm dynamics, the Hausdorff error grows only slowly with the simulation
horizon $T$. This contrasts with the non-Lipschitz and Linear system examples in
Section~\ref{subsec:adv_non_lipschitz}, where flow expansion strongly amplifies
sampling errors. In the robot-arm setting, the inverse-dynamics tracking
controller reduce the effective expansion of the uncertainty set,
so the observed time dependence is much milder than the worst-case exponential
growth predicted by the minimax bound.

\begin{figure}[htbp]
    \centering
    \includegraphics[width=0.6\linewidth]{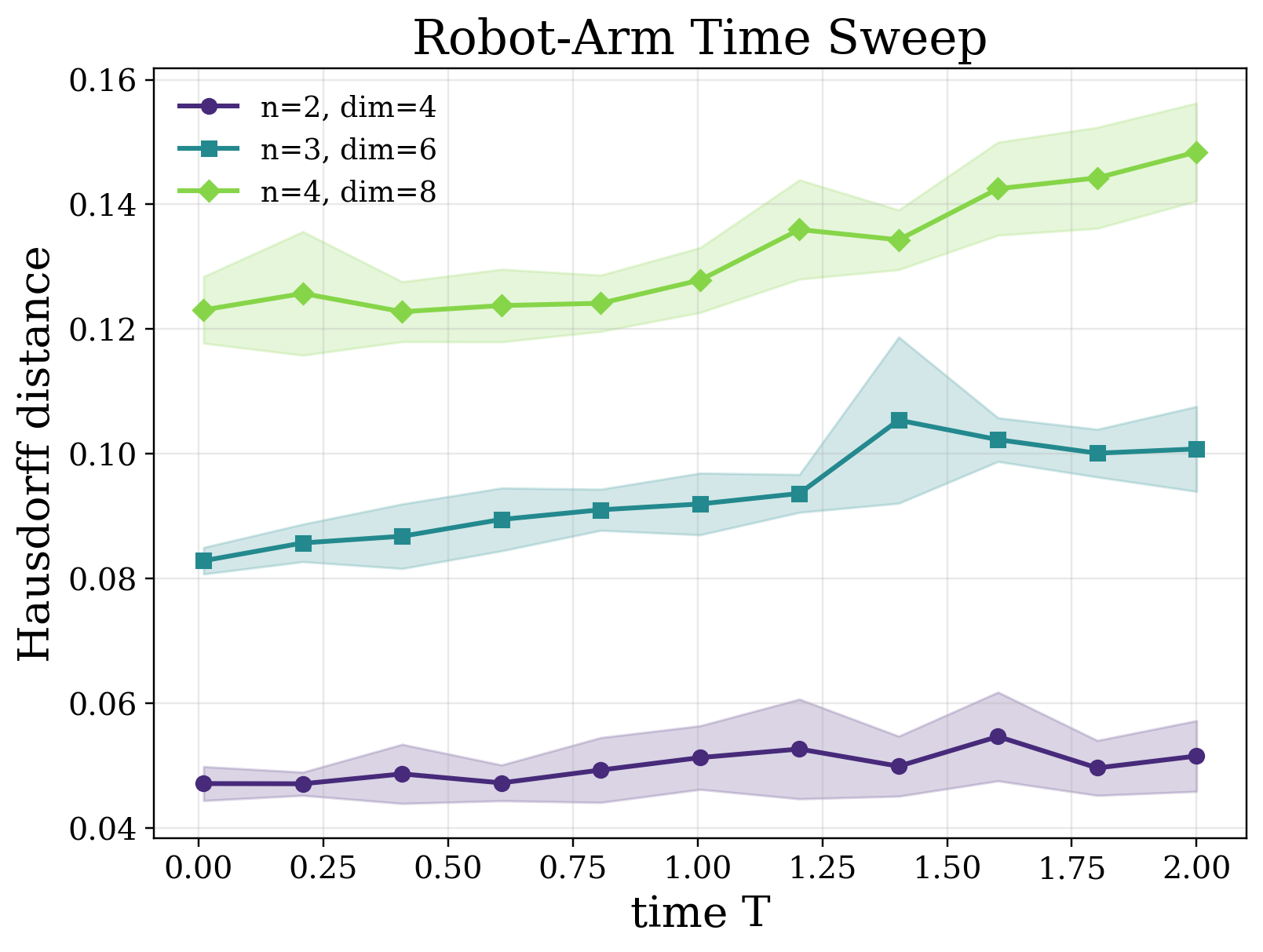}
    \caption{Hausdorff error versus time under uniform sampling}
    \label{fig:robotarm-time-sweep}
\end{figure}

\paragraph{Effect of Adversarial Sampling Intensity Across Estimators.}

We further study how the adversarial intensity of the sampling procedure affects reachable-set approximation under different downstream estimators. We repeat the time-sweep experiment for the autonomous system $\dot y=0,\dot x=x^2$ using the same three initial sets as in Section~\ref{subsec:adv_non_lipschitz}: a disk, a triangle, and an opened triangle with equal area. We compare two estimators constructed from the same endpoint samples: the convex hull estimator and a Christoffel-type estimator.

The adversarial intensity is controlled by the number of adversarial updates $n_{\rm adv}\in\{0,1,2,3,4\}$, where $n_{\rm adv}=0$ corresponds to purely uniform sampling. Larger $n_{\rm adv}$ allocates a larger fraction of the fixed sample budget to points obtained after adversarial updates, and therefore places more emphasis on expanding or boundary-like directions in the endpoint cloud.

Figures~\ref{fig:hausdorff_vs_time_uniform_adversarial_ci_convexhull} and~\ref{fig:hausdorff_vs_time_uniform_adversarial_ci_chris} show the Hausdorff
error over time for sample budgets $N=10,100,1000$. The results show that the effect of adversarial sampling is both budget-dependent and estimator-dependent. For very small sample budgets, increasing $n_{\rm adv}$ does not consistently improve performance, since overly concentrated adversarial samples may reduce global coverage. For larger budgets, adversarial updates become more beneficial:
the endpoint cloud already has enough global coverage, and additional adversarial samples help resolve boundary and extreme regions. This trend is visible for both convex-hull and Christoffel estimators, although the best choice of $n_{\rm adv}$ varies across estimators, sample budgets, and initial geometries. Overall, these results suggest a coverage--boundary tradeoff: stronger adversarial sampling can improve finite-sample performance, but only when the sample budget is large enough to avoid sacrificing global coverage.

\begin{figure}[htbp]
    \begin{subfigure}{1.05\linewidth}
        \hspace{-2em}
        \includegraphics[width=\linewidth]{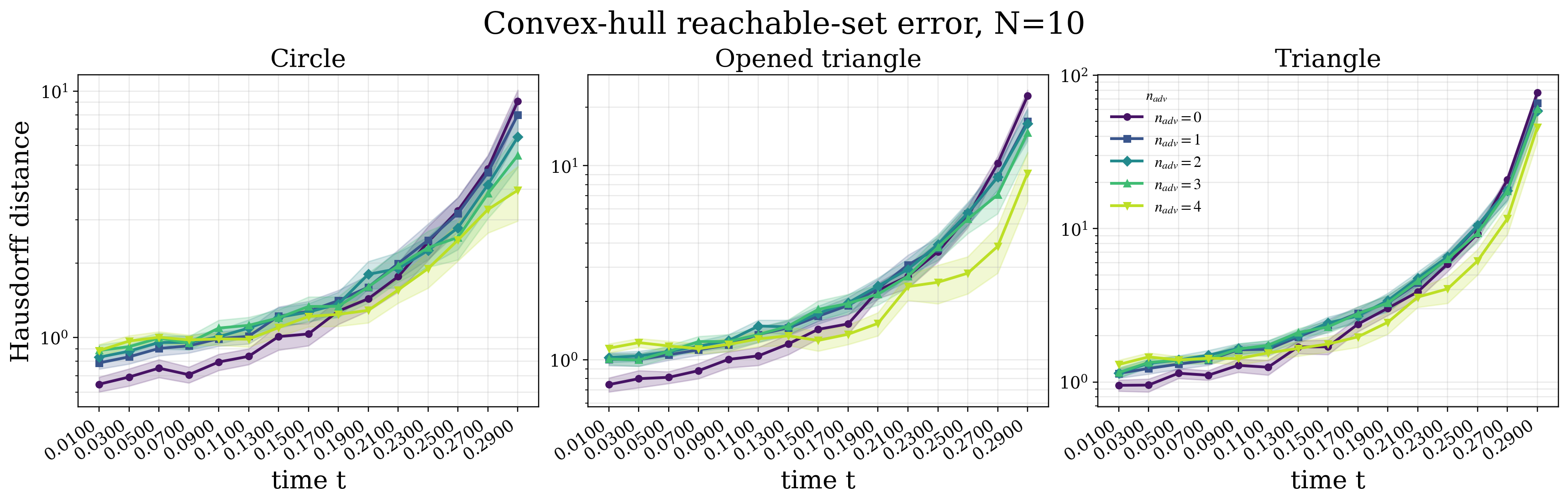}
        \label{fig:hausdorff_vs_time_convex_N10}
    \end{subfigure}

    \begin{subfigure}{1.05\linewidth}
        \hspace{-2em}
        \includegraphics[width=\linewidth]{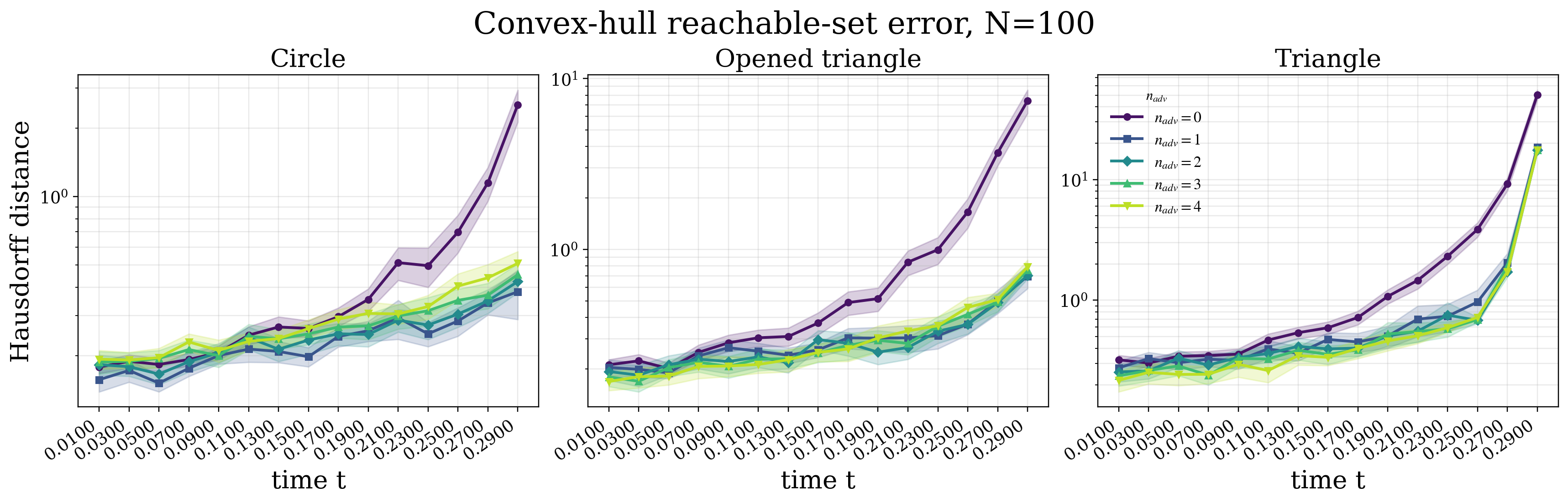}
        \label{fig:hausdorff_vs_time_convex_N100}
    \end{subfigure}

    \begin{subfigure}{1.05\linewidth}
        \hspace{-2em}
        \includegraphics[width=\linewidth]{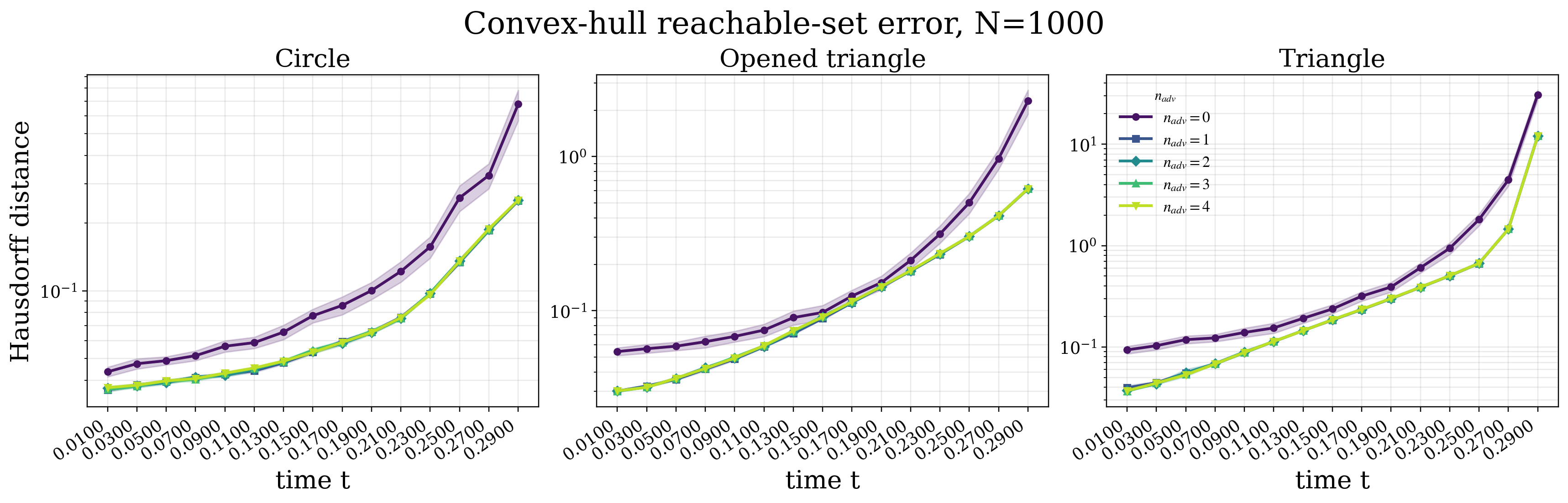}
        \label{fig:hausdorff_vs_time_convex_N1000}
    \end{subfigure}

    \caption{Hausdorff error versus time for Convexhull reachable-set approximation of the autonomous system $\dot y=0,\dot x=x^2$ under different uniform--adversarial mixture sampling ratios. The three subfigures correspond to sample budgets $N=10,100,1000$, and each subfigure compares the disk, triangle, and opened-triangle initial sets.}
    \label{fig:hausdorff_vs_time_uniform_adversarial_ci_convexhull}
\end{figure}

\begin{figure}[htbp]
    \begin{subfigure}{1.05\linewidth}
        \hspace{-2em}
        \includegraphics[width=\linewidth]{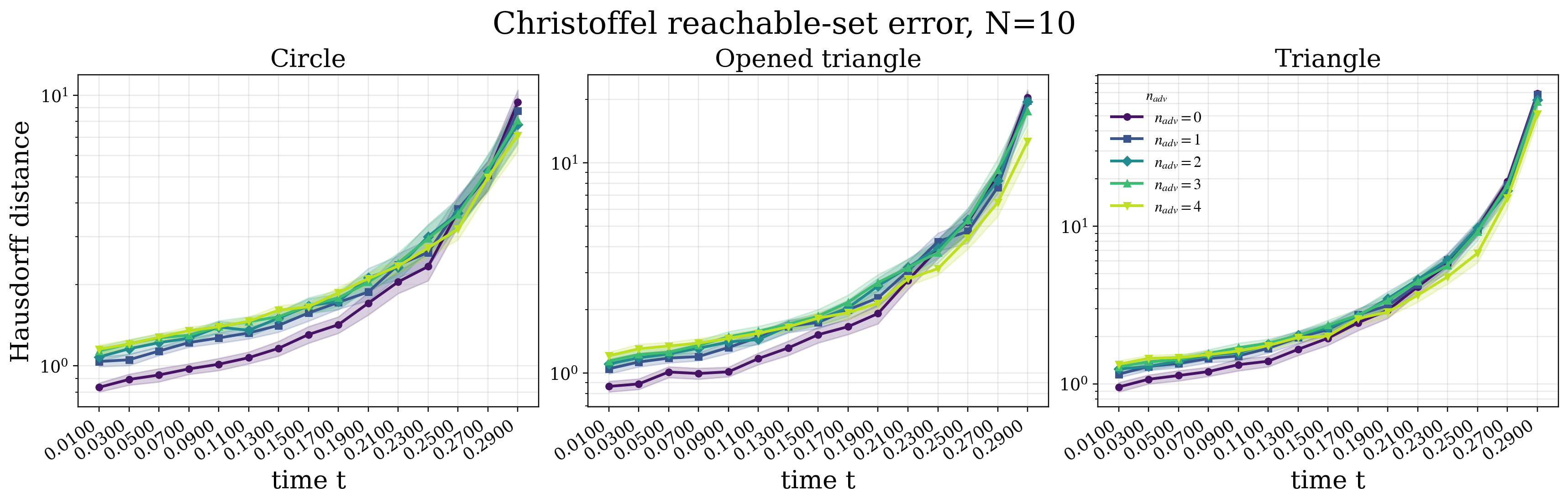}
        \label{fig:hausdorff_vs_time_chris_N10}
    \end{subfigure}

    \begin{subfigure}{1.05\linewidth}
        \hspace{-2em}
        \includegraphics[width=\linewidth]{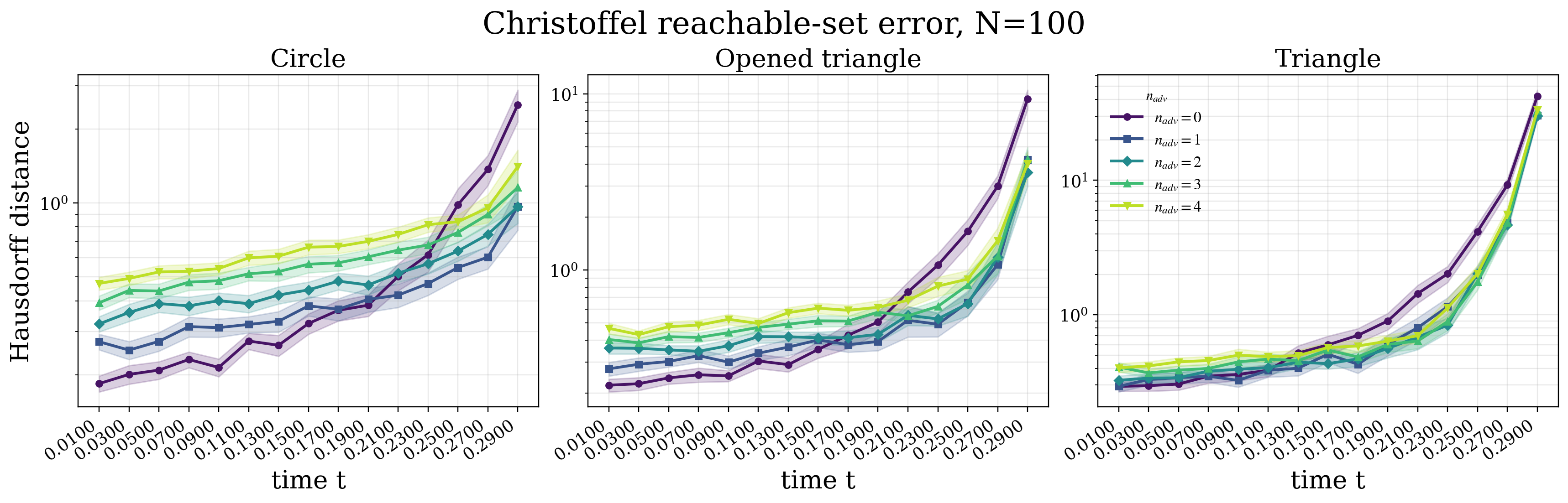}
        \label{fig:hausdorff_vs_time_chris_N100}
    \end{subfigure}

    \begin{subfigure}{1.05\linewidth}
        \hspace{-2em}
        \includegraphics[width=\linewidth]{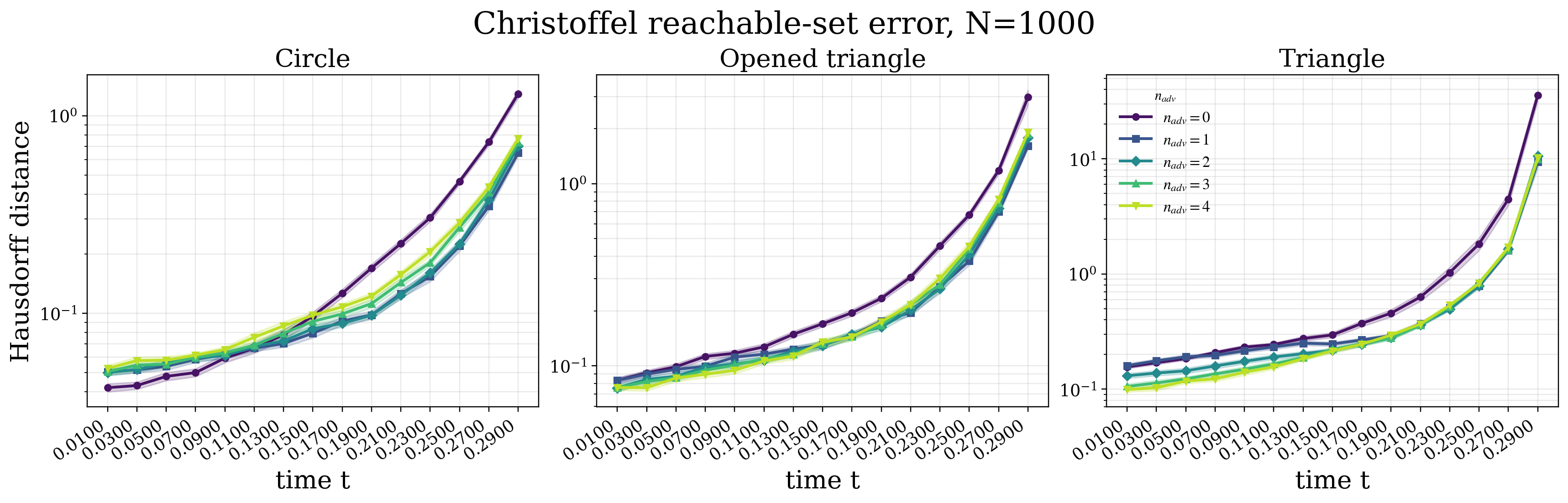}
        \label{fig:hausdorff_vs_time_chris_N1000}
    \end{subfigure}

    \caption{Hausdorff error versus time for Christoffel reachable-set approximation of the autonomous system $\dot y=0,\dot x=x^2$ under different uniform--adversarial mixture sampling ratios. The three subfigures correspond to sample budgets $N=10,100,1000$, and each subfigure compares the disk, triangle, and opened-triangle initial sets.}
    \label{fig:hausdorff_vs_time_uniform_adversarial_ci_chris}
\end{figure}

\end{document}